\def\eqref#1{equation~\ref{#1}}
\def\1{\bm{1}}
\def\ra{{\textnormal{a}}}
\def\rv{{\textnormal{v}}}
\def\rvx{{\mathbf{x}}}
\def\ervc{{\textnormal{c}}}
\def\ervx{{\textnormal{x}}}
\def\vmu{{\bm{\mu}}}
\DeclareMathAlphabet{\mathsfit}{\encodingdefault}{\sfdefault}{m}{sl}
\SetMathAlphabet{\mathsfit}{bold}{\encodingdefault}{\sfdefault}{bx}{n}
\def\gA{{\mathcal{A}}}
\def\gE{{\mathcal{E}}}
\def\gN{{\mathcal{N}}}
\def\gS{{\mathcal{S}}}
\def\sH{{\mathbb{H}}}
\def\sP{{\mathbb{P}}}
\def\sR{{\mathbb{R}}}
\newcommand{\E}{\mathbb{E}}
\DeclareMathOperator*{\argmax}{arg\,max}
\DeclareMathOperator*{\argmin}{arg\,min}
\newtheorem{theorem}{Theorem}
\newtheorem{lemma}{Lemma}
\newtheorem{proof}{Proof}
\newtheorem{definition}{Definition}
\newtheorem{proposition}{Proposition}
\newcommand{\prob}{\mathbb{P}}
\newcommand{\vnu}{\bm{\nu}}
\newcommand{\vsigma}{\bm{\sigma}}
\title{Multi-Metric Adaptive Experimental Design Under a Fixed Budget with Validation}
\date{}
\author{
  Qining Zhang\thanks{The research was mainly conducted when Qining Zhang was a research scientist intern at Amazon.}\\
  University of Michigan\\
  \texttt{qiningz@umich.edu} \\
  \and
  Tanner Fiez \\
  Amazon \\
  \texttt{tannfiez@amazon.com} \\
  \and
  Yi Liu \\
  Amazon \\
  \texttt{yiam@amazon.com} \\
  \and
  Wenyang Liu \\
  Amazon \\
  \texttt{lwenyang@amazon.com} \\
}
\begin{document}
\maketitle

\begin{abstract}
  A/B tests in online experiments face statistical power challenges when testing multiple candidates simultaneously, while adaptive experimental designs (\texttt{AED}) alone fall short in inferring experiment statistics such as the average treatment effect, especially with many metrics (e.g., revenue, safety) and heterogeneous variances. This paper proposes a fixed-budget multi-metric \texttt{AED} framework with a two-phase structure: an adaptive exploration phase to identify the best treatment, and a validation phase with an A/B test to verify the treatment's quality and infer statistics. We propose \texttt{SHRVar}, which generalizes sequential halving (\texttt{SH}) with a novel relative-variance-based sampling and an elimination strategy built on reward $z$ values. It achieves a provable error probability that decreases exponentially, where the exponent $H_3$ generalizes the complexity measure for \texttt{SH} and \texttt{SHVar} with homogeneous and heterogeneous variances, respectively. Numerical experiments demonstrate its performance and robustness.
\end{abstract}

\section{Introduction}
Randomized online experimental design, which aims to evaluate and compare the performance of different system versions, is a standard procedure to support statistically valid decisions in industrial developments, such as new webpage layouts, new services, and new software features. A/B test (or A/B/N test), where the designer assigns an experiment subject to either the control (current version) or one of the multiple treatments (new versions) with a fixed probability and then measures their response, has demonstrated empirical success~\citep{kohavi2020abtest} due to its easy-to-implement nature and the accuracy in inferring experimental statistics such as the average treatment effect (\texttt{ATE})~\citep{Imbens04ATE}. However, A/B tests demonstrate poor statistical power when the number of treatments scales, which becomes insufficient in modern experiments where hundreds of new treatments may be developed simultaneously from machine learning methods such as generative AIs. On the other hand, adaptive experimental design (\texttt{AED}) methods, such as best arm identification (\texttt{BAI}), where the probability of experiment subject assignment can be adaptively chosen, have received increasing popularity as an alternative to reduce the cost of experimentation. However, the adaptivity also prohibits designers from the merits of classic A/B tests, such as the accurate inference of \texttt{ATE}~\citep{cook2023ATE,deep2023ATE}.

\textbf{Experimentation Framework.} To combine the advantages of both \texttt{AED} and A/B tests, specifically fast and efficient treatment selection in experimentation with accurate statistical inference, we propose and study an experimentation framework shown in Fig.~\ref{fig:framework}. This framework consists of two phases, an exploration phase and a validation phase. In exploration, an \texttt{AED} method, such as a multi-armed bandit (\texttt{MAB}) algorithm, is employed to identify the ``best'' treatment, for example, a new layout of the shopping website preferred by customers. Then, in validation, the recommended treatment (new layout) and the control (old layout) are placed in an A/B test to verify their performance in multiple metrics and obtain experiment statistics such as the \texttt{ATE}. When the superior-than-control performance of the recommended treatment is validated under all metrics, the new layout will be deployed into production to replace the old website layout, and the cycle continues. Therefore, the exploration phase aims to identify a treatment most likely to pass the validation.

\textbf{Challenges.} Both \texttt{AED}~\citep{jamieson14bai,kaufmann16bai} and A/B tests~\citep{siroker2015ABtest,kaufmann2014ABtest} have been extensively studied, but the new two-phase experimentation framework poses new challenges. First, most real-world industrial experiments consist of multiple metrics such as revenue, safety, and customer engagement. Second, the control plays a different role from all other treatments in exploration, and thus, its experimental subject assignment should also be different. Third, the validation and exploration phases are correlated, and we aim to maximize the probability of successful validation. Both the \texttt{AED} and A/B test literature fail to address these challenges simultaneously. 

\textbf{Related Work.} Our paper is closely related to best arm identification under a fixed budget. Classic \texttt{AED} algorithms in this setting~\citep{karnin13bai, bubeck11bai,jamieson14bai,garivier16bai}, including \texttt{SH} and \texttt{SR}, often focus on a single metric as a reward. While follow-up works such as the feasible arm identification~\citep{katz-samuels19feasible,katz-samuels18feasiblearm,katz-samuels19feasiblebest} or Pareto set identification~\citep{auer16pareto} extend \texttt{AED} to multiple metrics, these works still do not distinguish between control and treatments, where both are modeled as a mere ``arm''. Some work \citep{russac21control} considers an explicit control arm with the multi-population problem, but as with most other existing works, the exploration and validation phases are studied separately. The intertwined influence across phases has not been well understood despite its practical importance. Our framework follows the general variance equalization principle commonly used both in the fixed-budget setting~\citep{lalitha23variance} and the fixed-confidence setting~\citep{weltz23variance}, and generalizes \texttt{SH} into multi-metric problems.

\textbf{Contributions.}
To address these limitations, we formulate a multi-metric \texttt{MAB} problem with both exploration and validation as the framework shown in Fig.~\ref{fig:framework}, where the goal is to design a fixed-budget \texttt{BAI} algorithm to identify the treatment with the best chance of passing the A/B test validation of all metrics:
\begin{itemize}
    \item We formulate the multi-metric \texttt{AED} problem with exploration and validation, and study both Bayesian and non-Bayesian validation to characterize the best treatment with $z$ values. 
    \item We propose a sequential halving algorithm called \texttt{SHRVar} with a novel relative-variance-based sampling and estimates of reward $z$ values for treatment elimination. \texttt{SHRVar} enjoys strong theoretical guarantees, where the error probability decreases exponentially as the exploration budget increases. The exponent depends on $H_3$ complexity, which generalizes results of \texttt{SH}~\citep{karnin13bai} for classic \texttt{MAB} and \texttt{SHVar}~\citep{lalitha23variance} for \texttt{MAB} with heterogeneous variance. 
    \item We conduct numerical experiments to demonstrate the superior performance of \texttt{SHRVar}.
\end{itemize}

\section{Related Works}
Adaptive experimental design, as a form of online sequential decision-making problems, has been studied in various contexts, e.g., regret minimization~\citep{auer02ucb,auer10ucb} and pure exploration~\citep{audibert10bai,garivier16bai} for multi-armed bandits, Bayesian optimization, and active learning. Our work is similar to the best arm identification formulation, where a decision on recommendation is made following an exploration phase. This section reviews existing papers related to our problem formulation.

\subsection{Pure Exploration Objectives}
Most pure exploration problems strive to model a typical product selection pipeline where an exploration phase is conducted to experiment with the quality of multiple objects (arms or actions). After the exploration, one arm (or multiple products) with the desired quality, measured by the reward, is chosen to launch and handed over to future pipelines. For different practical scenarios, the notion of the desired arm(s) varies in the literature, and we review these notions:

\textbf{Best Arm Identification.} The classic best arm identification problem~\citep{audibert10bai,garivier16bai} considers only the best arm as the desired arm for recommendation, i.e., the arm with the highest ground-truth reward among all candidates. The best arm identification problem is majorly divided into two strings according to the optimization criteria: the fixed confidence setting~\citep{jamieson14bai, garivier16bai,zhang2023BAI} where the goal is to minimize the sample complexity (number of rounds) before identifying the best arm with confidence higher than a pre-specified threshold $1-\delta$, and the fixed budget setting~\citep{bubeck11bai, audibert10bai,russo16bai,karnin13bai,abbasi-yadkori18bai} where the goal is to minimize the probability of identification error with a fixed number of interactions. Even though the fixed-confidence problem is better studied, algorithms developed under this setting, such as TAS~\citep{garivier16bai}, are rarely implemented in practice. On the other hand, successive halving~\citep{karnin13bai} and the top two algorithms~\citep{russo16bai, jourdan22toptwo}, which arose from the fixed-budget setting, are more popular in practice.

\textbf{Top Arms Identification.} In some scenarios, the production pipeline requires more than one arm to be recommended, which motivated the top $K$ arms identification problem~\citep{Kalyanakrishnan10topk, Kalyanakrishnan12topk, kaufmann16bai, chen17topk,zhou22topk} where the goal is to identify the set of all $K$ arms with reward ranking from one to $K$. Even though the top $K$ arms identification problem generalizes best arm identification with broadened practical insights, the algorithms developed under this setting usually mimic best arm identification algorithms such as TAS or successive halving.

\textbf{$\boldsymbol{\varepsilon}$-Good Arm Identification.} In the case where the reward of the best arm is very close to the second-best arm, it usually takes a tremendous number of samples to identify the exact best arm, while the quality improvement is small. Therefore, an intuitive bypass is to settle with a good enough arm, i.e., an arm that has a reward close to the best arm, which motivates the $\boldsymbol{\varepsilon}$-Good Arm Identification problem~\citep{mason20epsilongood, katz-samuels20goodarm, zhao23epsilongood}. In this problem, the objective is to settle with an arm (or all arms) whose reward is within the $\varepsilon$ gap to the best arm. A slightly different problem of similar insight is to find an arm (or all arms) with a reward larger than some fraction of the best arm, which is studied under the level set estimation literature with implicit threshold level~\citep{gotovos13levelset}. Both formulations require the knowledge of $\varepsilon$ for algorithm design and typically stop exploration as soon as the ambiguity between arms is less than this threshold.

\textbf{Good Arm Identification.} The good arm identification problem is an even weaker notion than $\boldsymbol{\varepsilon}$-good arm identification, where here the objective is simply to identify one or all arms that have a reward larger than some fixed threshold~\citep{kano19good,degenne19goodarm}, and output them as quickly as possible. In this problem, the exploration and exploitation trade-off lies within the arms that are close to the threshold, i.e., whether to pull an arm that is close to the threshold to reduce confusion or to pull an empirical good arm to increase confidence. Different algorithms have been proposed to solve this problem with anytime characterization~\citep{jourdan23goodarm}, data-driven threshold~\citep{tsai23goodarm}, and small gaps settings~\citep{tsai24goodarm}. \citep{kaufmann18goodarmexist} studied an easier problem to identify the existence of a good arm with a reward larger than some threshold. The good arm identification problem is internally closely related to two similar problems, i.e., thresholding bandits~\citep{locatelli16thresholding, mukherjee17thresholding,ouhamma21thresholding} and level set estimation with explicit level threshold~\citep{bryan05levelset,gotovos13levelset}. In thresholding bandits, the goal is to classify the arms into two sets according to the ground truth reward, one with a reward higher than the threshold and one with a reward lower than the threshold. The simple regret performance of the thresholding bandit is also studied in~\citep{tao19thresholding}. The level set estimation problem also intends to classify the input space (arm space) into two sets, one with a function value (reward) larger than the threshold and the other with a function value lower than the threshold. However, level set estimation problems usually deal with continuous input spaces or discredited versions of them~\citep{shekhar19levelset,ngo24levelset}. Moreover, the function value is usually assumed to follow a Gaussian process prior~\citep{gotovos13levelset}, and the algorithm to solve it will make decisions based on the acquisition function~\citep{zanette19levelset,senadeera20levelset,bogunovic16levelsetbo}, which requires the knowledge of this Gaussian process before computing. The results are also generalized with heterogeneous variance scenarios~\citep{inatsu19levelset,iwazaki20levelset}. Attempts to reduce the Gaussian process prior to approximations have also been made. In particular, \citep{mason22levelset} proposed an RKHS approximation, and \citep{ha21levelset} used neural networks as an approximation. The theoretical limits of level set estimation are characterized in~\citep{bachoc21levelset}. Both problems are deeply connected with the broad active learning literature and have applications in classification tasks~\citep{jain19activelearning}.

\textbf{Better Arm Than Control.} To the best of our knowledge, the only work in the literature that studies recommending an arm with a higher reward than the control arm is~\citep{russac21control}, where the goal is to identify an arm better than the known control arm with reward performance averaged over multiple populations. The sampling strategy distinguishing control and treatment arms is also studied. However, their developed algorithm is inherent to TAS for best arm identification, which requires large computational resources. Moreover, their recommended arm is agnostic to the variance and thus may result in poor validation power. Thirdly, their approach towards multiple metrics is a simple weighted sum, while we aim for the robust worst-case metric.

\subsection{Bandits with Multiple Metrics}
Our work is also closely related to bandits with multiple objectives since we consider multiple reward metrics. In this model, the arm is associated with a joint high-dimensional distribution that generates a vector of rewards upon arm pulls. We also review the literature on multi-objective bandits.

\textbf{Pareto Front Identification.} The notion of the best arm is vague in the context of multi-dimensional reward, and therefore a large body of works study the Pareto optimality framework where the goal is to identify all arms that lie on the Pareto front in terms of the groud-truth reward~\citep{drugan13pareto, auer16pareto, yahyaa14pareto, yahyaa14paretoknowledge,crepon24pareto}. \citep{moffaert14pareto} studies the Pareto front identification problem in continuous action settings, and \citep{drugan14paretoscalar} uses a scalarization approach to identify the Pareto front, which is further generalized to infinite horizon bandits~\citep{drugan15pareto}. Our goal is not to identify the Pareto front. Instead, we identify the one better-than-control arm that has the most testing power against it.

\textbf{Scalarization:} Another approach to defining the best arm with multi-dimensional reward is scalarization with a pre-specified utility function. \citep{roijers17scalerization} defines the best arm as a weighted linear combination of different dimensions of the ground-truth reward vector, and then performs best arm identification under the linear transformation. However, the linear weight vector is usually hard to pre-define in practice, and thus \citep{jeunen24scalarization} proposes a data-driven method to learn the weight vector. Beyond linear scalarization, \citep{ararat23scalarization} proposes to scalarize based on a preference over the dimension, which reduces the problem to a hierarchy of best arm identification problems. A completely data-driven scalarization method is also used in~\citep{zintgraf18scalarization} where the utility function is learned from preference data. Our approach falls into the general category of scalarization to treat multiple metrics. However, different from the above works, the scalarization criterion depends on the ground-truth reward and thus is unknown to us, and we have no access to auxiliary data as in~\citep{jeunen24scalarization} for utility function learning. Another similar line of work is feasible arm identification~\citep{katz-samuels18feasiblearm}, a generalization of good arm identification to multi-dimensional rewards. Here, an arm is considered feasible if the ground-truth reward lies in a known polyhedron, so the scalarization from high dimension to feasibility also depends on the reward of each arm. Follow-up work~\citep{katz-samuels20goodarm} generalizes the problem into the large arm regime, with application to recommendation systems. Our setting is also different from theirs in several ways. First, they don't apply the preference to feasible arms and intend to identify all of them, while we prefer to identify the one that has the most testing power against the constraint set. Second, their constraint set is known, while ours is determined by a control arm with an unknown reward, which is more challenging since the sampling strategy of the control arm should also be considered.

\textbf{Constrained Optimization:} The most widely studied approach to multi-dimensional reward is to view one reward metric as the primary reward metric for optimization while viewing the other reward metrics as constraints that should exhibit quality higher (or lower) than some threshold. Constrained regret minimization has been well-studied in the bandit literature~\citep{pacchiano21constraint, liu21constraintregret}, but on the other hand, constrained best arm identification has received less attention. The earliest best arm identification problem considering constraints during the exploration phase is bandits with knapsacks~\citep{badanidiyuru18knapsacks,ding13knapsacks}, where the exploration is subject to a hard resource budget constraint and therefore requires careful planning. \citep{kanarios24cost} studied a slightly easier problem, relaxing the hard constraint, which intends to find the best arm while minimizing the constraint violation during exploration. 

\textbf{Constrained Fixed-Confidence Bandits:} In the fixed confidence setting, \citep{chen16constraint} studied the best arm identification problem with matroid constraints, and \citep{katz-samuels19feasiblebest} studied the polyhedron constraint. \citep{wang22constraint} studies the constrained best arm identification problem, where, in addition to each arm, a dose-level modeling clinical trial is also required for selection during exploration and is related to the safety constraint. The goal is to find the best safe arm if given the proper dose level. Moreover, the safety constraint is imposed both during exploration and the final arm it identifies. \citep{camilleri22constraintlinear,shang23constraintlinear} studied a generalized linear bandit best arm identification problem with safety constraints on the exploration phase. \citep{carlsson24constraint} studied the problem of identifying the best mixture of arms in the presence of constraints, instead of a single arm, since an arm mixture could potentially have a better performance quality. Best arm identification problems with more practical constraints, such as variance constraints~\citep{hou23constraint}, risk constraints~\citep{david18constraint}, fairness constraints~\citep{wu23constraint}, and resource constraints~\citep{li24constraint}, are also studied. Another line of work that presents a similar objective is safe Bayesian optimization~\citep{sui15safe,sui18safe}, where the goal is to maximize the function value $f(x)$ satisfying the constraint $g(x)<h$ in the fixed-confidence setting with the presence of Gaussian noise. Similar to level set estimation, most works in this field assume the functions are generated from a known Gaussian process, which provides prior information for the arm identification and enables the construction of an acquisition function~\citep{berkenkamp16safebo,berkenkamp23safebo}. Future studies generalize this field of work to improve the exploration efficiency~\citep{turchetta19safebo}, and to the continuous action~\citep{bottero22safebo} and high-dimensional input settings~\citep{kirschner19safebo,duivenvoorden17safebo}. The performance of safe Bayesian optimization algorithms is also validated in empirical studies~\citep{gelbart14safebo,hernandez-lobato16safebo,gardner14safebo}. \citep{bogunovic16levelsetbo} studied the similarity between Bayesian optimization and level set estimation, and proposed a unified framework. 

\textbf{Constrained Fixed-Budget Bandits.} The fixed-budget setting is less studied compared to the fixed-confidence counterpart. The earliest paper that studies constrained best arm identification in this setting is~\citep{chang20safebai} using an epsilon-greedy algorithm for exploration, which is later shown to be inefficient. An adaptation of successive rejection is proposed in~\citep{faizal22safebai}, which finds the best single arm under constraint, and \citep{tang24safebai} generalized the algorithm to tackle the identification of support of best arm mixtures. \citep{lindner22safebai} studied a slightly different problem where they assume a known reward but an unknown constraint, and the goal is to learn the constraint while identifying the best safe arm. Our work is different from constrained optimization since there is no primary reward metric.

\subsection{Best Arm Identification Formulations}
Considering the large amount of literature studying online sequential learning with different objectives and formulations, we review the works that present similar goals to our paper.

\textbf{Min-Max Optimization:} The minimax optimization was studied largely in the robust optimization literature~\citep{bertsimas10robustoptim,chen17robustoptim} and the game theory literature~\citep{cai11minimax,nouiehed19minimax}, where a natural adversary agnostic to the agent is assumed to pick the worst environment for the agent to perform optimization. This assumption coincides with the risk-averse nature of many applications. The min-max objective is also studied in many works related to multi-armed bandits with structure. For example, \citep{wang22minimax} assumed the arms belonged to certain groups and intended to maximize the worst arm performance in a group. \citep{garivier16minimax, marchesi19minimax} assumed the reward of each arm would also depend on the action of an adversary and thus formulated a min-max game bandit problem. The objective is to pick the best arm robust to the worst selection of the adversary. Beyond bandit literature, the min-max objective is also commonly studied in multi-class classifications~\citep{nouiehed19minimax} and other machine learning settings.

\textbf{Heterogeneous Variance:} One characterization that makes our problem interesting is the heterogeneous variance nature of arms and reward metrics, so we review the best arm identification with fixed-budget literature for works dealing with heterogeneous variance. \citep{lalitha23variance} generalized the sequential halving algorithm to the heterogeneous variance setting and proposed a variance-aware sampling strategy. If the variance is unknown to the practitioner, they also proposed an unbiased estimation with theoretical performance guarantees. \citep{kveton22bayesianfixbudget} studied the Bayesian bandit setting with heterogeneous variance to understand the relationship between performance and prior. Another line of work that typically studies the influence of variance is multi-fidelity bandits~\citep{poiani22fidelity, poiani24fidelity}, motivated by the simulation experiments with heterogeneous qualities, where the goal is to identify the experimental result while controlling the cost incurred by using high-fidelity simulators. The heterogeneous variance effect is also studied in generalized adaptive experimental design~\citep{weltz23variance,lalitha23variance}, and other active learning scenarios~\citep{antos10variance, chaudhuri17variance}.

\section{Preliminaries}\label{sec:preliminary}

In this section, we formulate the \emph{Multi-Metric Multi-Armed Bandit} (\texttt{M3AB}) model shown in Fig.~\ref{fig:framework}, define the best treatment, and then introduce the objective. Throughout the paper, we use $[\cdot]_+$ to denote $\max\{0, \cdot\}$, use $\varpropto$ to refer to ``proportional to'', and use $\lesssim$ to hide absolute constants in $\leq$. For a random variable $\rvx$, we use $\mathrm{Var}(\rvx)$ to denote its variance.

\textbf{\texttt{M3AB} with Heterogeneous Variance.} We consider a stochastic \texttt{MAB} model with $A+1$ arms denoted as $\gA = \{0, 1, \cdots, A\}$. The arm $0$ is called \emph{control}, and all other arms are called \emph{treatments}, as shown in Fig.~\ref{fig:framework}. The rewards of arms are vectors of dimension $M$. We call each dimension a reward \emph{metric} and use $\vnu_a = \nu_{a,1,}\times\cdots\times\nu_{a,M}$ to denote the joint distribution of all reward metrics for arm $a$, where $\nu_{a,i}$ is a normal distribution for reward metric $i$ and arm $a$. The developed algorithm and theoretical guarantees could be generalized to sub-Gaussian and even heavier-tailed distributions following the development in this paper. For each metric $i$ and each arm $a$, the reward distribution has mean $\mu_{a,i}$ and variance $\sigma_{a,i}^2$, where $\sigma_{a,i}$ can be heterogeneous, both across metrics for a fixed arm $a$ or across different arms. Let $\vmu_a = [\mu_{a,1}, \cdots, \mu_{a,M}]$ and $\vsigma_a^2 = [\sigma_{a,1}^2, \cdots, \sigma_{a,M}^2]$. We assume the expectations $\{\vmu_a\}_{a=1}^A$ are unknown to the agent but $\{\vsigma_a^2\}_{a=1}^A$ are known. The \texttt{M3AB} model is a generalization of classic \texttt{MAB}. If $M=1$, we recover the \texttt{MAB} problem with heterogeneous variance studied in~\citep{lalitha23variance}, and if the variances of different arms are the same, we recover the classic \texttt{MAB} model with homogeneous variance~\citep{karnin13bai}. 

\textbf{Exploration (BAI).} The agent explores the treatments and control in a best-arm-identification framework with a fixed budget $T$. At each time $t\in [T]$, the agent pulls an arm $\ra_t$ and a random reward vector $\rvx_t = [\ervx_{t,1}, \cdots, \ervx_{t,M}]$ is sampled from $\vnu_{\ra_t}$, the joint reward distribution of the pulled arm. After observing the reward vector, the agent moves to the next time step $t+1$, and decides which arm to pull to repeat the process. At time $T$, the agent is expected to recommend (select) a \emph{treatment} $\hat{a}\in \gA \setminus \{0\}$ and proceed to validation. 
Ideally, $\hat{a}$ should be the treatment that has a reward better than the control in all metrics, and at the same time maximizes the probability that a significant positive result occurs if it is placed in an A/B test against control. 

\begin{figure*}[t]
    \centering
    \includegraphics[width=0.98\linewidth]{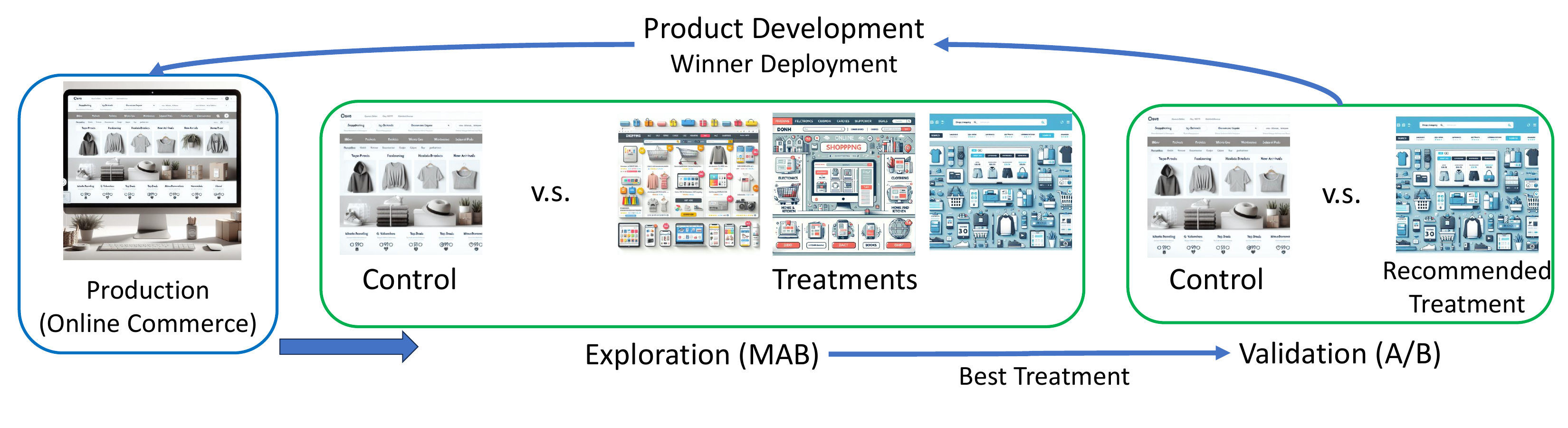}
    \caption{E-Commerce Webpage Development: the framework consists of two phases: exploration and validation. In exploration, a multi-armed bandit algorithm is used to select a treatment. In validation, the selected treatment undergoes an A/B test against the control to validate its quality.}
    \label{fig:framework}
\end{figure*}

\textbf{Validation (A/B).} 
In fixed-budget exploration, the error probability is usually unobserved and can be large. Therefore, a validation A/B test with budget $T_{\rv}$ is conducted to verify the quality of the selected treatment $\hat{a}$ and obtain detailed experiment statistics such as the average treatment effect (ATE).
We consider two types of validation A/B tests, the \emph{non-Bayesian validation} and the \emph{Bayesian validation}, both of which have been widely used in industrial experiments \citep{kirk2009experimental,seltman2012experimental,box2011bayesian}. In both tests, the selected treatment $\hat{a}$ and the control are both pulled $T_{\rv}/2$ times. Then, the empirical means of metric $i$ of both arms are denoted as $\hat{\mu}_{\rv}(\hat{a},i)$ and $\hat{\mu}_{\rv}(0,i)$ respectively. For each metric $i$, we define the null hypothesis and new hypothesis as $\sH_{0,i} = \{\mu_{\hat{a},i} < \mu_{0,i}\}$ and $\sH_{1,i} =\{ \mu_{\hat{a},i} > \mu_{0,i}\}$. If the null hypothesis can be rejected, we say the treatment passes the validation of metric $i$, and denote this event as $\gE_{\rv,i} = \{\text{reject } \sH_{0,i} \text{ in validation}\}$.
For both tests, we measure the probability $\sP(\gE_{\rv,i})$ that the recommended treatment passes validation.

\emph{Non-Bayesain Validation}. The non-Bayesian validation is usually a z-test or a t-test \citep{seltman2012experimental,student1908nonbayesian}. Since the variances $\sigma_{a,i}$ are known, a z-test is conducted between the control arm $0$ and recommended treatment $\hat{a}$ for each metric. Let $\Phi(\cdot)$ be the cumulative distribution function of the standard normal distribution. Given a confidence level $\delta_i$ for each metric $i$, we reject the null hypothesis $\sH_{0,i}$ if:
\begin{align*}
    \hat{\mu}_{\rv}(\hat{a},i) - \hat{\mu}_{\rv}(0,i) \geq \Phi^{-1}(1 - \delta_i) \sqrt{2\left(\sigma_{\hat{a},i}^2 + \sigma_{0,i}^2\right)T_{\rv}^{-1} }.
\end{align*}

\emph{Bayesian Validation} \citep{kamalbasha21bayesian}. We assume the \texttt{ATE} $\mu_{\hat{a}, i} - \mu_{0,i}$ for metric $i$ follows a normal prior distribution $\gN(0, \tau_i^2)$ independent of other metrics, where $\tau_i^2$ is the prior variance. The posterior distribution of \texttt{ATE} for the metric $i$ is normal and calculated as $\nu_{\rv,i} = \gN(\hat{\Delta}_{\rv, i}, \hat{\sigma}^2_{\rv, i})$ \citep{kveton22bayesianfixbudget}, where $\hat{\Delta}_{\rv, i}$ and $\hat{\sigma}^2_{\rv, i}$ are the posterior mean and variances with explicit forms:
\begin{align}
    \hat{\sigma}_{\rv, i}^2 = \left(\frac{ T_{\rm{v}} }{2\left(\sigma_{\hat{a},i}^2 + \sigma_{0,i}^2\right)} + \frac{1}{\tau_i^2} \right)^{-1}, \quad 
    \hat{\Delta}_{\rv, i} = \frac{T_{\rv}\hat{\sigma}_{\rv, i}^2}{2\left( \sigma_{\hat{a},i}^2 + \sigma_{0,i}^2 \right)} \left( \hat{\mu}_{\rv}(\hat{a},i) - \hat{\mu}_{\rv}(0,i)  \right) .
\end{align}
We obtain the posterior $p_i$ of $\{\mu_{\hat{a},i}>\mu_{0,i}\}$ as
$
    p_i = \int_{x = 0}^{\infty} f_{\nu_{\rv,i}}(x) dx = \Phi(\hat{\Delta}_{\rv, i} /\hat{\sigma}^2_{\rv, i}),
$
where $f_{\nu_{\rv,i}}$ is the posterior density. We reject the null hypothesis $\sH_{0,i}$ if $p_i$ is larger than a threshold $q_i$.

\textbf{Robust Treatment Identification.} 
The goal is to find a treatment $\hat{a}$ that is most ``robust'' and ``likely'' to pass the validation for all reward metrics. Specifically, a risk-averse view is considered, and the best treatment $a^*$ is defined as maximizing the minimum probability of rejecting the null hypothesis for all metrics:
\begin{align}
    a^* = \argmax_{a\in \gA / \{0\}} \min_i \sP\left(\gE_{\rv, i}\right). \label{eq:best-prob}
\end{align}
The goal of this paper is to design a BAI algorithm to recommend $a^*$, and the performance is measured by the probability of error, i.e., $\sP(\hat{a} \neq a^*)$.

\subsection{Risk-Averse and Robust Validation}

\begin{table*}[t]
    \centering
    \begin{tabular}{ccccccc}
    \toprule
    \multirow{2}{*}{arms}&  \multicolumn{2}{c}{metric $1$} & \multicolumn{2}{c}{metric $2$} & \multirow{2}{*}{$\sP(\gE_{\rv, 1} \cap \gE_{\rv, 2})$} & \multirow{2}{*}{$\min\{\sP(\gE_{\rv, 1}), \sP(\gE_{\rv, 2})\}$} \\
                    & $(\mu_{a,i}, \sigma_{a,i})$ & $\sP(\gE_{\rv, 1})$ & $(\mu_{a,i}, \sigma_{a,i})$ & $\sP(\gE_{\rv, 2})$ &  &\\
    \midrule
    C         & $(0,10)$   &       & $(0,10)$   &         &       &     \\
    T$1$   & $(0.6,10)$ & 0.44  & $(0.6,10)$ & 0.44    & 0.19  & \textbf{0.44}\\ 
    T$2$   & $(-0.2,30)$ &  0.30 & $(6,10)$   & 0.99    & \textbf{0.30}  & 0.30\\
    \bottomrule
    \end{tabular}
    \caption{\texttt{M3AB} example: $2$ treatments (T1, T2) and $1$ control (C) and a typical Bayesian validation setup in industry with $q_1 = q_2 = 0.67$, prior variance $\tau_1 = \tau_2 = 10$, and $T_{\rv} = 100$. The mean and standard deviations for both metrics are shown, as well as the probability of passing validation for each metric based on the normal posterior. In general, T$1$ is better than control for all metrics, and T$2$ is worse than control in metric $1$. Therefore, T$1$ should be chosen for validation instead of T$2$.}
    \label{tab:example}
\end{table*}

In the real world, a new version should generally not have a poorer quality than its previous version in any metric. Otherwise, it may lead to catastrophic results, e.g., loss of revenue and customer trust. Therefore, the objective of the validation phase shown in Fig.~\ref{fig:framework} is to ensure that the recommended treatment $\hat{a}$ is better in all metrics and avoid any mistakes during exploration. 

Compared to a more natural angle that maximizes $\prob(\cap_i \gE_{\rv, i})$, the probability of rejecting all null hypotheses for all metrics simultaneously in validation, our max-min problem formulation in equation \ref{eq:best-prob} takes a robust perspective that evaluates a treatment not only by its superior quality in some metrics but also the risk of misidentifying its inferiority to control in other metrics. We illustrate with a toy example in Tab.~\ref{tab:example}. This example uses Bayesian validation, and we evaluate the probability of rejecting the null hypothesis for all treatments and metrics based on the expected reward and variance. In this example, only treatment T$1$ has better quality than the control C in both metrics, while T$2$ does not satisfy the aforementioned requirement since it has a worse quality in metric $1$. However, T$2$ has a higher overall validation passing probability $\sP(\gE_{\rv, 1} \cap \gE_{\rv, 2})$ compared to T$1$, i.e., $0.30$ versus $0.19$. This is because the inferiority of T$2$ in metric $1$ is difficult to identify due to the large variance and small effect gap, and the other metric has a large probability of passing $\prob(\gE_{\rv, 2})$ as well, which makes this treatment easy to pass validation in general. Therefore, the exploration phase should avoid recommending such treatments since they violate the development principle and may not be detected through validation. On the other hand, both metrics of T$1$ have a moderate probability of passing so $\sP(\gE_{\rv, 1} \cap \gE_{\rv, 2})$ is smaller compared to T$2$. This shows the joint probability formulation would discourage better-than-control treatments with moderate treatment effects over metrics and mis-detect an inferior treatment if the validations for other metrics are easy to pass. On the other hand, under the max-min problem formulation which aims for the treatment with the largest $\min\{\sP(\gE_{\rv, 1}), \sP(\gE_{\rv, 2})\}$, we will favor T$1$ over T$2$, which aligns with the real-world interests. The equivalence of the two formulations in the large $T_{\rv}$ regime is also discussed in Sec.~\ref{sec:appendix-equiv} of the appendix.

\section{Characterization of the Best Treatment}
Since $a^*$ is defined on the intangible $\sP\left(\gE_{\rv, i}\right)$, we first analyze the validation A/B test, which gives a unified characterization for $a^*$ in both non-Bayesian and Bayesian validations as follows: 
\begin{proposition}\label{prop:val}
    In both non-Bayesian and Bayesian validation tests, the best treatment $a^*$ defined in \eqref{eq:best-prob} is equivalently defined as follows:
    \begin{align}
        a^* = \argmax_{a\in \gA / \{0\}} \min_i \underbrace{\frac{\mu_{a,i} - \mu_{0,i}}{\sqrt{\sigma_{a,i}^2 + \sigma_{0,i}^2}} + \xi_{a,i}}_{z_{a,i}},\label{eq:best-z}
    \end{align}
    where $z_{a,i}$ is the \textbf{z value} and $\xi_{a,i}$ is the \textbf{validation constant} which we obtain from known values as:
    \begin{align*}
        \xi_{a,i} =& \frac{\Phi^{-1}(\delta_i)}{\sqrt{T_{\rv} / 2}} \quad (\text{\rm{non-Bayesian}}), \\
        \xi_{a,i} = & \frac{\Phi^{-1}(1-q_i)}{\sqrt{T_{\rv} / 2}} \sqrt{1 + 2\frac{\sigma_{a,i}^2 + \sigma_{0,i}^2}{\tau_i^2 T_{\rv}}} \quad (\text{\rm{Bayesian}}).
    \end{align*}
\end{proposition}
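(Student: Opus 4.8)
The plan is to prove the two characterizations coincide by showing that, in each validation scheme, the passing probability $\sP(\gE_{\rv,i})$ is one and the same strictly increasing function of the $z$ value $z_{a,i}$ --- namely $\sP(\gE_{\rv,i})=\Phi\big(\sqrt{T_{\rv}/2}\,z_{a,i}\big)$ --- where the transformation depends on neither $a$ nor $i$. Once this identity is in hand, the equivalence is immediate: a strictly increasing map commutes with $\min_i$ and preserves $\argmax_a$, so that $\argmax_a\min_i\sP(\gE_{\rv,i})=\argmax_a\min_i z_{a,i}$.

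First I would reduce both tests to a one-sided threshold event on the empirical mean difference $D_i:=\hat\mu_{\rv}(\hat a,i)-\hat\mu_{\rv}(0,i)$. Since $\hat a$ and the control are each pulled $T_{\rv}/2$ times, $D_i$ has mean $\mu_{\hat a,i}-\mu_{0,i}$ and variance $2(\sigma_{\hat a,i}^2+\sigma_{0,i}^2)/T_{\rv}$; treating $D_i$ as Gaussian (exact under Gaussian rewards, and otherwise justified for large $T_{\rv}$ by the central limit theorem applied to the averaged sub-Gaussian samples) lets me evaluate $\sP(\gE_{\rv,i})$ exactly as a standard-normal tail probability. For the non-Bayesian $z$-test the rejection threshold is already given explicitly, so standardizing $D_i$ and applying the identities $1-\Phi(x)=\Phi(-x)$ and $\Phi^{-1}(1-\delta_i)=-\Phi^{-1}(\delta_i)$ yields $\sP(\gE_{\rv,i})=\Phi\big(\sqrt{T_{\rv}/2}\,z_{a,i}\big)$ with $\xi_{a,i}=\Phi^{-1}(\delta_i)/\sqrt{T_{\rv}/2}$, matching the stated constant.

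The Bayesian case is the main obstacle and demands more care. Here I would first convert the decision rule $p_i>q_i$ into an equivalent one-sided threshold on $D_i$: inverting $\Phi$ in the posterior probability gives a condition of the form $\hat\Delta_{\rv,i}>\hat\sigma_{\rv,i}\,\Phi^{-1}(q_i)$, and since $\hat\Delta_{\rv,i}$ is a fixed positive multiple of $D_i$ through its closed form, this is again a threshold on $D_i$ whose crossing probability is a standard-normal tail. Substituting the explicit posterior forms for $\hat\Delta_{\rv,i}$ and $\hat\sigma_{\rv,i}^2$ and simplifying the ratio of the sampling standard deviation of $D_i$ to the posterior standard deviation produces precisely the factor $\sqrt{1+2(\sigma_{\hat a,i}^2+\sigma_{0,i}^2)/(\tau_i^2 T_{\rv})}$; after using $\Phi^{-1}(1-q_i)=-\Phi^{-1}(q_i)$ this again delivers $\sP(\gE_{\rv,i})=\Phi\big(\sqrt{T_{\rv}/2}\,z_{a,i}\big)$, now with the Bayesian $\xi_{a,i}$. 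The delicate part is the bookkeeping that collapses the posterior-variance algebra into that single square-root factor, where reciprocal and sign errors are easy to make.

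With the identity $\sP(\gE_{\rv,i})=\Phi\big(\sqrt{T_{\rv}/2}\,z_{a,i}\big)$ established in both settings, I would finish by setting $g(x):=\Phi\big(\sqrt{T_{\rv}/2}\,x\big)$ and noting that $g$ is strictly increasing and independent of $a$ and $i$, so that $\min_i\sP(\gE_{\rv,i})=g\big(\min_i z_{a,i}\big)$ and therefore $\argmax_a\min_i\sP(\gE_{\rv,i})=\argmax_a\min_i z_{a,i}$, which is exactly the characterization in \eqref{eq:best-z}.
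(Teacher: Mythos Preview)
Your proposal is correct and follows essentially the same approach as the paper's proof: both compute the passing (or equivalently failure) probability explicitly by treating the empirical mean difference as Gaussian, reduce the Bayesian rule $p_i>q_i$ to a one-sided threshold on that difference via the closed-form posterior, and then invoke the monotonicity of $\Phi$ to translate $\argmax_a\min_i\sP(\gE_{\rv,i})$ into $\argmax_a\min_i z_{a,i}$. Your packaging of the result as the single identity $\sP(\gE_{\rv,i})=\Phi\big(\sqrt{T_{\rv}/2}\,z_{a,i}\big)$ is slightly cleaner than the paper's presentation, which works with $\sP(\gE_{\rv,i}^\complement)$ and carries the monotonicity argument step by step, but the underlying computations and logic are the same.
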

\textbf{Best Treatment $a^*$.} 
The quality of each treatment is characterized by the $z$ value, which consists of two components: the first is the signal-to-noise ratio, where the numerator represents the \texttt{ATE}, and the variance in the denominator represents the hardness of validation. The second term $\xi_{a,i}$ reflects validation requirements such as the required confidence level and the validation time horizon. The smaller $z_{a,i}$ is, the harder it will be for treatment $a$ to pass validation of metric $i$. Therefore, the best treatment $a^*$ optimizes the minimum $z$-value to ensure a high passing probability for all metrics. For simplicity, we define the bottleneck metric $i_a^*$, where the subscript is omitted when no ambiguity.
\begin{definition}[Bottleneck Metric]
    For each treatment $a$, define the bottleneck metric $i^*_a$ as the metric with the smallest $z$ value, i.e., $i^*_a = \argmin_i z_{a,i}$. 
\end{definition}

\textbf{Variance Heterogeneity.} $a^*$ is influenced by the heterogeneity of variances among both metrics and treatments. Let's consider $M=1$ and omit the subscript $i$. Suppose we use non-Bayesian validation with confidence level $\delta$ and horizon $T_{\rv}$, then all treatments have the same validation constants $\xi_{a} = \Phi^{-1}(\delta) / T_{\rv}$. If furthermore, all treatments have the same variance $\sigma_{a}^2$, then the best treatment $a^*$ would be the arm with the largest reward expectation $\mu_{a}$. However, if treatments have different variances $\sigma_{a}^2$, the arm with the largest reward expectation may no longer be $a^*$ as it may have a large variance, resulting in a small $z$ value. In multi-metric models, the heterogeneity of the validation constant $\xi_{a,i}$ will also influence the definition of $a^*$. For example, in non-Bayesian validations, some metrics are primary while others are considered as guardrails, and therefore, smaller confidence levels $\delta_i$ may be required for primary metrics but larger $\delta_i$ for guardrails. Similarly, in Bayesian validations, the heterogeneous prior variance of different metrics may also lead to heterogeneous validation constants $\xi$ that further influence the choice of the best treatment $a^*$.

\section{A Sequential Halving Algorithm for \texttt{M3AB}}\label{sec:shrvar}
\begin{algorithm}[t]
\caption{Sequential Halving with Relative Variance (\texttt{SHRVar})}\label{alg:mosh}
\begin{algorithmic}[1]
\STATE initialize $\gA_1 \leftarrow \mathcal{A}/\{0\}$;
\FOR{$s=1,2,\cdots, \lceil \log_2 A\rceil$}
    \STATE for control, $N_s(0) =\left \lfloor \frac{ \lambda_{\gA_s, \Sigma} }{ \rho_{\gA_s, \Sigma} + \lambda_{\gA_s, \Sigma}} \frac{T}{\lceil \log_2 A\rceil} \right\rfloor$.
    \STATE for active treatments $a\in \gA_s$, 
    $$N_s(a) = \left \lfloor \frac{\max_i\rho_{a,i}^2}{\rho_{\gA_s, \Sigma} \left( \rho_{\gA_s, \Sigma} + \lambda_{\gA_s, \Sigma} \right) } \frac{T}{\lceil \log_2 A\rceil} \right\rfloor;$$
    \FOR{$a\in \gA_s \cup \{0\}$}
        \STATE sample $a$ for $N_s(a)$ times and obtain reward vectors $\rvx_{a, 1}, \rvx_{a,2}, \cdots, \rvx_{a, N_s(a)}$;
    \ENDFOR
    \STATE compute $\hat{\mu}_s(a, i)$ and $\hat{z}_{s}(a, i)$ as equation~\ref{eq:empirical-mean}.
    \STATE let $\gA_{s+1} \subset \gA_s$ be the set of $\lceil|\gA_s|/2\rceil$ arms with larger $\min_i \hat{z}_s(a,i)$;
\ENDFOR
\STATE recommend the arm in $\mathcal{A}_{\lceil \log_2 A \rceil+1}$
\end{algorithmic}
\end{algorithm}
We propose an algorithm called \emph{Sequential Halving with Relative Variance} (\texttt{SHRVar}) to tackle the proposed \texttt{M3AB} problem. The proposed algorithm is based on the sequential halving framework, and details are summarized in algorithm~\ref{alg:mosh}. In vanilla \texttt{SH}~\citep{karnin13bai}, the total budget $T$ is divided into $\log_2 A$ stages, and all active arms are sampled uniformly. Then, \texttt{SH} obtains the empirical reward of each arm and eliminates half of them with a lower empirical reward. Similarly, in \texttt{SHRVar}, we also divide the time horizon into $\log_2 A$ stages and sample each arm according to a designed fraction (Lines 3-4). Here, we define the \emph{relative variance} $\rho_{a,i}^2$ and $\lambda_{a,i}^2$ which shows the proportion of reward randomness of treatment $a$ compared to control as:
\begin{align}\label{eq:relative-var-def}
    \rho_{a,i}^2 = \frac{\sigma_{a,i}^2}{\sigma_{a,i}^2 + \sigma_{0,i}^2}, \quad \lambda_{a,i}^2 = \frac{\sigma_{0,i}^2}{\sigma_{a,i}^2 + \sigma_{0,i}^2}.
\end{align}
Then, let $\rho^2_{\gA_s, \Sigma} = \sum_{a\in \gA_s}\max_{i}\rho_{a,i}^2$ and $\lambda^2_{\gA_s, \Sigma} = \max_{a \in \gA_s} \max_{i} \lambda^2_{a,i}$ be the overall treatment relative variance in set $\gA_s$ and the overall control relative variance.
Both can be defined for any set $\gS$ of treatments by replacing the subscript. The intuition of the proposed relative-variance-based sampling rule will be discussed shortly. After observing the samples, \texttt{SHRVar} computes the empirical reward $\hat{\mu}_s(a,i)$ and the empirical $z$ values with a plug-in estimator for each treatment and metric of the current stage, i.e., $\forall a\in \gA_s, i \in [M]$,
\begin{subequations}\label{eq:empirical-mean}
\begin{equation}
    \hat{\mu}_s(a,i) = \frac{1}{N_s(a)} \sum_{k=1}^{N_s(a)} \rvx_{a,i, k}, 
\end{equation}
\begin{equation}
    \hat{z}_{s}(a, i) =  \frac{\hat{\mu}_s(a, i) - \hat{\mu}_s(0, i) }{\sqrt{\sigma^2_{a, i} + \sigma^2_{0, i} } } + \xi_{a, i}.
\end{equation}
\end{subequations}
where $\rvx_{a,i, k}$ is the random reward of metric $i$ and treatment $a$ for the $k$-th pull.
Take a minimum over metrics to estimate the $z$ value for the bottleneck metric $\min_i z_{a,i}$, and half of the treatments with smaller $\min_i \hat{z}_{a,i}$ will be eliminated (Line 9). 

\textbf{Variance Equalization.} 
The principle of sampling rules in \texttt{BAI} is to equalize the variance of the estimate used in elimination. Other rules, such as Neyman allocation~\citep{neyman1992allocation}, have also been studied, and we discuss and compare them with our approach in Sec.~\ref{sec:appendix-neyman} of the appendix. If two eliminated arms have the same estimate, the probability that either is the best arm would be the same. In classic \texttt{MAB} with a single metric and homogeneous unit variance, the vanilla \texttt{SH} samples each active arm the same number of times $N_s$. Then, the empirical mean estimate $\hat{\mu}_s(a)$, which is used in elimination, will have the same variance for all arms, i.e., $\mathrm{Var}(\hat{\mu}_s(a)) = 1 / N_s$. If heterogeneous variances are considered~\citep{lalitha23variance}, uniform exploration no longer equalizes the variance of the empirical mean. Variance-based sampling is used to equalize the variance of the empirical mean estimator $\mathrm{Var}(\hat{\mu}_s(a))$, which samples each arm $N_s(a)$ times in proportion to their reward variance $\sigma_a^2$, i.e., $N_s(a) \varpropto \sigma_a^2$. Our problem is both multi-metric and uses the $z$ value instead of the reward means for elimination, so we follow the principle to design the relative-variance-based sampling to handle both challenges. 

\textbf{Sampling Rule in Single-Metric Model.} We first consider $M=1$ and omit the index $i$ for metrics. Since the estimate we use for treatment elimination is the $z$ values, we analyze the variance of $\hat{z}_s(a)$:
\begin{align*}
    \mathrm{Var}(\hat{z}_s(a)) = \frac{\rho_a^2}{N_s(a)} + \frac{\lambda_a^2}{N_s(0)} 
    \leq \frac{\rho_a^2}{N_s(a)} + \frac{\max_a\lambda_a^2}{N_s(0)}.
\end{align*}
Then, we perform a min-max optimization over the variance upper bound to equalize the variance as:
\begin{align*}
    \min_{\{N_s(a)\}} \max_{a\in\gA_s} \left(\frac{\rho_a^2}{N_s(a)} + \frac{\max_a\lambda_a^2}{N_s(0)}\right).
\end{align*}
Notice that only active treatments in $\gA_s$ will be pulled. Solving the problem results in an allocation as:
\begin{align*}
    N_s(0) \varpropto& ~  \max_{a \in \gA_s} \lambda_{a} \sqrt{\sum_{a\in \gA_s} \rho_{a}^2};\\
    N_s(a) \varpropto& ~ \rho_{a}^2, ~ \forall a\in \gA_s,
\end{align*}
which depends on relative variance and resembles the rule used in Lines 3-4 in algorithm~\ref{alg:mosh} if $M=1$.

\textbf{Sampling Rule in Multi-Metric Model.} To generalize the single-metric sampling rule to the multi-metric model, we first suppose the ``bottleneck'' metric $i^*_a$ is known a priori, and then we only need to consider the metric $i^*_a$ for each treatment $a$ and equalize its variance $\mathrm{Var}\left(\hat{z}_s(a, i^*_a)\right)$. This suffices to replace $\rho_a^2$ and $\lambda_a^2$ with $\rho_{a, i^*}^2$ and $\lambda_{a, i^*}^2$ in the sampling rule. 
However, $i^*_a$ is in reality not known before the sampling rule is designed, so we pursue a pessimistic surrogate to use the maximum relative variance $\max_{i} \rho_{a,i}$ and $\max_i \lambda_{a,i}$ to replace $\rho_{a, i^*}$ and $\lambda_{a, i^*}$, which results in our proposed sampling rule in algorithm~\ref{alg:mosh} as: 
\begin{align*}
    N_s(0) \varpropto& ~ \rho_{\gA_s, \Sigma} \cdot \lambda_{\gA_s, \Sigma};\\
    N_s(a) \varpropto& ~ \max_i\rho_{a, i}^2, ~ \forall a\in \gA_s.
\end{align*}
This rule equalizes $\max_i \mathrm{Var}\left(\hat{z}_s(a,i)\right)$ for treatments to safeguard against the worst case, where $i^*_a$ is the metric with the largest relative variance. 

\textbf{Discussion on Relative Variance.} The relative variance $\rho_{a,i}$ and $\lambda_{a,i}$ point out whether the difficulty of passing the validation is from the variance of the treatment or the variance of the control. If $\rho_{a,i}^2$ is large, the control will look like a ``deterministic" arm since its variance is much smaller. Moreover, for any treatment $a$, if there exists a metric $i$ with a large $\rho_{a,i}$, it should be allocated more samples so that its estimate is as accurate as other treatments, since passing the validation of metric $i$ requires estimating its mean reward more accurately. To understand terms $\rho_{\gA_s, \Sigma}$ and $\lambda_{\gA_s, \Sigma}$, we first consider the single metric model. Then $\rho^2_{\gA_s, \Sigma}$ becomes the sum of relative variances $\rho^2_a$ over all active treatments in $\gA_s$, representing the total randomness in $z$ value estimates from treatments, and $\lambda^2_{\gA_s, \Sigma}$ will be the maximum of $\lambda^2_{a}$, indicating the variance of $z$ value estimates from the randomness of control. Multi-metric models simply replace the relative variances with the maximum over metrics.

\section{Theoretical Results}\label{sec:theory}

In this section, we present the probability of error guarantee for \texttt{SHRVar}. For simplicity, we assume $A$ is a power of $2$. We first introduce some useful notations. For an arbitrary set $\gS$ of treatments, we define the heterogeneity of relative variance $\kappa_{\gS, a, i}$ for treatment $a$ and metric $i$ as:
\begin{align*}
    \kappa_{\gS, a, i} = \frac{\rho_{a,i}^2}{\max_i\rho_{a,i}^2}\frac{ \rho_{\gS, \Sigma}}{\rho_{\gS, \Sigma} + \lambda_{\gS,\Sigma}} + \frac{\lambda_{a,i}^2}{\max_{a \in \gS, i} \lambda^2_{a,i} }\frac{\lambda_{\gS,\Sigma}}{\rho_{\gS, \Sigma} + \lambda_{\gS,\Sigma}}.
\end{align*}
We define the effective gap $D_{\gS, a}>0$ compared to the best treatment in terms of its $z$ values as:
\begin{align*}
    D_{\gS, a}^2 = \min_{i\in [M]} \max_{j\in [M]} \frac{[z_{a^*,i} - z_{a,j}]_+^2}{(\kappa_{\gS, a,j} + \kappa_{\gS, a^*,i})^2}.
\end{align*}
Finally, for any set of treatments $\gS$, let $\gS'_{\ervc}$ be its subset excluding $1/4$ of the treatments with smallest $D_{\gS, a}$. The $H_3$ complexity is defined as follows:
$$
    H_3 = \left(\min_{\gS: a^*\in \gS} \frac{\min_{a\in\gS'_{\ervc}}D^2_{\gS, a}}{ \left( \rho_{\gS, \Sigma} + \lambda_{\gS,\Sigma} \right)^2}\right)^{-1}.
$$

\begin{theorem}\label{thm:estimate}
    If we use \texttt{SHRVar} in algorithm~\ref{alg:mosh} to identify the best treatment $a^*$ in our exploration phase, the probability of mistake can be upper bounded as:
    \begin{align*}
        \prob\left( \hat{a}\neq a^* \right) \leq 6 M \log_2 A \cdot \exp\left( -\frac{T}{2 H_3 \log_2 A} \right).
    \end{align*}
\end{theorem}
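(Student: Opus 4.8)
The plan is to follow the sequential halving template of~\citep{karnin13bai}, adapted to the min-over-metrics $z$-value statistic used for elimination. First I would decompose the error over the $\log_2 A$ stages: since $a^*$ is returned if and only if it survives every stage, a union bound gives $\prob(\hat a\neq a^*)\le\sum_{s}\prob(a^*\text{ eliminated at stage }s\mid a^*\in\gA_s)$, so it suffices to control a single stage and multiply by $\log_2 A$.

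Fix a stage with active set $\gA_s\ni a^*$ and write $\hat m(a)=\min_i\hat z_s(a,i)$ for the elimination score. The arm $a^*$ is dropped only if at least half of $\gA_s$ has $\hat m(a)\ge\hat m(a^*)$. Discarding the $|\gA_s|/4$ treatments of smallest effective gap (the complement of $\gS'_{\ervc}$ taken with $\gS=\gA_s$), at least $|\gA_s|/4$ of the \emph{large-gap} treatments in $\gS'_{\ervc}$ must then beat $a^*$. A Markov bound on this count yields, per stage, $\prob(a^*\text{ eliminated})\le\frac{4}{|\gA_s|}\sum_{a\in\gS'_{\ervc}}\prob(\hat m(a)\ge\hat m(a^*))$, and bounding the sum by $\frac34|\gA_s|$ times its largest term removes the dependence on $|\gA_s|$.

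The core estimate is the per-treatment beating probability. Linearizing the two minima gives $\{\hat m(a)\ge\hat m(a^*)\}\subseteq\bigcup_i\{\hat z_s(a,j)\ge\hat z_s(a^*,i)\}$ for any choice of metric $j=j(i)$; this union over the $M$ metrics of $a^*$ is the source of the factor $M$. For each pair $(i,j)$ the variable $\hat z_s(a,j)-\hat z_s(a^*,i)$ is sub-Gaussian with mean $z_{a,j}-z_{a^*,i}$, and a direct computation from $\Var(\hat z_s(a,j))=\rho_{a,j}^2/N_s(a)+\lambda_{a,j}^2/N_s(0)$ together with the sampling rule in Lines~3--4 shows its variance proxy equals $\frac{(\rho_{\gA_s,\Sigma}+\lambda_{\gA_s,\Sigma})^2}{T/\log_2 A}\,(\kappa_{\gA_s,a,j}+\kappa_{\gA_s,a^*,i})$, precisely the heterogeneity term in the denominator of $D_{\gA_s,a}$. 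Choosing $j=j(i)$ to maximize the normalized gap, a sub-Gaussian tail bound on each term and the fact that the sum of the $M$ exponentials is dominated by the bottleneck metric $i=\argmin_i$ give $\prob(\hat m(a)\ge\hat m(a^*))\lesssim M\exp(-\,T\,D_{\gA_s,a}^2/(2(\rho_{\gA_s,\Sigma}+\lambda_{\gA_s,\Sigma})^2\log_2 A))$. Replacing the data-dependent stage gap by the worst case over all sets $\gS\ni a^*$, which is exactly $1/H_3$, collapses every stage into the same exponent and, after the stage union bound and the rounding in $N_s(\cdot)$, yields $6M\log_2 A\cdot\exp(-T/(2H_3\log_2 A))$.

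I expect the main obstacle to be the variance bookkeeping in the third step under the randomly evolving active sets. Two points need care. First, the control means $\hat\mu_s(0,i)$ are shared across the two $z$-values when $i=j$, so the naive independent-variance sum must be corrected; here the shared term enters with opposite signs and only \emph{reduces} the variance, so the independent expression remains a valid upper bound. Second, one must reconcile the single power of $(\kappa_{\gA_s,a,j}+\kappa_{\gA_s,a^*,i})$ produced by the variance proxy with the squared sum appearing in $D_{\gA_s,a}^2$; this relies on $\kappa_{\gS,a,i}\le1$ and on the minimizing set in the definition of $H_3$ being attained in the regime where these heterogeneity coefficients are bounded away from $0$. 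Propagating the per-stage exponent to the global $H_3$ uniformly over the unknown realized chain $\gA_1\supset\gA_2\supset\cdots$ is the step demanding the most bookkeeping.
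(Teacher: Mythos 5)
Your proposal is correct and follows the same skeleton as the paper's proof: a union bound over the $\log_2 A$ stages, a Markov/counting argument on the number of treatments in $\gS'_{\ervc}$ whose elimination score beats $a^*$ (after discarding the quarter with smallest effective gap), and a per-treatment concentration bound whose exponent is $D^2_{\gA_s,a}/\bigl(2(\rho_{\gA_s,\Sigma}+\lambda_{\gA_s,\Sigma})^2\bigr)$, finally relaxed to the worst set $\gS\ni a^*$ to produce $H_3$. The one place you genuinely diverge is the key lemma bounding $\prob\bigl(\min_i\hat{z}_s(a,i)>\min_j\hat{z}_s(a^*,j)\bigr)$: you pick a single comparison metric $j(i)$ for each $i$ and bound the two-sample difference $\hat{z}_s(a,j)-\hat{z}_s(a^*,i)$ directly as one sub-Gaussian variable, whereas the paper introduces an intermediate threshold $z\le z_{a^*,j}$, splits into the two one-sided events $\{\hat{z}_s(a^*,j)<z\}$ and $\bigcap_i\{\hat{z}_s(a,i)>z\}$, exploits independence across metrics to turn the intersection into a product of tails, and then optimizes over $z$. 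The paper's route yields a strictly stronger intermediate exponent (a sum over all metrics $i$ with $z_{a,i}<z_{a^*,j}$ in place of the single best $j$), which it then discards when passing to $D_{\gA_s,a}$; your route lands on the final form directly and even saves a factor of $2$ in the pre-exponential constant (giving $3M$ per stage rather than $6M$), so it comfortably proves the stated bound. Two small corrections to your bookkeeping: the variance proxy of the difference is $\frac{(\rho_{\gA_s,\Sigma}+\lambda_{\gA_s,\Sigma})^2\log_2 A}{T}\bigl(\kappa^2_{\gA_s,a,j}+\kappa^2_{\gA_s,a^*,i}\bigr)$, i.e., the \emph{sum of squares} of the heterogeneity coefficients rather than the first power of their sum; and the reconciliation with the squared sum in $D^2_{\gA_s,a}$ needs neither $\kappa\le 1$ nor any assumption that the coefficients are bounded away from zero --- it is just $\kappa_1^2+\kappa_2^2\le(\kappa_1+\kappa_2)^2$ for nonnegative numbers, exactly the relaxation the paper itself performs. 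Your observation that the shared control estimate $\hat{\mu}_s(0,i)$ only reduces the variance of the difference (so the independent-sum proxy remains valid) is correct and is a point the paper sidesteps by never forming the two-sample difference in the first place.
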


\textbf{Proof Roadmap.} The proof of the theorem is in the appendix Sec.~\ref{sec:proof-estimate} and builds on the sequential halving framework where we first fix a stage $s$ and the active treatment set $\gA_s$ which has not been eliminated, and then analyze the probability that at least half of the active treatments in $\gA_s$ will ``outperform'' $a^*$ in the elimination metric $\min_i \hat{z}_s(a,i)$. This requires analyzing the probability that each sub-optimal treatment $a$ outperforms $a^*$. For each treatment $a$, we first characterize the probability that the minimum $z$ value estimate is larger than $a^*$, i.e., $\min_i \hat{z}_s(a,i) > \min_i \hat{z}_s(a^*,i)$, which requires decoupling the dependence of different treatments when estimating the $z$ values since they all involve the empirical reward of control. This argument leads to the instance-dependent exponent $D_{\gA_s, a}$ analogous to the reward gap in classic bandit models. Then, we use Markov's inequality to characterize the probability of eliminating $a^*$ at each stage and use a union bound over the stages to reach the final result. Next, we discuss the instance-dependent parameters.

\textbf{Effective Gap.} 
For any set $\gS$, the effective gap $D_{\gS,a}$ measures the difficulty of eliminating a sub-optimal arm $a$ from it, which is analogous to the expected reward gap in \texttt{MAB}. Since $\kappa_{\gS, a,i} \leq 1$ by definition, we have:
\begin{align*}
    D_{\gS, a} =&\min_{i\in [M]} \max_{j\in [M]} \frac{[z_{a^*,i} - z_{a,j}]_+}{(\kappa_{\gS, a,j} + \kappa_{\gS, a^*,i})}
    \geq   \min_{i\in [M]} \max_{j\in [M]}[z_{a^*,i} - z_{a,j}]_+ \geq z_{a^*,i^*} - z_{a,i^*}.
\end{align*}
Thus, $D_{\gS, a}$ can be viewed as the gap between the $z$ value of the bottleneck metric $i^*_a$ of two treatments. If the $z$ values have a larger gap between $a^*$ and $a$, $D_{\gS, a}$ will be larger, and the quality of the treatment will be easier to distinguish. 

\textbf{Relative Variance Heterogeneity.} The heterogeneity $\kappa_{\gS, a, i}$ measures the error if we use the maximum relative variance $\max_i\rho_{a,i}$ to approximate each $\rho_{a,i}$ as in our sampling rule. If $\kappa_{\gS, a, i}$ is large and close to $1$, the heterogeneity of $\rho_{a,i}$ is small, and approximating each with $\max_i\rho_{a,i}$ would be accurate. 
If $\kappa_{\gS, a, i}$ becomes smaller, the relative variance $\rho_{a,i}$ of this metric would be much smaller than $\max_i\rho_{a,i}$, making this metric easy to estimate. Potentially, the hardness of identification would become smaller, resulting in a larger effective gap $D_{\gS, a}$.

\textbf{Complexity Measure $H_3$.}
The role of the two minimum arguments in the exponent $H_3^{-1}$ is to trade off the difficulty of distinguishing between more arms with a larger gap $D_{\gS, a}$ and fewer arms with a smaller gap $D_{\gS, a}$. If we choose a large treatment set $\gS$, more arms with a smaller gap will be excluded from $\gS_{\ervc}'$, which results in a larger numerator in the definition of $H_3^{-1}$. At the same time, more treatments are included in $\gS$, so the denominator will also be larger since the total relative variance $\rho_{\gS, \Sigma}$ and $\lambda_{\gS, \Sigma}$ increase. Therefore, the exponent is determined by the worst set $\gS$ with a moderate size, small gaps, and large relative variances. It is also possible that with a less refined analysis, we can obtain a looser but more explainable complexity measure $H_3'$ as follows:
\begin{align*}
    H_3 \leq& \left(\min_{\gS: a^*\in \gS} \frac{\min_{a \neq a^*}(z_{a^*, i^*} - z_{a,i^*})^2}{ \left( \rho_{\gS, \Sigma} + \lambda_{\gS,\Sigma} \right)^2}\right)^{-1} 
    \lesssim  \left(\frac{\min_{a}(z_{a^*,i^*} - z_{a,i^*})^2}{ \sum_{a\in\gA} \max_i \rho_{a,i}^2 + \max_{a\in \gA ,i} \lambda^2_{a,i} } \right)^{-1}
    \equiv H_3',
\end{align*}
where $\lesssim$ hides absolute constants. The form of $H_3'$ resembles the complexity measure used in \cite[Theorem 3]{lalitha23variance} for the theoretical analysis of \texttt{SHVar} where the minimum reward gap is replaced by the minimum $z$ value gap of the bottleneck metrics between the best treatment and sub-optimal treatments, and the sum of variances are replaced by the sum of maximum relative variances in the denominator. With our proof methodology, we can obtain a tighter error probability bound for \texttt{SHVar} with a complexity measure similar to $H_3$. We further discuss this complexity measure and achievable lower bounds in the appendix Sec~\ref{sec:appendix:discuss}.

\textbf{Comparation to \texttt{SH} and \texttt{SHVar}.} We simplify our model to a single metric with homogeneous variance among treatments. Then, the sum of relative variances in the denominator will be approximately $|\gS|$, and the effective gap will be the difference of $z$ values, i.e., $D_{\gS,a} = z_{a^*} - z_a$. If we change the index of treatments to order them from large to small $z$-values, the complexity measure $H_3$ becomes:
\begin{align*}
    H_3 = &\left(\min_{\gS: a^*\in \gS} \frac{\min_{a\in\gS'_{\ervc}}(z_{a^*} - z_a)^2}{ |\gS|}\right)^{-1}
    \lesssim  \max_{a \in \gA} \frac{a}{(z_{a^*} - z_a)^2},
\end{align*}
where $\lesssim$ hides absolute constants and it almost recovers $H_2$ in~\cite[Theorem 4.1]{karnin13bai} for \texttt{SH}, except for replacing the reward gap with the $z$ value gap. We can also simplify our model to a single metric with heterogeneous variance and compare with~\citep{lalitha23variance}. The looser exponent $H_3'$ is simplified as:
\begin{align*}
    H_3\lesssim H_3' =  \left(\frac{\min_{a}(z_{a^*} - z_a)^2}{ \sum_{a\in\gA} \rho_a^2 + \max_a \lambda_a^2 }\right)^{-1},
\end{align*}
which almost recovers~\cite[Theorem 3]{lalitha23variance} for \texttt{SHVar} except for replacing the reward gap with the $z$-value gap and variance with relative variance. The two reductions show that \texttt{SHRVar} is a generalization of both \texttt{SH} and \texttt{SHVar} to the multi-metric with validation, and does not suffer performance loss in simpler models, which also shows $H_3$ provides a tighter characterization.

\subsection{Discussions on the Elimination Rule}
\begin{algorithm}[t]
\caption{Sequential Halving with Relative Variance and Confidence-Based Elimination (\texttt{SHRVar-c})}\label{alg:shrvarc}
\begin{algorithmic}[1]
\STATE initialize $\gA_1 \leftarrow \mathcal{A}/\{0\}$;
\FOR{$s=1,2,\cdots, \lceil \log_2 A\rceil$}
    \STATE for control, $N_s(0) =\left \lfloor \frac{ \lambda_{\gA_s, \Sigma} }{ \rho_{\gA_s, \Sigma} + \lambda_{\gA_s, \Sigma}} \frac{T}{\lceil \log_2 A\rceil} \right\rfloor$.
    \STATE for active treatments $a\in \gA_s$, 
    $$N_s(a) = \left \lfloor \frac{\max_i\rho_{a,i}^2}{\rho_{\gA_s, \Sigma} \left( \rho_{\gA_s, \Sigma} + \lambda_{\gA_s, \Sigma} \right) } \frac{T}{\lceil \log_2 A\rceil} \right\rfloor;$$
    \FOR{$a\in \gA_s \cup \{0\}$}
        \STATE sample $a$ for $N_s(a)$ times and obtain reward vectors $\rvx_{a, 1}, \rvx_{a,2}, \cdots, \rvx_{a, N_s(a)}$;
    \ENDFOR
    \STATE compute $\hat{\mu}_s(a, i)$ and $\hat{z}_{s}(a, i)$ as equation~\ref{eq:empirical-mean}.
    \FOR{$a\in \gA_s$}
            \STATE compute $\delta_s(a)$ as follows:
            \begin{align*}
                \delta_s(a) = \inf\left\{\delta\left| \mathrm{UCB}_s^\delta(a) \leq \max_{a'\in \gA_s} \mathrm{LCB}_s^\delta(a') \right.\right\};
            \end{align*}
        \ENDFOR
        \STATE Let $\gA_{s+1}$ be the set of $\lceil|\gA_s|/2\rceil$ arms with smaller $\delta_s(a)$;
\ENDFOR
\STATE recommend the arm in $\mathcal{A}_{\lceil \log_2 A \rceil+1}$.
\end{algorithmic}
\end{algorithm}
As for the elimination rule, we note that the minimum $z$ value estimator over metrics $\min_i \hat{z}_s(a,i)$ used in \texttt{SHRVar} may be subject to underestimation since $\E[\min_i \hat{z}_s(a,i)] \leq \min_i z_{a,i}$, and create bias in treatment elimination. In our design of \texttt{SHRVar}, the variances of different treatments and metrics are approximately equalized through our sampling rule, which mitigates the underestimation effect. However, to study this effect, especially under the circumstances where the sampling strategy is not controlled, we consider a confidence-based elimination rule as follows. Let $b_s^\delta(a, i)$ be the confidence bonus when the out-of-concentration probability is $\delta$ as follows:
\begin{align*}
    b_s^\delta(a, i) = 2\sqrt{\left(\frac{\rho_{a,i}^2}{N_s(a)} + \frac{\lambda_{a,i}^2}{N_s(0)} \right) \log \left( \frac{|\gA_s| M}{\delta} \right)}.
\end{align*}
Let the following be the upper and lower confidence bounds where $\min_i z_{a,i}$ falls into with probability at least $1-\delta$:
\begin{align*}
    \mathrm{LCB}_s^\delta(a) =& \min_i \left\{ \hat{z}_s(a,i) - b_s^\delta(a,i) \right\},\\
    \mathrm{UCB}_s^\delta(a) =& \min_i \left\{ \hat{z}_s(a,i) + b_s^\delta(a,i) \right\}.
\end{align*}
For each treatment $a$, we calculate the confidence level $\delta_s(a) = \delta$ such that $\mathrm{UCB}_s^\delta(a)$ equals the largest lower confidence bound over treatments, i.e., $\max_a \mathrm{LCB}_s^\delta(a)$. Then, the confidence-based elimination rule eliminates half of the arms with smaller $\delta_s(a)$ at each stage. The algorithm, denoted as \texttt{SHRVar-c}, is in algorithm~\ref{alg:shrvarc} and more detailed discussions of confidence-based elimination are discussed in Sec.~\ref{sec:conf} of the appendix. We also provide a theoretical guarantee similar to Theorem~\ref{thm:estimate}:
\begin{theorem}\label{thm:conf}
    If we use \texttt{SHRVar-c} in algorithm~\ref{alg:shrvarc} to identify the best treatment $a^*$ in our exploration phase, the probability of mistake can be upper bounded as:
    \begin{align*}
        \prob\left( \hat{a}\neq a^* \right) \leq 6 M \log_2 A \cdot \exp\left( -\frac{T}{2 \tilde{H}_3 \log_2 A} \right).
    \end{align*}
\end{theorem}
Here, $\tilde{H}_3$ is a complexity notion similar to $H_3$, and its explicit form and discussions are in Appendix~\ref{sec:conf}.

\section{Numerical Results}\label{sec:experiment}

\begin{figure}[t]
    \centering
    \subfigure[Exploration Accuracy]{
        \centering
        \includegraphics[width=0.48\linewidth]{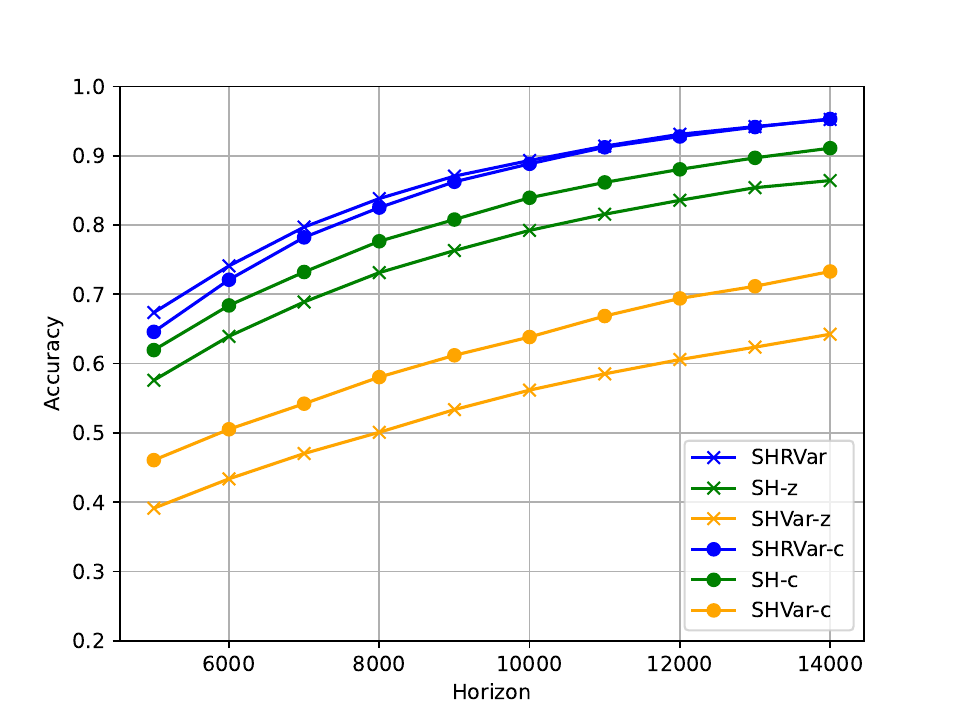}
        \label{fig:err}
    }
     \subfigure[Validation Success Probability]{
        \centering
        \includegraphics[width=0.48\linewidth]{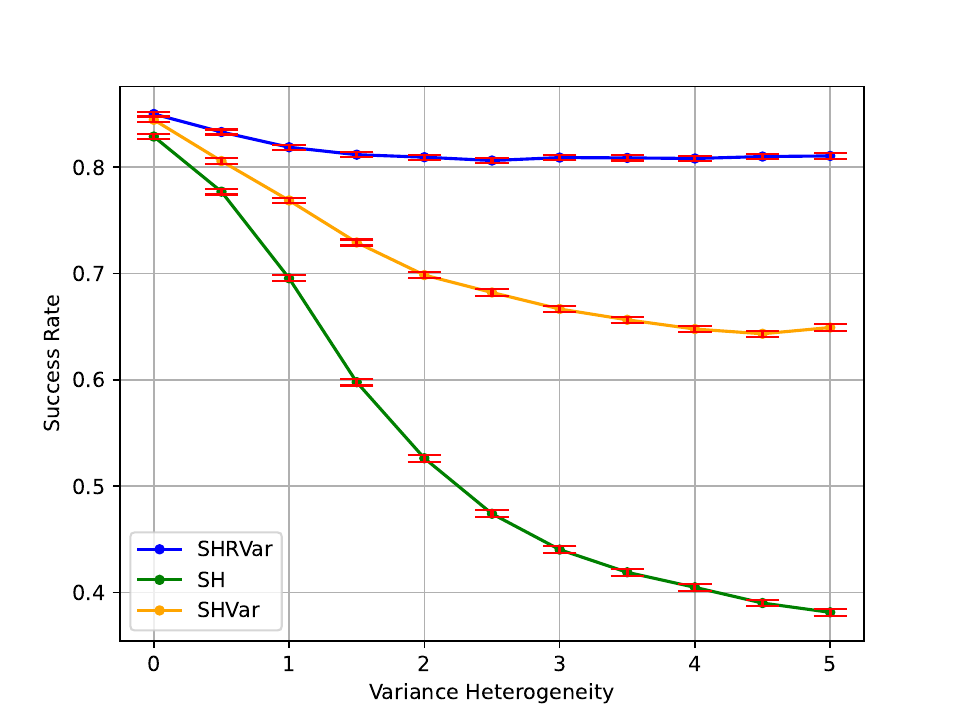}
        \label{fig:val}
    }
    \caption{(a) Exploration Accuracy: rate of identifying the best treatment in exploration as $T$ increases. (b) Validation Success Probability: rate of passing validation, i.e., successfully validating a better-than-control treatment. Error bars represent confidence intervals.}
\end{figure}

We demonstrate the empirical performance of \texttt{SHRVar}. We first study the exploration phase and compare the accuracy of identifying $a^*$. Then, we experiment with both the exploration and validation phases to show that \texttt{SHRVar} significantly increases the probability of validating a better treatment, especially in heterogeneous variance instances. This demonstrates the superiority of $z$ value estimates over expected reward in \texttt{AED} with validation. The results are averaged over $10^5$ repetitions. We report the error probability $\prob(\hat{a} = a^*)$ that the algorithm successfully recommends $a^*$ in Fig.~\ref{fig:err}, and the probability $\prob(\gE_{\rv})$ that the recommended treatment $\hat{a}$ passes validation in Fig.~\ref{fig:val}. A larger-scale experiment is conducted in Sec.~\ref{sec:appendix-experiment} of the appendix, where the findings are exactly the same.

\subsection{Error Probability in Exploration}
We consider a bandit instance with $A = 16$ treatments, and $M=3$ metrics. Suppose we use non-Bayesian validation, and the confidence level $\delta_i$ is the same for each metric. So the validation constant $\xi_{a,i}$ would be the same for all $(a,i)$. The expected reward and variance of control are set to be $[0,0,0]$ and $[1,1,1]$, respectively, for all metrics. For the treatments, the expected reward of the best treatment $a^*$ is randomly chosen as $[2.4697, 1.5556, 1.1180]$, and all sub-optimal treatments have the same expected reward $[2.0125, 1.6971, 1.3416]$. All treatments have the same variances $[4, 1, 0.25]$.

\textbf{Baselines.} Since fixed-budget BAI algorithms in the literature, such as \texttt{SH} and \texttt{SHVar}, do not deal with multi-metric problems, we adapt them to \texttt{M3AB} by revising their elimination strategies to either the $z$-value-based or confidence-based eliminations proposed in algorithm~\ref{alg:mosh} and algorithm~\ref{alg:shrvarc}. Together with \texttt{SHRVar-c}, this constructs $5$ baselines where the rest are called \texttt{SH-z}, \texttt{SH-c}, \texttt{SHVar-z}, and \texttt{SHVar-c}. Specifically, to study the influence of relative-variance-based sampling, \texttt{SH-z} replaces Lines 3-4 of algorithm \ref{alg:mosh} with uniform sampling and \texttt{SHVar-z} replaces it with a variance-based sampling rule similar to \texttt{SHVar}, which samples each arm proportional to $\max_i \sigma_{a,i}^2$. The comparison between the \texttt{SH}, \texttt{SHVar}, and \texttt{SHRVar} under the same $z$-value-based elimination rule reflects the efficiency of the relative-variance-based sampling rule. Additionally, \texttt{SH-c} and \texttt{SHVar-c} are created by replacing the elimination rule in \texttt{SH-z} and \texttt{SHVar-z} with the confidence-based elimination. Under each sampling rule, i.e., uniform sampling in \texttt{SH}, variance-based sampling in \texttt{SHVar}, and relative-variance-based sampling in \texttt{SHRVar}, we could compare the performance of the elimination rules and measure how the potential under-estimation effect of the $\min_i \hat{z}_{a,i}$ estimator would take place in different levels of variance equalization.

\textbf{Performance.} For all time horizons $T$, the relative-variance-based sampling strategy significantly improves over uniform sampling and variance-based sampling used in \texttt{SHVar}, with \texttt{SHRVar} and \texttt{SHRVar-c} both having a better error probability than other baselines. This is because the relative-variance-based sampling rule balances the estimated variance across treatments and avoids the error from a single uncertain estimate. Therefore, relative-variance-based sampling should be used to achieve the best efficiency. We also observe that with a small horizon, \texttt{SHVar} slightly performs better than \texttt{SHVar-c}. The difference disappears as the total horizon increases, which is consistent with our theory as $H_3$ and $\tilde{H}_3$ become the same in large $T$.

\textbf{Robustness.} On the other hand, if we fix the elimination rule and evaluate the performance drop from \texttt{SHRVar} to \texttt{SHVar} and \texttt{SH}, we can observe that under the confidence-based elimination strategy, the performance degradation is much milder, i.e., the performance difference between \texttt{SHRVar-c} and \texttt{SH-c} is much smaller than the difference between \texttt{SHRVar} and \texttt{SH}. This shows that the vanilla $z$-value estimate indeed suffers from the under-estimation effect. Even though it is not an issue when the variances of treatments are approximately equalized with the relative-variance-based sampling, it incurs a non-negligible performance loss when the sampling rule deviates from optimal. 
This demonstrates the robustness of the confidence-based elimination rule, which is useful in practice. For example, some system constraints may prevent the use of unbalanced sampling fractions for each active treatment, i.e., uniform sampling is required due to fairness. In other cases, the variances $\sigma_{a,i}^2$ also come from estimates, which may be inaccurate or subject to non-stationarity. So, a perfect $z$-estimate variance equalization is very difficult. More discussions are in Appendix~\ref{sec:conf}.

\subsection{Validation Success Probability}
We simplify our model to $M=1$ and experiment with both exploration and validation phases to show that \texttt{SHRVar} significantly improves the chance of passing validation compared to the vanilla \texttt{SH} proposed in \citep{karnin13bai} and \texttt{SHVar} proposed in \citep{lalitha23variance}, especially under heterogeneous variance instances. Both \texttt{SH} and \texttt{SHVar} use the estimate of expected reward for treatment elimination instead of the $z$ value used in \texttt{SHRVar}.
We consider $A=27$ treatments where the control has an expected reward $\mu_0 = 0$ and variance $\sigma_0^2 = 1$. To make the results more comparable, we fix the $z$ value of each treatment with $z_a = 0.3 - 0.1 \sqrt{a}$ so the first treatment is the best. Then, we gradually increase the level of variance heterogeneity $l$, which is a constant in $[0,5]$. The standard deviation $\sigma_a$ of each treatment $a$ is set as $1 + a^{l}$, so more sub-optimal arms have larger variance. We set the horizons for the exploration and validation phases to be $500$ and $100,$ respectively. It can be observed that when the variance heterogeneity is small, e.g., $l\leq 0.5$, all algorithms have relatively good performance. However, as the variance heterogeneity $l$ over the treatments increases, our proposed algorithm \texttt{SHRVar} maintains a good validation success rate over $80\%$, but the performances of both other baselines quickly decrease. This is because treatments with a larger variance will be hard to validate, which is not taken into account by the reward estimate used in \texttt{SH} and \texttt{SHVar}. The results demonstrate the superiority of $z$ value estimation in \texttt{AED} with validation. 
Notice that \texttt{SH} is also worse than \texttt{SHVar} since it is completely variance-agnostic and thus not robust to variance heterogeneity.

\section{Conclusion}
In this paper, we consider \texttt{M3AB} with both exploration and validation to find the treatment with the best chance to pass an A/B test against control.
We characterize the best treatment through $z$ values and propose \texttt{SHRVar} with a novel relative-variance-based sampling rule. Theoretical analysis shows that \texttt{SHRVar} has an exponentially decreasing error probability, generalizing \texttt{SH} or \texttt{SHVar}, which also works for bounded or sub-Gaussian rewards. Numerical results also demonstrate its superiority. 
Our future directions include incorporating unknown variance estimations and generalizing our algorithms and results for linear bandits.

\section*{Acknowledgment}
We thank Cathy Jiao for her support of this project.

\bibliography{ref}
\bibliographystyle{unsrtnat}

\newpage
\appendix

\section{Additional Experiments}\label{sec:appendix-experiment}
In this section, we provide results from additional numerical experiments that demonstrate our proposed approach's generality against multiple factors such as a large number of treatments, heterogeneous variance, and unknown variances. All results are evaluated over $10^5$ repetitions. All experiments in this paper are conducted on a single CPU with less than an hour of runtime.

\subsection{Generality to Heterogeneous Variance and Number of Treatments}
\begin{figure}[t]
    \centering
    \subfigure[Exploration Accuracy]{
        \centering
        \includegraphics[width=0.48\linewidth]{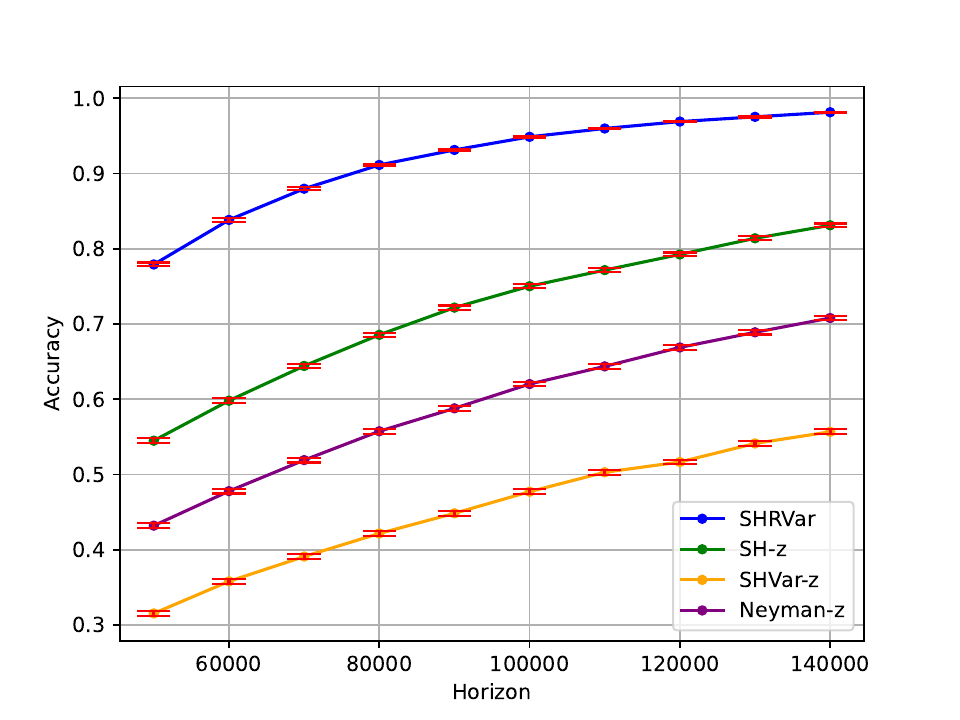}
    }
    \subfigure[Validation Success Probability]{
        \centering
        \includegraphics[width=0.48\linewidth]{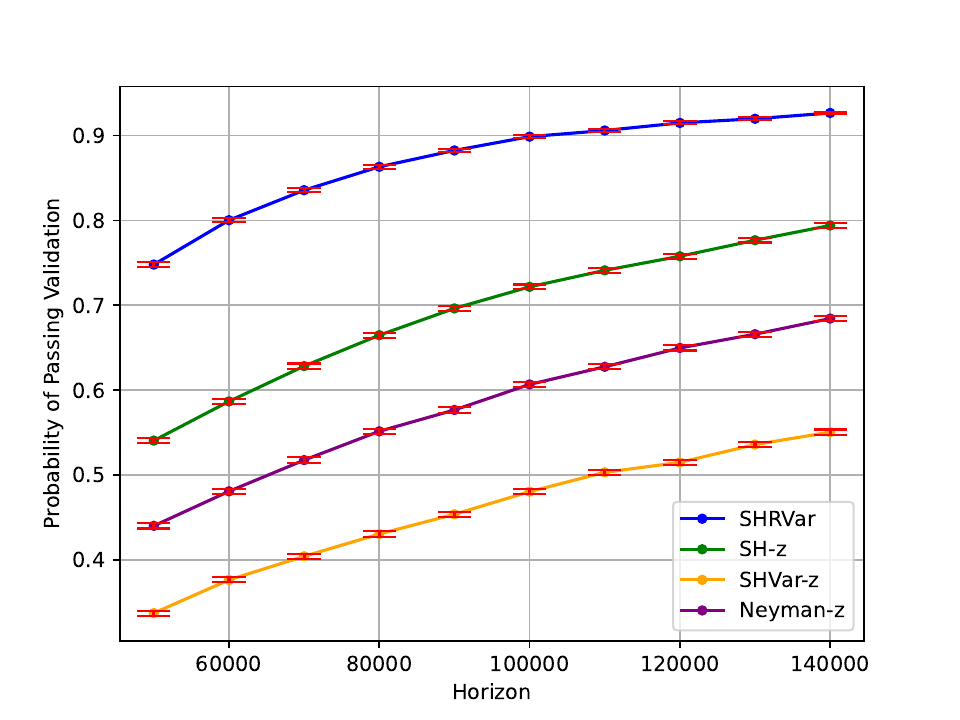}
    }
    \caption{(a) Exploration Accuracy: rate of identifying the best treatment in exploration as time horizon $T$ increases. (b) Validation Success Probability: rate of passing validation, i.e., successfully validating a better-than-control treatment after both phases. Error bars represent confidence intervals.}
    \label{fig:128-exist}
\end{figure}

We consider an \texttt{M3AB} problem with $A=128$ treatments and $M=3$ metrics, where each treatment-metric pair has a different variance. The reward of control is set to $[0,0,0]$ for all metrics, and the variance of control is $[1,1,1]$. For treatments, we set the $z$-value for the best treatment to be $[0.15,0.15,0.05]$ for the three metrics, and set the $z$-value for other treatments to be $[-0.05, 0.25, 0.25]$. Therefore, the only treatment that is better than control for all three metrics is the optimal treatment. The relative variances of treatments for the three metrics are set to $[0.8, 0.5, 0.2]$ plus a random noise uniformly distributed in $[-0.1,0.1]$. We use the same validation constant $\xi_{a,i}$ for all pairs of treatment and metric. Then, we calculate the standard deviation and the reward mean of each arm inversely from $z$-values and relative variances, and the variances will be heterogeneous. The validation time horizon $T_{\rv} = 2000$. We compare our proposed algorithm \texttt{SHRVar} to three baselines, i.e., \texttt{SH-z} and \texttt{SHVar-z} introduced in Sec.~\ref{sec:experiment}, and the baseline \texttt{Neyman-z} which replaces the sampling strategy of \texttt{SHVar-z} with Neyman allocation~\citep{neyman1992allocation}. The detailed discussion of Neyman allocation and the variance equalization allocation that we use in our paper will be provided in the appendix Sec.~\ref{sec:appendix-neyman}. The results are shown in Fig.~\ref{fig:128-exist}. The result shows that for both the probability of identifying the best treatment $a^*$ and the probability of passing validation, our proposed \texttt{SHRVar} has the best performance compared to all baselines, including Neyman allocation, which is asymptotically optimal in two-armed settings. The trend coincides with the results shown in Sec.~\ref{sec:experiment}, which is for a smaller number of arms and variances only heterogeneous among metrics. Therefore, it shows the generality of \texttt{SHRVar} and the advantage that \texttt{SHRVar} is robust to the number of arms and the level of heterogeneity between metrics and arms. 

\subsection{Robustness to Unknown Variance}

\begin{figure}[t]
    \centering
    \subfigure[Exploration Accuracy]{
        \centering
        \includegraphics[width=0.48\linewidth]{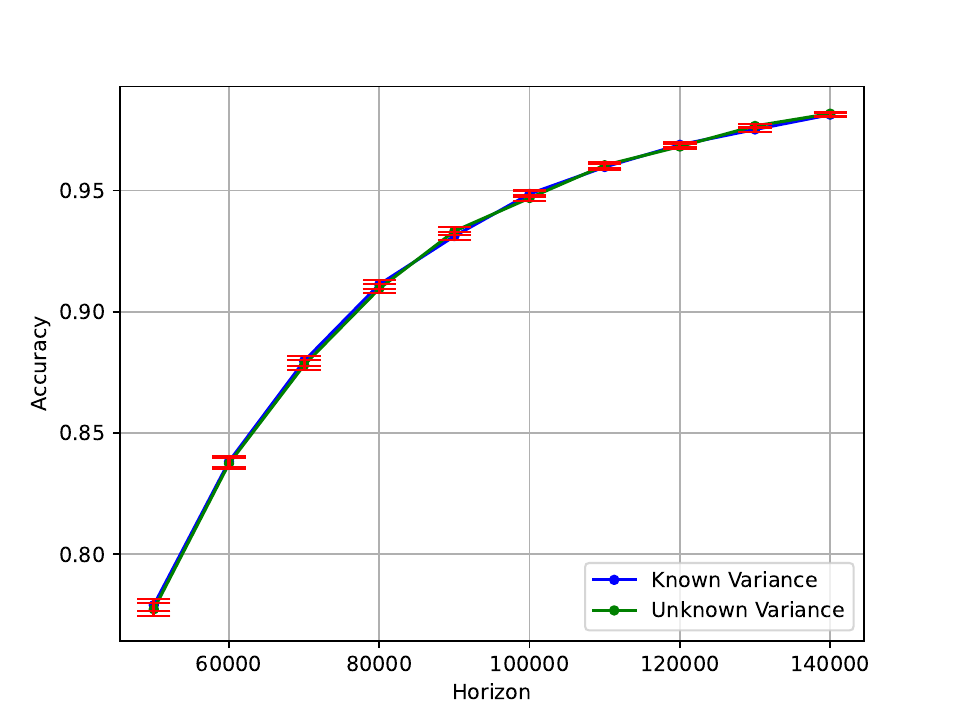}
    }
    \subfigure[Validation Success Probability]{
        \centering
        \includegraphics[width=0.48\linewidth]{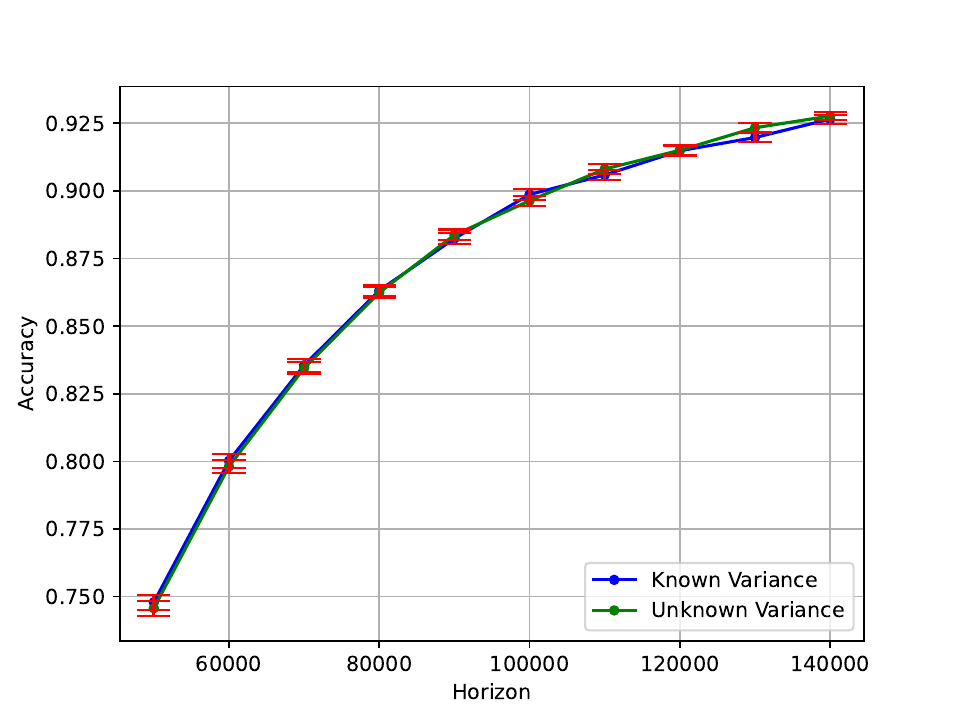}
    }
    \caption{(a) Exploration Accuracy: rate of identifying the best treatment in exploration as time horizon $T$ increases. (b) Validation Success Probability: rate of passing validation, i.e., successfully validating a better-than-control treatment after both phases. Error bars represent confidence intervals.}
    \label{fig:unknown}
\end{figure}

In this section, we incorporate \texttt{SHRVar} into the unknown variance setting and compare the performance to show the robustness of the proposed approach. We call the variant \texttt{SHRVar-Ada}. We assume the algorithm does not know the variance of each arm at the beginning, and therefore we first sample each arm uniformly and collect $T/(A+1)\log_2 A$ samples for each arm. Then, we estimate the variance of each arm using the sample variance, i.e.,
\begin{align*}
    \hat{\mu}_0(a,i) = \frac{1}{N_0(a)} \sum_{k=1}^{N_0(a)} \rvx_{a,k}, \quad \hat{\sigma}_{a,i}^2 = \frac{1}{N_0(a)-1} \sum_{k=1}^{N_0(a)}\left( \rvx_{a,k} - \hat{\mu}_0(a,i) \right),
\end{align*}
where $\rvx_{a,i, k}$ is the random reward of metric $i$ and treatment $a$ for the $k$-th pull, $N_0(a) = T/(A+1)\log_2 A$ for all arm, and $\hat{\mu}_0(a,i)$ is the empirical mean after the uniform pulling round. Then, we plug in the estimated $\hat{\sigma}_{a,i}$ to \texttt{SHRVar} with the remaining budget and proceed as if it is the true variance. The probability of finding the best treatment, and the probability of passing validation in the above \texttt{M3AB} setting are shown in Fig.~\ref{fig:unknown}. It shows that \texttt{SHRVar-Ada} in the unknown variance setting has almost the same performance as \texttt{SHRVar} of the known variance case, in both the probability of identifying the best treatment $a^*$ and the probability of passing validation. As the confidence intervals are overlapping with each other, given that the length of the interval is already small, we conclude that there is no significant difference between the unknown-variance variant and the original \texttt{SHRVar} algorithm. Therefore, it shows the robustness of \texttt{SHRVar} towards unknown variance.

\subsection{Type I Error}
\begin{figure}[t]
    \centering
    \subfigure[Standard]{
        \centering
        \includegraphics[width=0.48\linewidth]{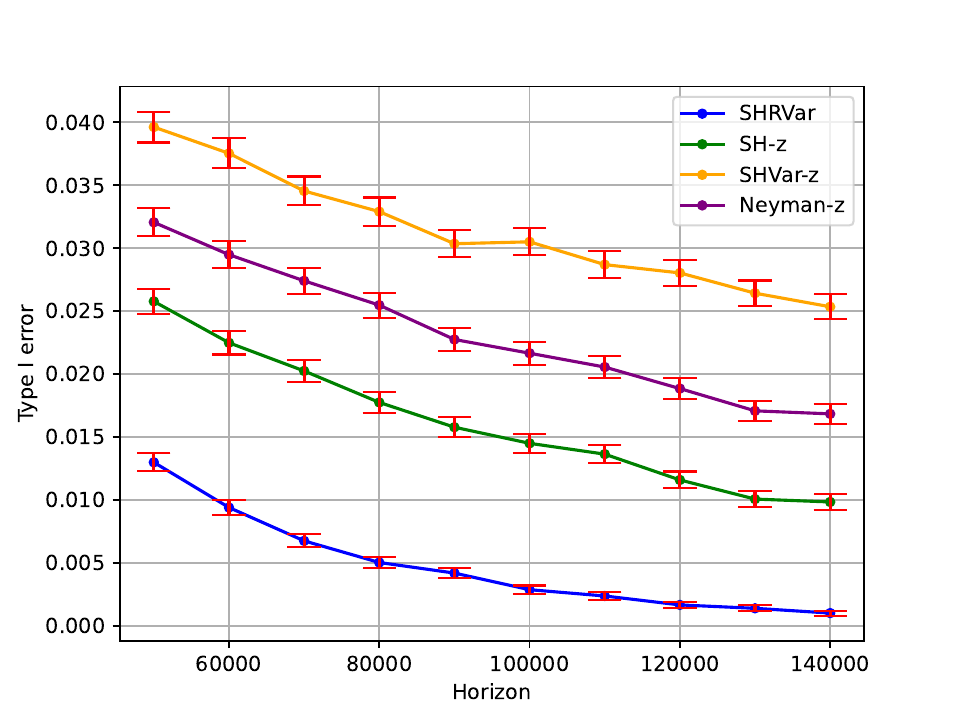}
    }
    \subfigure[Null]{
        \centering
        \includegraphics[width=0.48\linewidth]{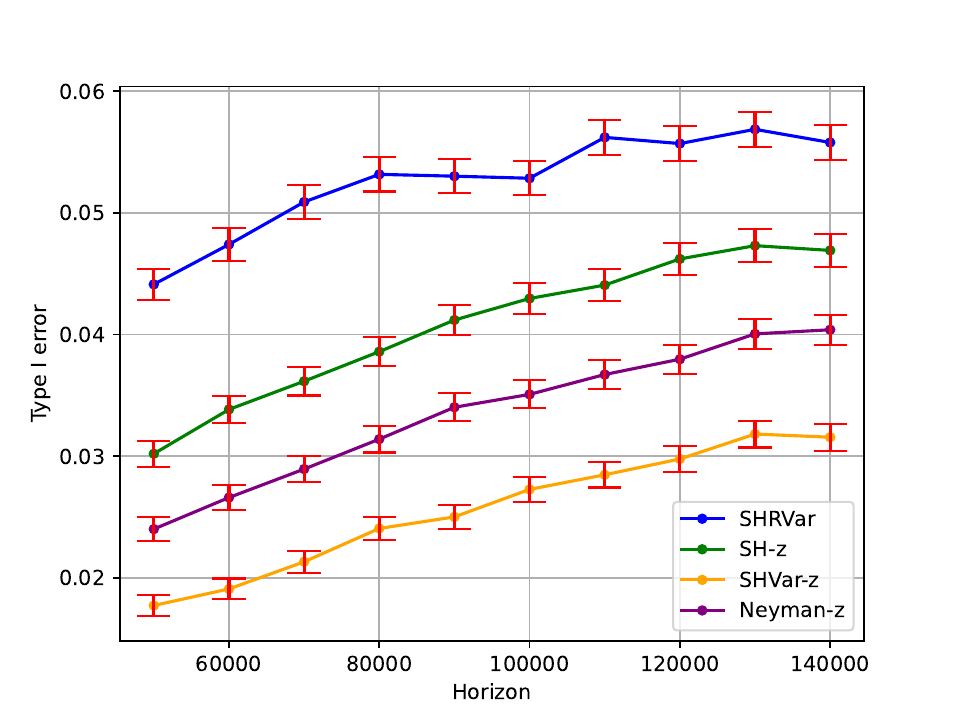}
    }
    \caption{(a) Standard instance: the best treatment is better than control in all metrics. (b) Null instance: no treatment is better than control uniformly.}
    \label{fig:128-typeI}
\end{figure}
It is also unsatisfactory if a worse-than-control arm is recommended and passes the validation against control, especially in settings where no treatment is better than control in all metrics. This is referred to as the Type I error. In this section, we study the type I error of the proposed algorithm compared to baselines. In addition to the settings mentioned above, where the best treatment is better than control for all metrics, we consider a null setting where all treatments are worse than control in at least one metric. This instance can be constructed from decreasing all the $z$-values of each treatment by $0.1$ in the above example, and the new $z$-value for the best treatment is $[-0.05,0.05,0.05]$, and for other treatments is $[-0.15,0.15,0.15]$. The type I error of both instances is shown in Fig.~\ref{fig:128-typeI}. It is shown that if there exists a treatment which is uniformly better than the control, our proposed algorithm \texttt{SHRVar} will have a type I error smaller than all baselines. This is because our algorithm will recommend the treatment with a higher probability of passing validation. If there exist treatments uniformly better than control, it will try to recommend the one in this set with the largest positive effect gap over the bottleneck metric. Therefore, it automatically decreases the type I error. If there is no treatment better than the control for all metrics, our algorithm will have a slightly larger type I error. This is because we try to recommend the treatment that has a better effect, i.e., a smaller effect gap to control, even though it is inferior. Naturally, other algorithms have slightly better type I error since they try to recommend treatments with a more negative effect more often, which is easy to distinguish for A/B tests. We emphasize that the type I error is mainly controlled by the A/B test in validation instead of the sampling and recommendation rules designed in \texttt{SHRVar}. For example, it is controlled by the required p-value to pass validation in non-Bayesian settings or the posterior probability threshold in Bayesian validation. Therefore, if worse algorithms return treatments with more negative effects, the type I error will be lower since the negative effect gap to control is larger. On the other hand, \texttt{SHRVar} returns arms with effects close to zero, so the type I error is close to the nominal rate defined by the p-value. Our experiments show that the type I error of \texttt{SHRVar} is not significantly worse than baselines in the case where no treatment is better than control, and we can control the type I error by setting the p-value or the posterior threshold properly.

\section{Variant of \texttt{SHRVar} with Confidence-Based Elimination}\label{sec:conf}
\begin{figure}[t]
    \centering
    \subfigure[overlap, $\delta = 0.01$]{
        \centering
         \includegraphics[width=0.3\linewidth]{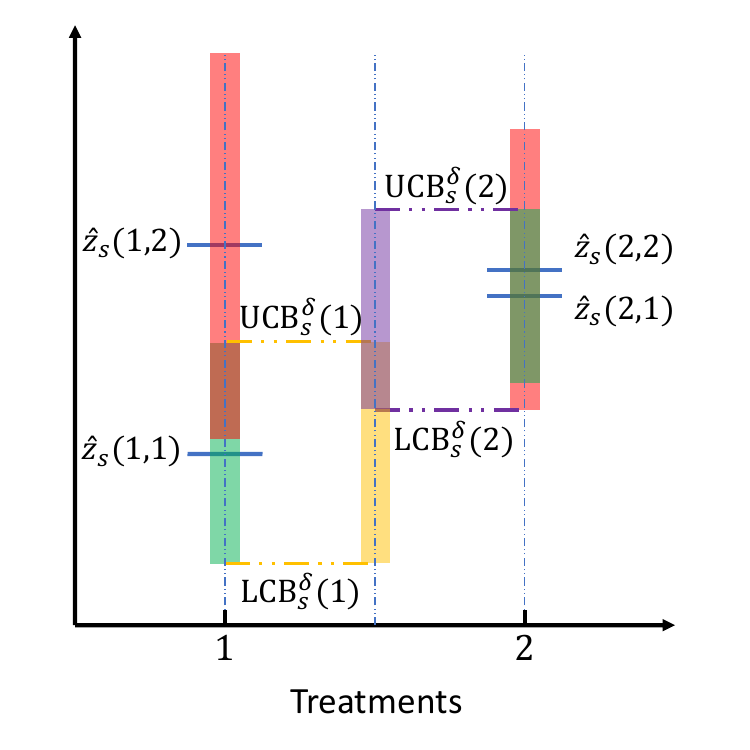}
         }
     \subfigure[almost overlap, $\delta = 0.05$]{
        \centering
         \includegraphics[width=0.3\linewidth]{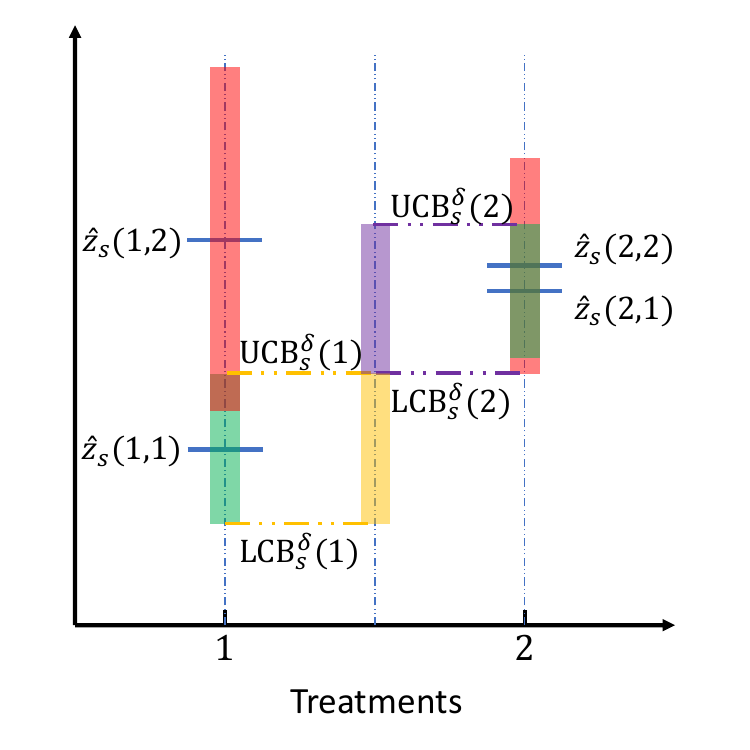}
         }
     \subfigure[separate, $\delta = 0.1$]{
    \centering
     \includegraphics[width=0.3\linewidth]{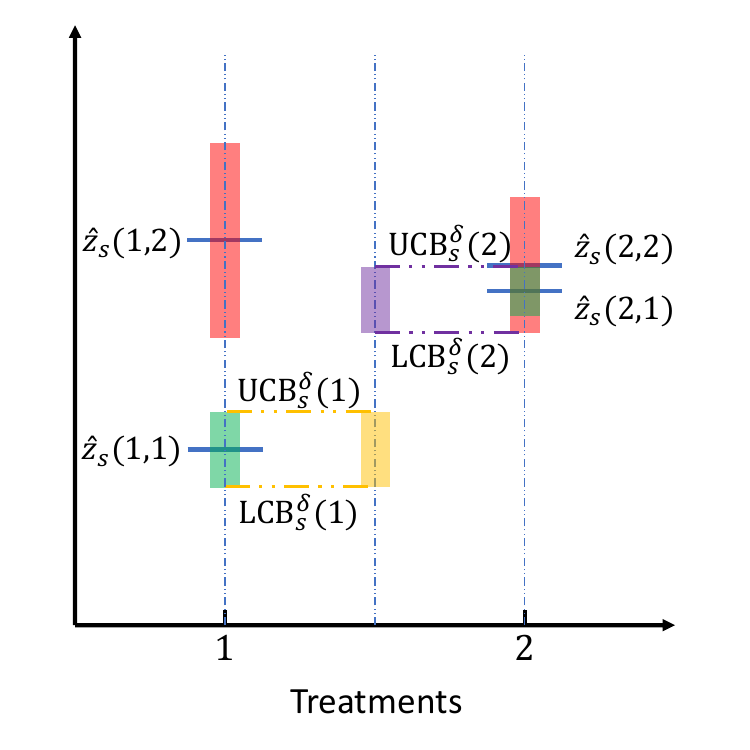}
     }
    \caption{A toy example to compute $\delta_s(1)$ where treatment $2$ has the largest LCB, i.e., $\argmax_{a} \mathrm{LCB}_s^\delta(a) = 2$. Empirical $z$ values are marked with solid lines, and confidence intervals are marked with color bars. In (a), $\delta$ is too small so that $\mathrm{UCB}_s^{\delta}(1)> \mathrm{LCB}_s^{\delta}(2)$. In (c), $\delta$ is too large so that $\mathrm{UCB}_s^{\delta}(1) < \mathrm{LCB}_s^{\delta}(2)$. In (b), we have $\mathrm{UCB}_s^{\delta}(1) = \mathrm{LCB}_s^{\delta}(2)$, meaning $\delta_s(1) = 0.05$.}
    \label{fig:confidence}
\end{figure}

In this section, we study and discuss the variant of the proposed \texttt{SHRVar} algorithm, the \texttt{SHRVar-c} algorithm with confidence-based elimination. The details of the algorithm are summarized in algorithm \ref{alg:shrvarc}. The only difference between \texttt{SHRVar-c} and \texttt{SHRVar} is the elimination strategy in Lines 9-12 that computes the confidence level $\delta_s(a)$ for each treatment and uses it to eliminate treatments. 

\subsection{Confidence-Based Elimination}
\textbf{Intuition.} 
We illustrate the intuition of this elimination rule through a simple example shown in Fig.~\ref{fig:confidence}. Recall that for each $z$ value estimate $\hat{z}_s(a,i)$, we can construct its confidence interval so that with probability $1-\delta$, we have:
\begin{align*}
    \underbrace{\min_i \left\{ \hat{z}_s(a,i) - b_s^\delta(a,i) \right\}}_{\mathrm{LCB}_s^\delta(a)} \leq \hat{z}_s(a,i) \leq \underbrace{\min_i \left\{ \hat{z}_s(a,i) + b_s^\delta(a,i) \right\}}_{\mathrm{UCB}_s^\delta(a)},
\end{align*}
where the confidence bonus $b_s^\delta(a,i)$ can be explicitly computed as:
\begin{align*}
    b_s^\delta(a, i) = 2\sqrt{\left(\frac{\rho_{a,i}^2}{N_s(a)} + \frac{\lambda_{a,i}^2}{N_s(0)} \right) \log \left( \frac{|\gA_s| M}{\delta} \right)}.
\end{align*}
Then, for any confidence level $\delta$ and any treatment $a$, if there exists another treatment $a'$ that has a lower confidence bound $\mathrm{LCB}_s^\delta(a')$ which is larger than the upper confidence bound $\mathrm{UCB}_s^\delta(a)$ of treatment $a$, it means the ground-truth $z$ value $z_{a,i}$ is smaller than $z_{a', i}$ with probability at least $1-\delta$. In other words, if $\mathrm{UCB}_s^\delta(a) < \max_{a'} \mathrm{LCB}_s^\delta(a')$, this means treatment $a$ is not the best treatment with high probability and eliminating it would incur an error probability at most $\delta$. This situation is illustrated in Fig.~\ref{fig:confidence}(c) where the yellow confidence interval of treatment $1$ does not overlap with the purple confidence interval of treatment $2$. However, if we decrease $\delta$ in this case while keeping the confidence intervals of both treatments non-overlapping, we can find a tighter probability of mistake upper bound. The optimal $\delta$ should be the case shown in Fig.~\ref{fig:confidence}(b), where the yellow and purple confidence intervals for both treatments almost overlap with each other, and we record this confidence level as $\delta_s(a)$:
\begin{align*}
    \delta_s(a) = \inf\left\{\delta \in \sR_+ \left| \mathrm{UCB}_s^\delta(a) \leq \max_{a'\in \gA_s} \mathrm{LCB}_s^\delta(a') \right.\right\}.
\end{align*}
Therefore, $\delta_s(a)$ represents the tightest error probability upper bound if we eliminate treatment $a$. Then, it becomes natural that we should eliminate the treatments with less probability of making mistakes since we are more confident that they are not the best treatment, which is the treatments with less $\delta_s(a)$.

\textbf{Underestimation Alleviation.} It is shown in our numerical experiment in Fig.~\ref{fig:err} that \texttt{SHRVar} may suffer performance loss due to underestimating the minimum $z$ value $\min_i z_{a,i}$ if the sampling strategy is not optimally designed, which deviates from equalization of the estimate variance. To be specific, since we use $\min_i \hat{z}_s(a,i)$ to estimate the ground-truth $\min_i z_{a,i}$, we have:
\begin{align*}
    \E[\min_i \hat{z}_s(a,i)] \leq \min_i \E[\hat{z}_s(a,i)] = \min_i z_{a,i}.
\end{align*}
If all the estimates $\hat{z}_s(a,i)$ have the same variance that has been perfectly balanced through the design of the sampling rule, the difference between the left-hand side and right-hand side of the inequality above will be the same for all treatments $a$. Then, using \texttt{SHRVar} for elimination would not incur bias towards any treatment and would still produce a good performance shown in Fig.~\ref{fig:err}. However, if the variances are not balanced, the underestimation gap would be different for each treatment, and therefore, an implicit bias would occur if we eliminate treatments based on $\min_i \hat{z}_s(a,i)$, which is harmful and increases the probability of error. The confidence-based elimination rule in \texttt{SHRVar-c} will alleviate this underestimation issue since the confidence intervals will still be valid even if the variances are not well-balanced, and thus the confidence level $\delta_s(a)$ is not biased. However, the computation of $\delta_s(a)$ involves multiple treatments and may incur additional variance in the elimination process, which explains the slightly worse empirical performance compared to \texttt{SHRVar} in Fig.~\ref{fig:err}. One could also use methods such as the double estimator~\citep{hasselt10doubleQ} to mitigate this issue, but this will also add additional variance to each estimator since only half of the samples are used to identify $a^*$. 

\subsection{Theoretical Guarantee}
We present the probability of error upper bound guarantee for \texttt{SHRVar-c} in algorithm~\ref{alg:shrvarc} analogous to Theorem~\ref{thm:estimate} in this section. We also assume $A$ is a power of $2$. Recall that for an arbitrary set $\gS$ of treatments, the heterogeneity of relative variance $\kappa_{\gS, a, i}$ is defined as follows:
\begin{align*}
        \kappa_{\gS, a, i} = \frac{\frac{\rho_{a,i}^2}{\max_i\rho_{a,i}^2} \rho_{\gS, \Sigma}  + \frac{\lambda_{a,i}^2}{\max_{a \in \gS, i} \lambda^2_{a,i} }\lambda_{\gS,\Sigma}}{\rho_{\gS, \Sigma} + \lambda_{\gS,\Sigma}}.
\end{align*}
We additionally define another effective gap $\tilde{D}_{\gS, a}$ similar to the gap $D_{\gS, a}$ used in Theorem.~\ref{thm:estimate} as follows:
\begin{align*}
    \tilde{D}_{\gS, a}^2 =\min_{i \in [M]} \max_{j\in[M]} \min\left\{\frac{[z_{a^*,i} - z_{a,j}]_+^2}{(\kappa_{\gS, a,j} + \kappa_{\gS, a^*,i})^2}, A_{\gS, a, i,j} \right\},
\end{align*}
where the additional term $A_{\gS, a, i, j} = +\infty$ if $\kappa_{\gS, a, j} \leq \kappa_{\gS, a, i}$, and is expressed as follows:
\begin{align*}
    A_{\gS, a, i, j} = \left\{
    \begin{array}{cc}
        +\infty & \kappa_{\gS, a, j}, \leq \kappa_{\gS, a, i} \\
        \frac{[z_{a^*,i} - z_{a,j}]_+^2}{\left( \kappa_{\gS, a, j} - \kappa_{\gS, a^*, i} \right)^2}+ \Delta_{\min}^2 - \frac{8A\log_2^2 A}{T}, & \kappa_{\gS, a, j} > \kappa_{\gS, a, i}
    \end{array}
    \right..
\end{align*}
Finally, for any set of treatments $\gS$, let $\tilde{\gS}'_{\ervc}$ be its subset excluding $1/4$ of the treatments with smallest $\tilde{D}_{\gS, a}$ similar to the definition of $\gS'_{\ervc}$ in Theorem.~\ref{thm:estimate}. The $\tilde{H}_3$ complexity measure used to characterize the performance of \texttt{SHRVar-c} is defined as:
\begin{align*}
    \tilde{H}_3^{-1} = \min_{\gS: a^*\in \gS} \frac{\min_{a\in \tilde{\gS}'_{\ervc}}\tilde{D}^2_{\gS, a}}{ \left( \rho_{\gS, \Sigma} + \lambda_{\gS,\Sigma} \right)^2}.
\end{align*}
Theorem~\ref{thm:conf} states the probability of mistake cab be bounded as:
\begin{align*}
        \prob\left( \hat{a}\neq a^* \right) \leq 6 M \log_2 A \cdot \exp\left( -\frac{T}{2 \tilde{H}_3 \log_2 A} \right).
    \end{align*}
\textbf{Proof Roadmap.} For Theorem~\ref{thm:conf}, we use the same proof framework as the proof of Theorem~\ref{thm:estimate} except that we need to analyze the probability that a sub-optimal treatment has lower confidence $\delta_s(a)$ than the best treatment. This is more challenging since it also involves a third treatment, i.e., the treatment that has the largest lower confidence bound. Our solution is to couple the confidence-based elimination rule with the simple elimination rule based on the minimum $z$ value used in \texttt{SHRVar}, to only analyze the event where the concentration of the minimum $z$ value estimate does not hold. This can only happen when the confidence $\delta_s(a^*)$ for the best treatment is small, meaning a large deviation happens between the best treatment and the treatment with the largest LCB. From this coupling argument, we will be able to reach the instance-dependent exponent $\tilde{D}_{\gA_s, a}$ which is almost the same as $D_{\gA_s, a}$ in Theorem~\ref{thm:estimate}. The complete proof is in the appendix section~\ref{sec:proof-conf}

\textbf{Effective Gap.} The effective gap notion $\tilde{D}_{\gS, a}$ in Theorem.~\ref{thm:conf} is slightly different from $D_{\gS, a}^2$ in Theorem.~\ref{thm:estimate} due to the additional $A_{\gS, a, i, j}$ term in the case where $\kappa_{\gA_s, a, j} > \kappa_{\gA_s, a^*, i}$. This means the confidence bonus $b_s^\delta(a, j)$ for the sub-optimal treatment $a$ is larger than the best treatment., which will make the best treatment ``vulnerable'' to elimination as it is easier to increase $\delta$ to create a separation of confidence intervals for the best treatment even if it has a larger empirical $z$ value due to less variance. The additional term $A_{\gS, a, i, j}$ accounts for the probability that this case happens. When $T$ is as large as $\tilde{\Omega}(A \Delta_{\min}^{-2})$, the two effective gaps will be the same, since:
\begin{align*}
    A_{\gS, a, i, j} \geq \frac{[z_{a^*,i} - z_{a,j}]_+^2}{\left( \kappa_{\gS, a, j} - \kappa_{\gS, a^*, i} \right)^2} \geq \frac{[z_{a^*,i} - z_{a,j}]_+^2}{\left( \kappa_{\gS, a, j} + \kappa_{\gS, a^*, i} \right)^2},
\end{align*}
and thus will be vacuous when the minimization is taken. This means when the exploration time horizon $T$ is small, \texttt{SHRVar-c} may have a slightly larger probability of error compared to \texttt{SHRVar}. As $T$ increases, the difference will become smaller and smaller, and finally, the two algorithms will have the same performance when $T$ is large enough. This theoretical insight coincides with our numerical experiment in Fig.~\ref{fig:err} as the gap between \texttt{SHRVar} and \texttt{SHRVar-c} becomes smaller as the time horizon increases. Therefore, in the large $T$ regime, \texttt{SHRVar-c} will be equivalent to \texttt{SHRVar} and enjoys all the characteristics of \texttt{SHRVar}. 

\section{Discussions}
In this section, we provide further discussions of some aspects of our paper that may provide insights to future research and better engage our findings with the existing literature.

\subsection{Variance Equalization Compared to Neyman Allocation}\label{sec:appendix-neyman}

In the best arm identification problems with a fixed budget, the essential goal is to design a sampling strategy that optimally reduces the uncertainty of the estimators and results in the minimum probability of error. The variance equalization design principle is widely used in the best arm identification literature~\citep{weltz23variance,lalitha23variance,antos10variance}, even though other principles are also studied. In this section, we discuss the reason and advantage of choosing the variance equalization sampling rule in this paper to design \texttt{SHRVar}, and compare it to the well-known Neyman allocation.

\textbf{Neyman Allocation.} The Neyman allocation arises from estimation problems in stratified sampling tasks~\citep{neyman1992allocation}. The essential idea of Neyman allocation is to assign the number of samples for each random variable proportional to its standard deviation. In the fixed budget best arm identification problem, the number of pulls for each arm under Neyman allocation would be proportional to the standard deviation of the reward, i.e., $N(a)\varpropto \sigma_a$, where $N(a)$ is the number of samples for each arm and $\sigma_a$ is the standard deviation. The Neyman allocation has been proven optimal in statistical estimation~\citep{glynn2004neymanallocation} or evaluation tasks~\citep{li2023neymanallocation}, but the performance in decision-making tasks, especially for the setting where there exists multiple arms, is elusive. In~\citep{kaufmann16bai}, it has been shown that for a limited setting, i.e., a two-armed fixed-budget best arm identification problem with Gaussian noise, Neyman allocation achieves the optimal asymptotic rate of error as follows and matches the lower bound.
\begin{align*}
    \lim_{T\to\infty} -\frac{1}{T} \log \prob\left( \hat{a} \neq a^* \right) = \frac{(\mu_1 - \mu_2)^2}{(\sigma_1 + \sigma_2)^2},\quad \text{for Neyman allocation}.
\end{align*}
where $\hat{a}$ is the recommended arm by the algorithm, $a^*$ is the best arm with highest expected reward, and $\mu_1$ and $\mu_2$ are the reward mean of both arms. But the performance for more general settings, such as multiple arms or other reward distributions, is not optimal as shown in our experiment in Sec~\ref{sec:appendix-experiment}.

\textbf{Variance Equalization Allocation.} Our sampling rule design principle is variance equalization, which has been efficiently used in best arm identification problems under both fixed budget~\citep{lalitha23variance} and fixed confidence settings~\citep{weltz23variance,antos10variance} with multiple arms. Specifically, the principle aims to design and balance the number of pulls for each arm, so that the variance of each estimator for decision-making will be the same. In the classic multiple-armed bandit problem, it will result in an allocation rule where the number of pulls $N(a)$ for each arm $a$ is proportional to the variance $\sigma_a^2$ of that arm. The sequential halving algorithm designed based on this rule is optimal up to a logarithmic factor in best arm identification with fixed confidence~\citep{weltz23variance}.

\textbf{Comparison.} We show that variance equalization allocation is more general and has a better performance over a larger set of problems compared to Neyman allocation, without losing much optimality in the two-armed Gaussian bandit problem. This justifies why we chose this rule to design \texttt{SHRVar}. We consider a multi-armed bandit model with $A+2$ arms. Suppose arm $1$ is the best arm with expected reward $\mu_1$, and all other arms have the same reward mean $\mu_2$. Both arms $1$ and $2$ have large reward variance $\Sigma^2$, and all other arms have small reward variance $\sigma^2$, and we assume $\sigma$ is much smaller than $\Sigma$. Then, with a large enough time horizon $T$, the bottleneck of the best arm identification task in this model would be the test between arm $1$ and arm $2$, and the exponent of error probability for each allocation rule would be as follows:
\begin{align*}
    \lim_{T\to\infty} -\frac{1}{T} \log \prob\left( \hat{a} \neq a^* \right) =& \frac{(\mu_1 - \mu_2)^2}{4\Sigma^2+2A\Sigma \sigma}, \quad \text{for Neyman allocation;}\\
    \lim_{T\to\infty} -\frac{1}{T} \log \prob\left( \hat{a} \neq a^* \right) =& \frac{(\mu_1 - \mu_2)^2}{4\Sigma^2+2A \sigma^2}, \quad \text{for variance equalization allocation.}
\end{align*}
Therefore, Neyman allocation has a strictly worse performance than variance equalization for $\sigma < \Sigma$, independent of the number of arms. As the number of arms $A$ becomes much larger, especially much larger than $\Sigma^2 / \sigma^2$, the denominator of Neyman allocation can be arbitrarily larger than variance equalization, i.e.,
\begin{align*}
    \lim_{A\to \infty} \lim_{T\to\infty} \frac{\log \prob\left( \hat{a} \neq a^* | \text{Neyman} \right)}{\log \prob\left( \hat{a} \neq a^* | \text{variance equalization}\right)} = \lim_{A\to \infty} \frac{4\Sigma^2+2A \Sigma\sigma}{4\Sigma^2+2A \sigma^2} = \frac{\Sigma}{\sigma}.
\end{align*}
Since $\Sigma$ can be arbitrarily larger than $\sigma$, the probability of error for Neyman allocation can be arbitrarily larger than that of variance equalization allocation. This is because variance-equalization assigns more samples to large variance arms, which is helpful when the bottleneck involves two arms with large variance. Therefore, variance-equalization is a more general approach that is near-optimal over a wider range of problems. This is also why it has been much more adopted in the best arm identification literature than Neyman allocation~\citep{weltz23variance,lalitha23variance}. On the other hand, in the specialized two-armed Gaussian bandit setting, the performance of variance equalization allocation is still optimal up to a factor of $2$, an absolute constant which cannot be arbitrarily worse, i.e.,
\begin{align*}
    \lim_{T\to\infty} -\frac{1}{T} \log \prob\left( \hat{a} \neq a^* \right) = \frac{(\mu_1 - \mu_2)^2}{2\left(\sigma_1^2 + \sigma_2^2\right)},\quad \text{for variance equalization allocation}.
\end{align*}
And we have:
\begin{align*}
    \frac{(\mu_1 - \mu_2)^2}{2\left(\sigma_1^2 + \sigma_2^2\right)} \geq \frac{1}{2} \frac{(\mu_1 - \mu_2)^2}{(\sigma_1 + \sigma_2)^2},
\end{align*}
where the left-hand side is half of the exponent for Neyman allocation. This also shows that variance equalization does not suffer much performance loss in the specialized setting and should still achieve near-optimal probability of error. Overall, variance equalization allocation achieves a better performance in our studied \texttt{M3AB} problem as shown in our additional experiments in Sec~\ref{sec:appendix-experiment}.

\subsection{Equivalence of Max-Min and Expectation Characterization of Best Treatment}\label{sec:appendix-equiv}
In this section, we illustrate and discuss the relation between the max-min formulation studied in our paper in~\eqref{eq:best-prob} and a more natural formulation that maximizes the expected probability of passing validation. Recall that $\gE_{\rv,i}$ is the event that the treatment $\hat{a}$ selected by the exploration phase beats the control in the validation A/B test. The two formulations can be represented as follows:
\begin{align*}
    \text{Max-Min Formulation:} \quad &a^*_1 = \argmax_{a\in \gA / \{0\}} \min_{i\in [M]} \prob\left( \gE_{\rv,i} \right);\\
    \text{Expected Formulation:} \quad &a^*_2 = \argmax_{a\in \gA / \{0\}} \prob\left( \bigcap_{i\in [M]}\gE_{\rv,i} \right);
\end{align*}
Let's consider the non-Bayesian validation with two treatments $\gA = \{0, 1, 2\}$ and two metrics $M=2$ with $\delta_i = 0.5$ for simplicity. Assume both treatments are better than the control for both metrics. The argument can easily be generalized to settings with multiple treatments and metrics. Suppose the recommended treatment to validate is a treatment $a$. Then, by proposition \ref{prop:val}, we have:
\begin{align*}
    a^*_1 = \argmax_{a\in \{1,2\}} \min_{i\in \{1,2\}} \left(\frac{\mu_{a,i} - \mu_{0,i}}{\sqrt{\sigma_{a,i}^2 + \sigma_{0,i}^2}} + \xi_{a,i}\right).
\end{align*}
Recall that from the proof of proposition \ref{prop:val}, we have:
\begin{align*}
    \gE_{\rv, i} = \left\{\hat{\mu}_{\rv}(a,i) - \hat{\mu}_{\rv}(0,i) \geq \Phi^{-1}(1 - \delta_i) \sqrt{\frac{\sigma_{a,i}^2 + \sigma_{0,i}^2 }{T_{\rv} / 2}} \right\}.
\end{align*}
The probability of failure can be characterized using the cumulative distribution function $\Phi(\cdot)$ of standard normal as follows:
\begin{align*}
    \prob\left( \gE_{\rv, i}^\complement \right)
    = \Phi\left(  - \frac{\mu_{a,i} - \mu_{0,i}}{\sqrt{\sigma_{a,i}^2 + \sigma_{0,i}^2 }} \sqrt{T_{\rm{v}}/2} \right).
\end{align*}
On the other hand, we consider the expected formulation as follows. First, by the independence among metrics, we have:
\begin{align*}
    \prob\left( \bigcap_{i\in [M]}\gE_{\rv,i} \right) =& \prod_{i\in [M]} \prob\left( \gE_{\rv,i} \right) = \prod_{i\in [M]} \left( 1- \prob\left( \gE_{\rv,i}^{\complement} \right) \right)\\
    =&\left( 1- \Phi\left(  - \frac{\mu_{a,1} - \mu_{0,1}}{\sqrt{\sigma_{a,1}^2 + \sigma_{0,1}^2 }} \sqrt{T_{\rm{v}}/2} \right) \right) \left( 1- \Phi\left(  - \frac{\mu_{a,2} - \mu_{0,2}}{\sqrt{\sigma_{a,2}^2 + \sigma_{0,2}^2 }} \sqrt{T_{\rm{v}}/2} \right) \right)\\
    =& 1 - \underbrace{\Phi\left(  - \frac{\mu_{a,1} - \mu_{0,1}}{\sqrt{\sigma_{a,1}^2 + \sigma_{0,1}^2 }} \sqrt{T_{\rm{v}}/2} \right)}_{A_1(a)} - \underbrace{\Phi\left(  - \frac{\mu_{a,2} - \mu_{0,2}}{\sqrt{\sigma_{a,2}^2 + \sigma_{0,2}^2 }} \sqrt{T_{\rm{v}}/2} \right)}_{A_2(a)}\\
    &+ \underbrace{\Phi\left(  - \frac{\mu_{a,1} - \mu_{0,1}}{\sqrt{\sigma_{a,1}^2 + \sigma_{0,1}^2 }} \sqrt{T_{\rm{v}}/2} \right) \Phi\left(  - \frac{\mu_{a,2} - \mu_{0,2}}{\sqrt{\sigma_{a,2}^2 + \sigma_{0,2}^2 }} \sqrt{T_{\rm{v}}/2} \right)}_{A_{1,2}(a)}.
\end{align*}
So we have:
\begin{align*}
    a_2^* = \argmin_{a\in\{1,2\}} A_1(a) + A_2(a) - A_{1,2}(a).
\end{align*}
Since both treatments are better than the control, when $T_{\rv}$ is large, all three terms approach $0$. Considering the shape of the tail of the normal CDF, i.e., $\Phi(-\alpha \sqrt{T}) = \Theta(\mathrm{poly}(T)\exp(-\alpha^2 T^2))$, it is not difficult to see that all three terms $A_1(a)$, $A_2(a)$, and $A_{1,2}(a)$ decreases to $0$ exponentially fast. Therefore, the exponent $\alpha$ will determine the largest term among the three. Since all terms are decreasing exponentially, the largest term will dominate the choice of the $\argmin$ since the difference in other terms among arms will be negligible compared to the largest term. Therefore, we can transform the problem into the following when $T_{\rv}$ is large enough:
\begin{align*}
    a_2^* =& \argmin_{a\in\{1,2\}} A_1(a) + A_2(a) - A_{1,2}(a)\\
    =& \argmin_{a\in\{1,2\}} \max\left\{ A_1(a), A_2(a), A_{1,2}(a) \right\}\\
    =& \argmax_{a\in\{1,2\}} \min\left\{ \frac{\mu_{a,1} - \mu_{0,1}}{\sqrt{\sigma_{a,1}^2 + \sigma_{0,1}^2 }}, \frac{\mu_{a,2} - \mu_{0,2}}{\sqrt{\sigma_{a,2}^2 + \sigma_{0,2}^2 }}, \frac{\mu_{a,1} - \mu_{0,1}}{\sqrt{\sigma_{a,1}^2 + \sigma_{0,1}^2 }}+ \frac{\mu_{a,2} - \mu_{0,2}}{\sqrt{\sigma_{a,2}^2 + \sigma_{0,2}^2 }} \right\}\\
    =& \argmax_{a\in\{1,2\}} \min\left\{ \frac{\mu_{a,1} - \mu_{0,1}}{\sqrt{\sigma_{a,1}^2 + \sigma_{0,1}^2 }}, \frac{\mu_{a,2} - \mu_{0,2}}{\sqrt{\sigma_{a,2}^2 + \sigma_{0,2}^2 }} \right\}.
\end{align*}
Comparing this to the definition of $a_1^*$, we can easily conclude that $a_1^* = a_2^*$ in the large $T_{\rv}$ regime, and the two problem formulations are equivalent.

\subsection{Relaxing Assumptions}
We discuss intuitions and methods to relax our assumptions in the main text, such as known variances, Gaussianity, and independence. We remark that these assumptions are natural in the multi-armed bandit literature when developing algorithms for new models, such as this paper.

\textbf{Non-Existence of Uniformly Better Treatment.} It is possible that no treatment is better than control in all metrics simultaneously, and it would be ideal to detect this circumstance as quickly as possible in the exploration phase. A counterfactual evaluation over the samples from the recommended treatment and the control could be performed after the exploration phase to decide the confidence of whether the recommended treatment is better than the control. For each metric, the probability of failure of A/B validation could be calculated as characterized in our proof for Proposition~\ref{prop:val}. If the overall failure probability is too high, we could decide not to move to A/B validation and declare the non-existence of a universally better treatment. Even if we indeed move to A/B in this setting, the validation itself serves as a second protective mechanism to prevent a worse-than-control arm slip through. A numerical experiment on this null setting is also conducted in Appendix~\ref{sec:appendix-experiment}.

\textbf{Unknown Treatment Variance.} Even though knowing the variance of a treatment is a common assumption in developing bandit algorithms from theoretical causes, especially under novel problems with unique challenges, such as our paper, it is indeed in practice that the true variances of treatments are largely unknown. An adaptation we use is to use a small portion of the budget to estimate the variances beforehand and treat the estimated variances as the true variances. We are aware of this challenge, so we numerically tested \texttt{SHRVar} with this adaptation in Fig.~\ref{fig:unknown} of the appendix, which shows almost no performance loss compared to the known variance case. We believe the variance estimation error is likely to be secondary in most applications, and we also find this method works well in real-world applications. In practical use cases, we observe that different metrics, say revenue versus click rate, can have very different variances, where revenue is subject to a large variance while click rate has a much lower variance. For a given metric, different treatments may exhibit varying variances, but these differences are typically of a much smaller magnitude compared to the ones observed across metrics. Therefore, in our work, we prioritize the multi-objective challenge.

\textbf{Beyond Sub-Gaussian Rewards.} Although the Gaussian distribution is a commonly adopted assumption for developing bandit algorithms in new models, such as the multi-metric problem with control in our paper, it is exciting to see if extension beyond these assumptions is possible. We believe it should not pose too much difficulty to generalize \texttt{SHRVar} to other distributions based on a number of reasons from intuition. For example, the $z$-value objective in Proposition~\ref{prop:val} is derived by characterizing the failure probability of the validation A/B test, which is shown to have a monotonic relation with the $z$-value. For other distributions, such as the exponential distribution, one could also characterize this failure probability. The $z$-value objective would stay the same as long as the failure probability has a similar monotonic relation with the $z$-value. However, the tail characteristic of the reward distribution will affect the error bound in Theorem~\ref{thm:estimate}, because our proof relies on Hoeffding’s concentration, which is not valid beyond a sub-Gaussian distribution. But we should be able to use other concentration methods to prove a similar error bound, for example, using Bernstein's inequality for sub-exponential distributions. 

\textbf{Incorrect Prior.} A correct and good prior for Bayesian validation would possibly speed up the validation process, i.e., using fewer samples to produce the required significance. On the other hand, an incorrect prior could also affect the performance of Bayesian validation in our algorithm, similar to all other Bayesian algorithms. How to choose the prior is a rich literature that is complementary to this work. In our paper, if the prior of a metric is picked too large or too small, it would change the $z$-value of each arm for that metric, and affect the choice of the best treatment since all metrics are related via the $\min$ operator in Proposition~\ref{prop:val}. However, this should only be problematic if the validation time horizon $T_v$ is too small, since the prior variance is weighted by $1/T_v$ in our $z$-value object.

\textbf{Metric Correlation.} It is an interesting topic to extend our paper to settings with metric correlation, which would be our future work. In most A/B tests with multiple metrics, each metric is evaluated individually, so we believe it still makes sense to follow the objective in Eq.~\ref{eq:best-prob} to optimize the minimum passing probability over metrics. Then, the problem can still be transformed into the $z$-value max-min problem in Proposition~\ref{prop:val}, independent of the existence of correlation. If the correlation is known, it will play an important role in our sampling rule to estimate the $z$-values. We could follow the same framework of this paper to analyze the covariance matrix of the $z$-estimators and perform an optimal design based on it. Leveraging this correlation would potentially improve the algorithm’s performance, resulting in a better error bound, since assuming independence is essentially assuming the worst case. But on the other hand, knowing the metric correlation is as difficult as knowing the variance of each treatment in practice, if not more difficult.

\subsection{Other Discussions}\label{sec:appendix:discuss}

\textbf{Lower Bounds for Fixed-Budget Problems.} 
Developing tight lower bounds for the \texttt{M3AB} problem is a challenging but meaningful problem, which we aim to address as our future direction. It is difficult in the sense that, to our best knowledge, no instance-dependent error probability lower bound has been developed in the literature beyond a two-armed Gaussian setting. Even for classic best arm identification with multiple arms and homogeneous variance, such a tight matching lower bound has not been developed, and our studied \texttt{M3AB} problem is much more complex, involving multiple treatments, multiple metrics, and a control arm. Some researchers also question whether an achievable lower bound exists in the fixed budget setting~\citep{degenne19goodarm}, where the authors prove that if there exists a lower bound for the fixed-budget best arm identification problem, there is no single algorithm that can achieve it on every problem instance. And our problem is even more general. On the other hand, the complexity of our proposed \texttt{SHRVar} is a generalization of \texttt{SH}, one of the best fixed-budget algorithms in single-metric unit-variance problems, and therefore inherits the advantages of \texttt{SH}. 

\textbf{Relative Variance and Validation.} One of the major contributions of our paper is to formally introduce the validation phase and the control into the best arm identification framework with multiple metrics for consideration. 
The problem studied in this paper has unique characteristics which makes classic algorithms such as \texttt{SH} not applicable, e.g., (i) the control is special and needs to be considered separately in exploration; (ii) the objective shifts from identifying the treatment that maximizes the mean reward to the treatment that maximizes the probability of passing validation, which reflects a mean-variance trade-off; (iii) multi-metric. Some of the aspects have been studied separately (while others, such as the control, are overlooked), but the combination received little attention. The algorithm under this general setting requires a systematic design instead of collecting separate solutions to each aspect. Our starting point is to maximize the probability of passing validation. We systematically and jointly analyzed the exploration and validation phases and reduced them to identify the treatment with the best mean-variance trade-off, i.e., the largest z value. By studying the z-value estimate with the variance equalization allocation principle, the relative variance naturally arises, and we combine the novel relative-variance-based allocation with a sequential halving framework to propose \texttt{SHRVar}. This algorithm is provably efficient and strictly generalizes well-known algorithms such as sequential halving in classic BAI. It also connects to real-world applications, as we cannot control the downstream validation, where the experimentation time and passing criteria are set agnostically.

\textbf{Importance Differences of Metrics.} It is common in many real-world applications, where some metrics (say revenue) are “must-haves,” and some (say engagement rate or downstream impact) are “nice-to-haves.” If some metrics are less important, one could adjust the parameters of the A/B test related to this metric to make it easier to pass, for example, increase the confidence level $\delta_i$ or decrease the required posterior $p_i$ for this metric. If some metrics are more important, one could also adjust the same parameter to make it harder to pass. There may be more than one treatment that surpasses all the constraints. Our current algorithm will pick the best one that is most distinguishable from all constraints, which may not necessarily be the one with the highest value in the “must-have” metric. This is a valuable extension for our future work.

\textbf{Technical Challenges and Contributions.} The high-level framework of designing \texttt{SHRVar} is rooted in the adaptive experimental design literature, where the sampling rule strives to equalize the variance of the estimators for different treatments, and the proof of Theorem~\ref{thm:estimate} follows a general framework presented~\cite{karnin13bai} to prove the theoretical performance of \texttt{SH}. However, many technical subtleties need to be addressed before applying this framework: 
\begin{enumerate}
    \item Performing BAI based on our objective $\min_i z_{a,i}$ has two unique challenges, i.e., not knowing the bottleneck metric $i_{a}^*$ makes it difficult to analyze the variance, and the estimator of each $z_{a,i}$ may also be correlated due to the common control. Therefore, we seek surrogates to decouple the estimators by equalizing a variance upper bound and extending from single-metric to multi-metric settings as discussed in Section~\ref{sec:shrvar}. 
    \item In the proof of Theorem 5.1, we generalize the idea from the $H_2$ complexity for \texttt{SH}~\citep{karnin13bai} for heterogeneous and multi-metric cases to form the notion of effective gap, relative variance, and set $S_c'$ to capture the trade-off between gap and variance of each treatment through the complexity term $H_3$. This step is not direct due to the heterogeneous nature of the problem. As a comparison, for \texttt{SHVar} proposed in~\citet{lalitha23variance}, which also studies BAI in heterogeneous variance cases, only uses the minimum reward gap and the sum of variances for all arms, and thus fails to capture the reward-variance trade-off in each arm. Moreover, due to the correlation between the estimators of treatments and metrics, we analyze the concentration events more carefully instead of taking a direct union bound over arms and metrics (which would result in additional dependence on $M$ and $A$ outside the exponential upper bound).
\end{enumerate}

\section{Proof of Proposition~\ref{prop:val}}\label{sec:appendix-proof-prop}
Recall the explicit forms of the posterior mean and variances $\hat{\Delta}_{\rv, i}$ and $\hat{\sigma}^2_{\rv, i}$ in the Bayesian validation as follows. The explicit expression can be derived easily from~\cite[equation 2]{kveton22bayesianfixbudget}
\begin{align*}
    \hat{\sigma}_{\rv, i}^2 = \left(\frac{ T_{\rm{v}} }{2\left(\sigma_{\hat{a},i}^2 + \sigma_{0,i}^2\right)} + \frac{1}{\tau_i^2} \right)^{-1}, \quad 
    \hat{\Delta}_{\rv, i} = \frac{T_{\rv}\hat{\sigma}_{\rv, i}^2}{2\left( \sigma_{\hat{a},i}^2 + \sigma_{0,i}^2 \right)} \left( \hat{\mu}_{\rv}(\hat{a},i) - \hat{\mu}_{\rv}(0,i)  \right) .
\end{align*}
Then, we prove the proposition. In both Bayesian and non-Bayesian validation, we fix a reward metric $i$ and characterize the probability that treatment $a$ will pass the validation A/B test against control, i.e., $\gE_{\rv, i}^\complement$ when $\hat{a}=a$. We first start with the non-Bayesian validation. Recall that passing the validation means the difference in the empirical mean should be large enough, i.e.,
\begin{align*}
    \gE_{\rv, i} = \left\{\hat{\mu}_{\rv}(a,i) - \hat{\mu}_{\rv}(0,i) \geq \Phi^{-1}(1 - \delta_i) \sqrt{\frac{\sigma_{a,i}^2 + \sigma_{0,i}^2 }{T_{\rv} / 2}} \right\}.
\end{align*}
Since the reward follows a normal distribution, the empirical estimate will also be normal, and therefore, the probability of failure can be characterized using the cumulative distribution function $\Phi(\cdot)$ of standard normal as follows:
\begin{align*}
    \prob\left( \gE_{\rv, i}^\complement \right) =& \prob\left( \hat{\mu}_{\rv}(a,i) - \hat{\mu}_{\rv}(0,i) \leq \Phi^{-1}(1 - \delta_i) \sqrt{\frac{\sigma_{a,i}^2 + \sigma_{0,i}^2 }{T_{\rv} / 2}} \right)\\
    =& \Phi\left( \frac{\Phi^{-1}(1 - \delta_i) \sqrt{\frac{\sigma_{a,i}^2 + \sigma_{0,i}^2 }{T_{\rv} / 2}} - (\mu_{a,i} - \mu_{0,i})}{\sqrt{\mathrm{Var}(\hat{\mu}_{\rv}(a,i) - \hat{\mu}_{\rv}(0,i))}} \right)\\
    =& \Phi\left(  \Phi^{-1}(1 - \delta_i) - \frac{\mu_{a,i} - \mu_{0,i}}{\sqrt{\sigma_{a,i}^2 + \sigma_{0,i}^2 }} \sqrt{T_{\rm{v}}/2} \right).
\end{align*}
By the monotonicity of $\Phi(\cdot)$, the treatment that minimizes the probability of failure should:
\begin{align*}
    \argmin_{a\in \gA / \{0\}} \max_i \prob\left( \gE_{\rv, i}^\complement \right) 
    =& \argmin_{a\in \gA / \{0\}} \max_i \Phi\left(  \Phi^{-1}(1 - \delta_i) - \frac{\mu_{a,i} - \mu_{0,i}}{\sqrt{\sigma_{a,i}^2 + \sigma_{0,i}^2 }} \sqrt{T_{\rm{v}} / 2} \right)\\
    =& \argmin_{a\in \gA / \{0\}} \max_i \left(\Phi^{-1}(1 - \delta_i) - \frac{\mu_{a,i} - \mu_{0,i}}{\sqrt{\sigma_{a,i}^2 + \sigma_{0,i}^2 }} \sqrt{T_{\rm{v}} / 2}\right)\\
    =& \argmax_{a\in \gA / \{0\}} \min_i \left(\frac{\mu_{a,i} - \mu_{0,i}}{\sqrt{\sigma_{a,i}^2 + \sigma_{0,i}^2 }} + \frac{\Phi^{-1}(\delta_i)}{ \sqrt{T_{\rm{v}} / 2}}\right).
\end{align*}
where the last step uses the anti-symmetry of $\Phi(\cdot)$, i.e., $\Phi(x) = 1-\Phi(-x)$. Therefore, the validation constant in the non-Bayesian test is:
\begin{align*}
    \xi_{a,i} = \frac{\Phi^{-1}(\delta_i)}{ \sqrt{T_{\rm{v}} / 2}}.
\end{align*}
Next, we derive the validation constant $\xi_{a,i}$ in the Bayesian validation. We also fix a reward metric $i$ and characterize the probability that treatment $a$ will pass the validation A/B test against control, i.e., $\gE_{\rv, i}^\complement$ when $\hat{a}=a$. Recall that the criteria for passing the validation are that the posterior probability of $\{\mu_{a,i}>\mu_{0,i}\}$ is larger than a given threshold $q_i$. Since the posterior of $(\mu_{a,i} - \mu_{0,i})$ is a normal distribution, i.e., $\nu_{\rv, i} = \gN(\hat{\Delta}_{\rv, i}, \hat{\sigma}^2_{\rv, i})$, this is equivalent to that the cumulative distribution function of $\nu_{\rv, i}$, evaluated at $0$, is smaller than $1 - q_i$, which is also equivalent to the following event:
\begin{align*}
    \left\{\Phi\left( \frac{-\hat{\Delta}_{\rv, i}}{\hat{\sigma}_{\rv, i}}\right) \leq 1-q_i \right\} = \left\{ \frac{-\hat{\Delta}_{\rv, i}}{\hat{\sigma}_{\rv, i}} \leq \Phi^{-1}(1-q_i) \right\}.
\end{align*}
where the equality is from the monotonicity of $\Phi(\cdot)$. Substitute the expression of the posterior mean $\hat{\Delta}_{\rv, i}$ and variance $\hat{\sigma}^2_{\rv, i}$, we have:
\begin{align*}
    \gE_{\rv, i} =& \left\{ \frac{\frac{T_{\rv}\hat{\sigma}_{\rv, i}^2}{2\left( \sigma_{a,i}^2 + \sigma_{0,i}^2 \right)}\left( \hat{\mu}_{\rv}(a,i) - \hat{\mu}_{\rv}(0,i)  \right)}{\hat{\sigma}_{\rv, i}} \geq - \Phi^{-1}(1-q_i)\right\}\\
    =& \left\{  \hat{\mu}_{\rv}(a,i) - \hat{\mu}_{\rv}(0,i)  \geq \frac{2\left( \sigma_{a,i}^2 + \sigma_{0,i}^2 \right)}{T_{\rv}\hat{\sigma}_{\rv, i}} \Phi^{-1}(q_i)\right\}.
\end{align*}
Similarly, by the normal nature of the empirical estimate of reward, we can express the probability of failure in validation as follows:
\begin{align*}
    \prob\left( \gE_{\rv, i} ^\complement \right) =& \prob\left( \hat{\mu}_{\rv}(a,i) - \hat{\mu}_{\rv}(0,i)  
    \leq \frac{2\left( \sigma_{a,i}^2 + \sigma_{0,i}^2 \right)}{T_{\rv}\hat{\sigma}_{\rv, i}} \Phi^{-1}(q_i) \right)\\
    = & \Phi\left( \frac{\Phi^{-1}(q_i)\frac{2\left( \sigma_{a,i}^2 + \sigma_{0,i}^2 \right)}{T_{\rv}\hat{\sigma}_{\rv, i}} - (\mu_{a,i} - \mu_{0,i})}{\sqrt{\mathrm{Var}(\hat{\mu}_{\rv}(a,i) - \hat{\mu}_{\rv}(0,i))}} \right)\\
    = & \Phi\left( \frac{\Phi^{-1}(q_i)\frac{2\left( \sigma_{a,i}^2 + \sigma_{0,i}^2 \right)}{T_{\rv}\hat{\sigma}_{\rv, i}} - (\mu_{a,i} - \mu_{0,i})}{\sqrt{\frac{\sigma_{a,i}^2 + \sigma_{0,i}^2}{T_{\rm{v}} / 2}}} \right)\\
    =& \Phi\left(  \Phi^{-1}(q_i) \sqrt{\frac{2}{T_{\rm{v}}} \frac{\sigma_{a,i}^2 + \sigma_{0,i}^2}{\hat{\sigma}^2_{\rv, i}}} - \frac{\mu_{a,i} - \mu_{0,i}}{\sqrt{\sigma_{a,i}^2 + \sigma_{0,i}^2 }} \sqrt{T_{\rm{v}} / 2} \right).
\end{align*}
where we substitute the expression of $\hat{\sigma}^2_{\rv, i}$ as follows:
\begin{align*}
    \hat{\sigma}_{\rv, i}^2 = \left(\frac{ T_{\rm{v}} }{2\left(\sigma_{\hat{a},i}^2 + \sigma_{0,i}^2\right)} + \frac{1}{\tau_i^2} \right)^{-1},
\end{align*}
And then we have:
\begin{align*}
    \prob\left( \gE_{\rv, i} ^\complement \right) =& \Phi\left(  \Phi^{-1}(q_i) \sqrt{ 1 + \frac{2}{T_{\rm{v}}} \frac{\sigma_{a,i}^2 + \sigma_{0,i}^2}{\tau_i^2}} - \frac{\mu_{a,i} - \mu_{0,i}}{\sqrt{\sigma_{a,i}^2 + \sigma_{0,i}^2 }} \sqrt{T_{\rm{v}} / 2} \right).
\end{align*}
Therefore, similarly by the monotonicity and anti-symmetry of $\Phi(\cdot)$, our objective becomes:
\begin{align*}
    \argmin_{a\in \gA / \{0\}} \max_i \prob\left( \gE_{\rv, i}^\complement \right)  =  \argmax_{a\in \gA / \{0\}} \min_i \left(\frac{\mu_{a,i} - \mu_{0,i}}{\sqrt{\sigma_{a,i}^2 + \sigma_{0,i}^2 }} + \frac{\Phi^{-1}(1 - q_i)}{ \sqrt{T_{\rm{v}} / 2}}\sqrt{ 1 + \frac{2}{T_{\rm{v}}} \frac{\sigma_{a,i}^2 + \sigma_{0,i}^2}{\tau_i^2}}\right), 
\end{align*}
where the validation constant in the Bayesian test is:
\begin{align*}
    \xi_{a,i} = \frac{\Phi^{-1}(1-q_i)}{\sqrt{T_{\rv} / 2}} \sqrt{1 + \frac{\sigma_{a,i}^2 + \sigma_{0,i}^2}{\tau_i^2 (T_v / 2)}}.
\end{align*}

\section{Proof of Theorem~\ref{thm:estimate}}\label{sec:proof-estimate}
Suppose at stage $s$ and fix the active treatment set $\gA_s$, we first analyze the probability that the best treatment is eliminated. This can only happen if there exists at least $|\gA_s|$ arms with a larger minimum empirical $z$ value than the best treatment, i.e.,
\begin{align*}
    \left\{\sum_{a\in \gA_s} \mathbbm{1}_{\min_i \hat{z}_s(a,i) > \min_j \hat{z}_s(a^*,j)} \geq \frac{|\gA_s|}{2} \right\}.
\end{align*}
Therefore, it is essential to study the event $\{\min_i \hat{z}_s(a,i) > \min_j \hat{z}_s(a^*,j)\}$ for all sub-optimal arms. We first present the following lemma, which bounds the probability that the minimum z-value estimate of a suboptimal treatment $a$ would be larger than the best treatment.

\begin{lemma}
    Fix a stage $s$ and the active treatment set $\gA_s$, we have:
    \begin{align*}
        &\prob\Big(\min_i \hat{z}_s(a,i) > \min_j \hat{z}_s(a^*,j)\Big)\\
        \leq & 2 M \exp\Bigg( -\frac{T}{2 \left( \rho_{\gA_s,\Sigma} + \lambda_{\gA_s,\Sigma} \right)^2\log_2 A} 
        \cdot \min_{j\in [M]} \max_{z\leq z_{a^*,j}} \min\Bigg\{\frac{[z_{a^*,j} - z]_+^2}{\kappa^2_{\gA_s, a^*,j}}, \sum_{i:z_{a,i}< z_{a^*,j}}\frac{[z - z_{a,i}]_+^2}{ \kappa^2_{\gA_s, a,i}} \Bigg\} \Bigg),
    \end{align*}
    where $(\rho_{\gA_s, \Sigma}, \lambda_{\gA_s,\Sigma})$ measures the relative variance of the active treatment set $\gA_s$, i.e.,
    \begin{align*}
    \rho_{\gA_s,\Sigma}^2 =  \sum_{a\in \mathcal{A}_s} \max_i\rho_{a,i}^2, \quad \lambda_{\gA_s,\Sigma}^2 = \max_{a \in \gA_s} \max_i \lambda^2_{a,i};
\end{align*}
    and $\kappa^2_{\gA_s, a,i}$ measures the heterogeneity of relative variance of arm $a$ and metric $i$, i.e.,
    \begin{align*}
        \kappa^2_{\gA_s, a,i} = \frac{\rho_{a,i}^2}{\max_i\rho_{a,i}^2} \frac{\rho_{\gA_s, \Sigma} }{\rho_{\gA_s, \Sigma} + \lambda_{\gA_s,\Sigma}} + \frac{\lambda_{a,i}^2}{\max_{a \in \gA_s} \max_i \lambda^2_{a,i} }\frac{\lambda_{\gA_s,\Sigma}}{\rho_{\gA_s, \Sigma} + \lambda_{\gA_s,\Sigma}}.
    \end{align*}
\end{lemma}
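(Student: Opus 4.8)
The plan is to control the bad event in two stages: first reduce the comparison of the two minima to a union over the metrics of $a^*$, and then, for each such metric, decouple the two treatments through an intermediate threshold so that the shared control randomness never has to be handled jointly.

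First I would record the first two moments of every $z$-value estimate. Since $\xi_{a,i}$ is deterministic and $\hat{\mu}_s(a,i)$, $\hat{\mu}_s(0,i)$ are independent sub-Gaussian averages, $\hat{z}_s(a,i)$ is sub-Gaussian with mean $z_{a,i}$ and variance proxy $\rho_{a,i}^2/N_s(a) + \lambda_{a,i}^2/N_s(0)$. Substituting the allocation of Lines 3--4 and using $\lceil\log_2 A\rceil = \log_2 A$ (as $A$ is a power of two), this collapses to
\[
    \mathrm{Var}\left(\hat{z}_s(a,i)\right) = \frac{\rho_{a,i}^2}{N_s(a)} + \frac{\lambda_{a,i}^2}{N_s(0)} = \frac{\left(\rho_{\gA_s,\Sigma}+\lambda_{\gA_s,\Sigma}\right)^2 \kappa_{\gA_s,a,i}^2 \log_2 A}{T},
\]
which is exactly where $\kappa_{\gA_s,a,i}$ and the normalizing factor $(\rho_{\gA_s,\Sigma}+\lambda_{\gA_s,\Sigma})^2$ in the stated exponent originate. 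I would also note that, because distinct metrics are independent, the family $\{\hat{z}_s(a,i)\}_{i\in[M]}$ is independent for each fixed arm.

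Next comes the reduction and the decoupling. Letting $j_0$ be the (random) minimizing metric for $a^*$, the bad event forces $\min_i \hat{z}_s(a,i) > \hat{z}_s(a^*,j_0)$, so
\[
    \left\{\min_i \hat{z}_s(a,i) > \min_j \hat{z}_s(a^*,j)\right\} \subseteq \bigcup_{j=1}^M \left\{\min_i \hat{z}_s(a,i) > \hat{z}_s(a^*,j)\right\},
\]
and a union bound converts the per-$j$ estimates into a sum of $M$ terms. For each fixed $j$ and any threshold $z$,
\[
    \left\{\min_i \hat{z}_s(a,i) > \hat{z}_s(a^*,j)\right\} \subseteq \left\{\hat{z}_s(a^*,j) < z\right\} \cup \left\{\min_i \hat{z}_s(a,i) > z\right\},
\]
since if $\hat{z}_s(a^*,j) \geq z$ then $\min_i \hat{z}_s(a,i) > z$. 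This is the crucial step: the two right-hand events separate $a^*$ from $a$, so their probabilities are bounded without ever confronting the correlation induced by the common control mean; only a union bound, valid under arbitrary dependence, is invoked. I would then bound each piece by a sub-Gaussian tail, writing $c = T / (2(\rho_{\gA_s,\Sigma}+\lambda_{\gA_s,\Sigma})^2\log_2 A)$: for $z \leq z_{a^*,j}$ one gets $\prob(\hat{z}_s(a^*,j) < z) \leq \exp(-c\,[z_{a^*,j}-z]_+^2/\kappa_{\gA_s,a^*,j}^2)$, while independence across metrics gives $\prob(\min_i \hat{z}_s(a,i) > z) = \prod_i \prob(\hat{z}_s(a,i) > z) \leq \exp(-c\sum_{i:z_{a,i}<z_{a^*,j}}[z-z_{a,i}]_+^2/\kappa_{\gA_s,a,i}^2)$, the index set being precisely the metrics that contribute for $z \leq z_{a^*,j}$. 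Combining via $e^{-X}+e^{-Y}\leq 2e^{-\min\{X,Y\}}$, optimizing the free threshold (which produces the $\max_{z\leq z_{a^*,j}}$), and finally summing over $j$ through $\sum_j 2e^{-cE_j} \leq 2M e^{-c\min_j E_j}$ (which produces the $\min_j$ inside the exponent and the $2M$ prefactor) yields the claim.

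I expect the conceptual crux to be exactly this decoupling: because $\hat{z}_s(a,i)$ and $\hat{z}_s(a^*,i)$ share the control average $\hat{\mu}_s(0,i)$, a naive joint analysis would entangle every treatment with $a^*$, and the threshold split is what sidesteps this cleanly. A secondary, purely technical point is that $N_s(a)$ and $N_s(0)$ carry floor operations, so the variance identity above holds only up to the $\lfloor\cdot\rfloor$ rounding; this is handled in the standard way by lower-bounding $\lfloor x\rfloor$ and absorbing the harmless constant, and does not affect the leading exponent.
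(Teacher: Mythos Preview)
Your proposal is correct and follows essentially the same approach as the paper: the paper likewise writes the bad event as $\bigcup_j \bigcap_i \{\hat{z}_s(a,i) > \hat{z}_s(a^*,j)\}$, introduces an arbitrary threshold $z \leq z_{a^*,j}$ to split each comparison into $\{\hat{z}_s(a^*,j) < z\} \cup \{\hat{z}_s(a,i) > z\}$, exploits independence across metrics to turn the intersection for arm $a$ into a product of Hoeffding bounds, computes the same variance identity $\mathrm{Var}(\hat{z}_s(a,i)) = (\rho_{\gA_s,\Sigma}+\lambda_{\gA_s,\Sigma})^2 \kappa_{\gA_s,a,i}^2 \log_2 A / T$, and then optimizes over $z$ and union-bounds over $j$. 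The only cosmetic difference is that the paper first discards the metrics with $z_{a,i} \geq z_{a^*,j}$ from the intersection before decoupling, whereas you keep all $i$ and let the $[z-z_{a,i}]_+$ truncation zero them out; the resulting bound is identical.
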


\begin{proof}
We first analyze the event and separate the reward metrics, i.e.,
\begin{align*}
    \{\min_i \hat{z}_s(a,i) > \min_j \hat{z}_s(a^*,j)\} =& \bigcup_{j\in [M]}\left\{ \min_i \hat{z}_s(a,i) > \hat{z}_s(a^*,j)\right\}
    = \bigcup_{j\in [M]} \bigcap_{i\in[M]} \left\{ \hat{z}_s(a,i) > \hat{z}_s(a^*,j)\right\}.
\end{align*}
Now, we analyze the intersection of events for a fixed $j$. First, we ignore the reward metrics $i$ where $z_{a,i} \geq z_{a^*,j}$, and upper bound the event as:
\begin{align*}
    \bigcap_{i\in[M]} \left\{ \hat{z}_s(a,i) > \hat{z}_s(a^*,j)\right\} \subset \bigcap_{i: z_{a,i} < z_{a^*,j}} \left\{ \hat{z}_s(a,i) > \hat{z}_s(a^*,j)\right\}.
\end{align*}
For an arbitrary constant $z \leq z_{a^*,j}$, we could separate the two estimates as follows:
\begin{align*}
    \left\{ \hat{z}_s(a,i) > \hat{z}_s(a^*,j)\right\} 
    = & \left\{ \hat{z}_s(a,i) > \hat{z}_s(a^*,j), \hat{z}_s(a^*,j) < z\right\} \cup \left\{ \hat{z}_s(a,i) > \hat{z}_s(a^*,j), \hat{z}_s(a^*,j) \geq z\right\}\\
    \subset & \left\{ \hat{z}_s(a^*,j) < z\right\} \cup \left\{ \hat{z}_s(a,i) > z\right\}.
\end{align*}
So we can upper bound the intersection events as follows:
\begin{align*}
    \bigcap_{{i: z_{a,i} < z_{a^*,j}}} \left\{ \hat{z}_s(a,i) > \hat{z}_s(a^*,j)\right\} 
    \subset & \bigcap_{{i: z_{a,i} < z_{a^*,j}}} \left(\left\{ \hat{z}_s(a^*,j) < z\right\} \cup \left\{ \hat{z}_s(a,i) > z\right\}\right)\\
    =& \left\{ \hat{z}_s(a^*,j) < z\right\} \cup \left(\bigcap_{i: z_{a,i} < z_{a^*,j}}\left\{ \hat{z}_s(a,i) > z\right\}\right).
\end{align*}
Then, we evaluate the probability of this event using a union bound as follows:
\begin{align*}
    \prob\left( \bigcap_{i: z_{a,i} < z_{a^*,j}} \left\{ \hat{z}_s(a,i) > \hat{z}_s(a^*,j)\right\} \right) 
    \leq & \prob\left(  \hat{z}_s(a^*,j) < z\right) +  \prob\left( \bigcap_{i: z_{a,i} < z_{a^*,j}}\left\{ \hat{z}_s(a,i) > z\right\}  \right)\\
    =& \prob\left(  \hat{z}_s(a^*,j) < z\right) + \prod_{i: z_{a,i} < z_{a^*,j}}\prob \left( \hat{z}_s(a,i) > z \right),
\end{align*}
where the second equality uses the independence among metrics. 
For an arbitrary constant $z$, we can bound both probabilities through Hoeffding's inequality as:
\begin{align*}
    \prob\left(  \hat{z}_s(a^*,j) < z\right) \leq \exp\left( -\frac{[z_{a^*,j} - z]_+^2}{2 \mathrm{Var}(\hat{z}_s(a^*,j))} \right),
\end{align*}
where the variance of the $z$ value estimate can be computed as:
\begin{align*}
    \mathrm{Var}(\hat{z}_s(a^*,j))
    =& \frac{\mathrm{Var}(\hat{\mu}_s(a^*,j)) + \mathrm{Var}(\hat{\mu}_s(0,j))}{\sigma_{a^*,j}^2 + \sigma_{0,j}^2}\\
    =& \left( \sqrt{\sum_{a\in \mathcal{A}_s} \max_i\rho_{a,i}^2} + \max_{a \in \gA_s} \max_i \lambda_{a,i} \right) \frac{\log_2 A}{T} \\
    &\cdot\left( \frac{\rho_{a^*,j}^2}{\max_i\rho_{a^*,i}^2} \sqrt{\sum_{a\in \mathcal{A}_s} \max_i\rho_{a,i}^2} + \frac{\lambda_{a^*,j}^2}{\max_{a \in \gA_s} \max_i \lambda_{a,i}} \right)\\
    =& \left( \rho_{\gA_s,\Sigma} + \lambda_{\gA_s,\Sigma} \right) \frac{\log_2 A}{T}\left( \frac{\rho_{a^*,j}^2}{\max_i\rho_{a^*,i}^2} \rho_{\gA_s,\Sigma} + \frac{\lambda_{a^*,j}^2}{\max_{a \in \gA_s} \max_i \lambda^2_{a,i}} \lambda_{\gA_s,\Sigma}\right)\\
    = & \left( \rho_{\gA_s,\Sigma} + \lambda_{\gA_s,\Sigma} \right)^2 \frac{\log_2 A}{T} \kappa^2_{\gA_s, a^*,j}.
\end{align*}
So the probability is bounded as follows:
\begin{align*}
    \prob\left(  \hat{z}_s(a^*,j) < z\right) \leq \exp\left( -\frac{[z_{a^*,j} - z]_+^2}{2 \left( \rho_{\gA_s,\Sigma} + \lambda_{\gA_s,\Sigma} \right)^2 \kappa^2_{\gA_s, a^*,j}} \frac{T}{\log_2 A} \right),
\end{align*}
Similarly, for the probability regarding treatment $a$, we could also bound them as:
\begin{align*}
    \prob\left(  \hat{z}_s(a,i) > z\right) \leq \exp\left( -\frac{[z- z_{a,i}]_+^2}{2 \left( \rho_{\gA_s,\Sigma} + \lambda_{\gA_s,\Sigma} \right)^2 \kappa^2_{\gA_s, a,i}} \frac{T}{\log_2 A} \right).
\end{align*}
Therefore, the intersection of events is bounded using independence as follows:
\begin{align*}
    &\prob\left( \bigcap_{i: z_{a,i} < z_{a^*,j}} \left\{ \hat{z}_s(a,i) > \hat{z}_s(a^*,j)\right\} \right)\\
    \leq& \exp\left( -\frac{T}{2 \left( \rho_{\gA_s,\Sigma} + \lambda_{\gA_s,\Sigma} \right)^2\log_2 A} \frac{[z_{a^*,j} - z]_+^2}{\kappa^2_{\gA_s, a^*,j}} \right)\\
    &+ \exp\left( -\frac{T}{2 \left( \rho_{\gA_s,\Sigma} + \lambda_{\gA_s,\Sigma} \right)^2\log_2 A} \sum_{i:z_{a,i}< z_{a^*,j}}\frac{[z - z_{a,i}]_+^2}{ \kappa^2_{\gA_s, a,i}}  \right)\\
    \leq& 2 \exp\left( -\frac{T}{2 \left( \rho_{\gA_s,\Sigma} + \lambda_{\gA_s,\Sigma} \right)^2\log_2 A} \min\left\{\frac{[z_{a^*,j} - z]_+^2}{\kappa^2_{\gA_s, a^*,j}}, \sum_{i:z_{a,i}< z_{a^*,j}}\frac{[z - z_{a,i}]_+^2}{ \kappa^2_{\gA_s, a,i}} \right\}  \right).
\end{align*}
Since $z$ is arbitrary chosen satisfying $z\leq z_{a^*, j}$. Therefore, choosing $z$ to minimize the exponent, we have:
\begin{align*}
    &\prob\left( \bigcap_{i: z_{a,i} < z_{a^*,j}} \left\{ \hat{z}_s(a,i) > \hat{z}_s(a^*,j)\right\} \right) \\
    \leq & 2 \exp\left( -\frac{T}{2 \left( \rho_{\gA_s,\Sigma} + \lambda_{\gA_s,\Sigma} \right)^2\log_2 A} \max_{z\leq z_{a^*,j}} \min\left\{\frac{[z_{a^*,j} - z]_+^2}{\kappa^2_{\gA_s, a^*,j}}, \sum_{i:z_{a,i}< z_{a^*,j}}\frac{[z - z_{a,i}]_+^2}{ \kappa^2_{\gA_s, a,i}} \right\}  \right).
\end{align*}
Finally, from a union bound, we will be able to bound the probability that a sub-optimal treatment has a better minimum z-value estimate than the best treatment as follows:
\begin{align*}
    &\prob\Big(\min_i \hat{z}_s(a,i) > \min_j \hat{z}_s(a^*,j)\Big)\\
    \leq& \sum_{j\in[M]} \prob\left( \min_i \hat{z}_s(a,i) > \hat{z}_s(a^*,j) \right)\\
    \leq &  2\sum_{j\in[M]} \exp\Bigg( -\frac{T}{2 \left( \rho_{\gA_s,\Sigma} + \lambda_{\gA_s,\Sigma} \right)^2\log_2 A}
    \max_{z\leq z_{a^*,j}} \min\Bigg\{\frac{[z_{a^*,j} - z]_+^2}{\kappa^2_{\gA_s, a^*,j}}, \sum_{i:z_{a,i}< z_{a^*,j}}\frac{[z - z_{a,i}]_+^2}{ \kappa^2_{\gA_s, a,i}} \Bigg\}  \Bigg)\\
    \leq & 2 M \exp\Bigg( -\frac{T}{2 \left( \rho_{\gA_s,\Sigma} + \lambda_{\gA_s,\Sigma} \right)^2\log_2 A} 
    \min_{j\in [M]} \max_{z\leq z_{a^*,j}} \min\Bigg\{\frac{[z_{a^*,j} - z]_+^2}{\kappa^2_{\gA_s, a^*,j}}, \sum_{i:z_{a,i}< z_{a^*,j}}\frac{[z - z_{a,i}]_+^2}{ \kappa^2_{\gA_s, a,i}} \Bigg\}  \Bigg).
\end{align*}
This concludes the proof.
\end{proof}
Before we proceed, we first take a deeper look at the exponent of the probability bound, specifically, the instance-dependent hardness constant, which measures how difficult to distinguish a sub-optimal treatment from the optimal treatment as follows:
\begin{align*}
    \min_{j\in [M]} \max_{z\leq z_{a^*,j}} \min\left\{\frac{[z_{a^*,j} - z]_+^2}{\kappa^2_{\gA_s, a^*,j}}, \sum_{i:z_{a,i}< z_{a^*,j}}\frac{[z - z_{a,i}]_+^2}{ \kappa^2_{\gA_s, a,i}} \right\}.
\end{align*}
We can replace the summation with any reward metric $i$, which results in the following lower bound:
\begin{align*}
    \min_{j\in [M]} \max_{i\in [M]} \max_{z\leq z_{a^*,j}}\min\left\{\frac{[z_{a^*,j} - z]_+^2}{\kappa^2_{\gA_s, a^*,j}}, \frac{[z - z_{a,i}]_+^2}{ \kappa^2_{\gA_s, a,i}} \right\} = \min_{j\in [M]} \max_{i\in [M]} \frac{[z_{a^*,j} - z_{a,i}]_+^2}{(\kappa_{\gA_s, a,i} + \kappa_{\gA_s, a^*,j})^2}.
\end{align*}
For simplicity of notation, we use the complexity notation as follows:
\begin{align*}
    D_{\gA_s, a} = \min_{i\in [M]} D_{\gA_s, a, i} = \min_{i\in [M]} \max_{j\in [M]} \frac{[z_{a^*,i} - z_{a,j}]_+^2}{(\kappa_{\gA_s, a,j} + \kappa_{\gA_s, a^*,i})^2}.
\end{align*}

Then, we bound the probability that the algorithm makes a mistake and excludes the best arm on stage $s$, summarized in the following lemma. 

\begin{lemma}
    Fix a stage $s$ and suppose the best treatment $a^*$ has not been eliminated in previous stages, the probability that it is eliminated at this stage is bounded as follows:
    \begin{align*}
        \prob\left(a^* \notin \gA_{s+1}\right)  \leq 6 M \exp\left( -\frac{T}{2\log_2 A} \min_{\gS \subset \gA: a^*\in \gS, |\gS| =\frac{|\gA|}{2^{s-1}} } \frac{\min_{a\in\gS'_{\ervc}}D^2_{\gS, a}}{ \left( \rho_{\gS,\Sigma} + \lambda_{\gS,\Sigma} \right)^2}\right),
    \end{align*}
    where $\gS$ is an arbitrary subset of all treatments that contains the best arm with a fixed cardinality, and $\gS'_{\ervc}$ is the subset of $\gS$ defined as follows:
    \begin{itemize}
        \item When $|\gS|\geq 4$, $\gS'_{\ervc}$ is the subset excluding at most $\frac{1}{4}|\gS|$ treatments with minimum $D_{\gS, a}$.
        \item When $|\gS|\leq 3$, $\gS'_{\ervc}$ is the subset of $\gS$ containing all sub-optimal treatments.
    \end{itemize}
\end{lemma}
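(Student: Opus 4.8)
The plan is to condition on the realized active set $\gA_s$, reduce the elimination event to a counting statement, and then combine the per-arm bound from the previous lemma with Markov's inequality through a pigeonhole argument. Since the half of the treatments with the largest $\min_i \hat z_s(a,i)$ are retained, the best treatment $a^*$ is dropped only if at least $\lceil |\gA_s|/2\rceil$ of the other active treatments satisfy $\min_i \hat z_s(a,i) > \min_j \hat z_s(a^*,j)$. Writing $X_a$ for the indicator of this event, the previous lemma together with the lower bound $D_{\gA_s,a}^2$ on its exponent gives $\E[X_a]\le 2M\exp(-\beta_{\gA_s}D_{\gA_s,a}^2)$ with $\beta_{\gA_s}=\frac{T}{2(\rho_{\gA_s,\Sigma}+\lambda_{\gA_s,\Sigma})^2\log_2 A}$, so that $\{a^*\notin\gA_{s+1}\}\subseteq\{\sum_{a\ne a^*}X_a\ge |\gA_s|/2\}$.

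The crux is to control this sum without paying for the few suboptimal treatments whose gap $D_{\gA_s,a}$ is tiny. First I would observe that $D_{\gA_s,a^*}=0$ (take $i=j$ to be the bottleneck metric of $a^*$, making the numerator vanish), so $a^*$ lies among the $\frac14|\gA_s|$ treatments of smallest effective gap that are excluded to form $\gS'_{\ervc}$; hence $\gS'_{\ervc}$ contains only suboptimal treatments and $|\gS'_{\ervc}|=\frac34|\gA_s|$ (an integer when $|\gA_s|\ge 4$, since $A$ is a power of $2$). The pigeonhole step is then: among the treatments outside $\gS'_{\ervc}$, at most $\frac14|\gA_s|-1$ are suboptimal and can beat $a^*$, so if at least $|\gA_s|/2$ treatments beat $a^*$, then at least $\frac14|\gA_s|+1\ge\frac14|\gA_s|$ of them lie in $\gS'_{\ervc}$. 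Applying Markov's inequality to $\sum_{a\in\gS'_{\ervc}}X_a$ with threshold $\frac14|\gA_s|$, bounding each $\E[X_a]$ by $2M\exp(-\beta_{\gA_s}\min_{a\in\gS'_{\ervc}}D_{\gA_s,a}^2)$, and using $|\gS'_{\ervc}|=\frac34|\gA_s|$ yields the factor $\frac{4}{|\gA_s|}\cdot\frac34|\gA_s|\cdot 2M=6M$ exactly. This is the delicate step: the $\frac14$-exclusion is chosen precisely so that the retained fraction $\frac34$ and the Markov threshold $\frac14$ combine to the stated constant, and it is the main place an off-by-a-constant error could creep in.

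Next I would dispatch the small cases $|\gA_s|\le 3$, where excluding a quarter of the arms is not meaningful and $\gS'_{\ervc}$ is taken to be all suboptimal treatments. For $|\gA_s|=2$ the elimination event is simply $\{X_a=1\}$ for the unique suboptimal arm, bounded by $2M\exp(-\beta_{\gA_s}\min_{a\in\gS'_{\ervc}}D_{\gA_s,a}^2)$; for $|\gA_s|=3$ elimination requires both suboptimal arms to beat $a^*$, so a union bound gives at most $4M\exp(-\beta_{\gA_s}\min_{a\in\gS'_{\ervc}}D_{\gA_s,a}^2)$. Both are dominated by the claimed $6M\exp(\cdots)$, so the single constant $6M$ covers all cases.

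Finally, the conditional bound depends on the random set $\gA_s$ only through $\gS'_{\ervc}$ and $(\rho_{\gA_s,\Sigma}+\lambda_{\gA_s,\Sigma})$; because the stage-$s$ samples are fresh and independent of how $\gA_s$ was formed, it holds for every admissible realization $\gS$ of cardinality $|\gA|/2^{s-1}$ containing $a^*$. Taking the worst (largest) such conditional probability, equivalently minimizing the exponent over all such $\gS$, removes the conditioning and produces the stated deterministic bound, using $\beta_{\gS}\min_{a\in\gS'_{\ervc}}D_{\gS,a}^2=\frac{T}{2\log_2 A}\cdot\frac{\min_{a\in\gS'_{\ervc}}D^2_{\gS,a}}{(\rho_{\gS,\Sigma}+\lambda_{\gS,\Sigma})^2}$. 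I expect the pigeonhole and constant-chasing in the Markov step to be the only real obstacle; everything else is assembly of results already in hand.
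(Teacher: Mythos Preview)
Your proposal is correct and follows essentially the same approach as the paper's proof: condition on the realized active set, use a pigeonhole argument to show that elimination of $a^*$ forces many treatments in $\gS'_{\ervc}$ to beat $a^*$, apply Markov's inequality to the sum of indicators with the per-arm bound from the preceding lemma, and then take the worst case over admissible $\gS$. The only cosmetic difference is that the paper expresses the Markov threshold as $\tfrac{1}{3}|\gA_s'|$ rather than your $\tfrac{1}{4}|\gA_s|$; since $|\gA_s'|=\tfrac{3}{4}|\gA_s|$ these are identical and both yield the constant $6M$. Your explicit verification that $D_{\gA_s,a^*}=0$, your separate treatment of the small cases $|\gA_s|\le 3$, and your remark about the freshness of stage-$s$ samples are all sound elaborations that the paper leaves implicit.
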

\begin{proof}
Fix an arbitrary $\gA_s$ which contains the best treatment $a^*$. Suppose it has at least $4$ treatments. Let $\gA_s'$ as defined be the set of treatments in $\gA_s$ excluding at most $\frac{1}{4}|\gA_s|$ treatments with minimum $D_{\gA_s, a}$. If the best treatment is eliminated on round $s$, it must be the case that at least half of the treatments in $\gA_s$ have an empirical minimum $z$ value larger than the best treatment. In particular, at least $\frac{1}{3}$ of the treatments in $\gA_s'$ will have a larger empirical minimum $z$ value. So we have:
\begin{align*}
    \E\left[\sum_{a\in \gA_s'} \mathbbm{1}_{\min_i \hat{z}_s(a,i) > \min_j \hat{z}_s(a^*,j)} \right] 
    = &\sum_{a\in \gA_s'} \prob\left( \min_i \hat{z}_s(a,i) > \min_j \hat{z}_s(a^*,j) \right)\\
    \leq & 2 M \sum_{a\in \gA_s'} \exp\left( -\frac{T}{2\log_2 A} \frac{D^2_{\gA_s, a}}{ \left( \rho_{\gA_s,\Sigma} + \lambda_{\gA_s,\Sigma} \right)^2}\right)\\
    \leq & 2 M|\gA_s'| \exp\left( -\frac{T}{2\log_2 A} \frac{\min_{a\in\gA_s'}D^2_{\gA_s, a}}{ \left( \rho_{\gA_s,\Sigma} + \lambda_{\gA_s,\Sigma} \right)^2}\right).
\end{align*}
Therefore, by Markov's inequality, we will be able to establish an upper bound for the event that the best treatment is eliminated:
\begin{align*}
    \prob\left( a^* \notin \gA_{s+1} \right) 
    =& \prob\left( \sum_{a\in \gA_s'} \mathbbm{1}_{\min_i \hat{z}_s(a,i) > \min_j \hat{z}_s(a^*,j)} \geq \frac{|\gA_s'|}{3} \right)\\
    \leq & \frac{3}{|\gA_s'|} \E\left[\sum_{a\in \gA_s'} \mathbbm{1}_{\min_i \hat{z}_s(a,i) > \min_j \hat{z}_s(a^*,j)} \right]\\
    \leq & 6 M \exp\left( -\frac{T}{2\log_2 A} \frac{\min_{a\in\gA_s'}D^2_{\gA_s, a}}{ \left( \rho_{\gA_s,\Sigma} + \lambda_{\gA_s,\Sigma} \right)^2}\right).
\end{align*}
However, this bound is valid for a fixed $\gA_s$ which contains the best treatment, and is not valid if $\gA_s$ is a stochastic set. Therefore, we take a maximum over the probability upper bound shown above to make it compatible with stochastic sets as follows:
\begin{align*}
    \prob\left( a^* \notin \gA_{s+1} \right)  \leq 6 M \exp\left( -\frac{T}{2\log_2 A} \min_{\gS: a^*\in \gS, |\gS| =\frac{|\gA|}{2^{s-1}} } \frac{\min_{a\in\gS'_{\ervc}}D^2_{\gS, a}}{ \left( \rho_{\gS,\Sigma} + \lambda_{\gS,\Sigma} \right)^2}\right),
\end{align*}
where $\gS_{\ervc}'$ is defined in the statement of lemma.
\end{proof}
Finally, to prove Theorem~\ref{thm:estimate}, the best treatment $a^*$ will only be recommended at the end of the exploration phase if it survives all $\log_2 A$ stages. With a union bound, we have:
\begin{align*}
    \prob\left( \hat{a}\neq a^* \right) 
    \leq & \sum_{s=1}^{\log_2 A} \prob\left( a^* \text{ is eliminated at stage $s$} \right)\\
    \leq & 6 M \sum_{s=1}^{\log_2 A} \exp\left( -\frac{T}{2\log_2 A} \min_{\gS: a^*\in \gS, |\gS| =\frac{|\gA|}{2^{s-1}} } \frac{\min_{a\in\gS'_{\ervc}}D^2_{\gS, a}}{ \left( \rho_{\gS,\Sigma} + \lambda_{\gS,\Sigma} \right)^2}\right)\\
    \leq & 6 M \log_2 A \cdot \exp\left( -\frac{T}{2\log_2 A} \min_{\gS: a^*\in \gS} \frac{\min_{a\in\gS'_{\ervc}}D^2_{\gS, a}}{ \left( \rho_{\gS,\Sigma} + \lambda_{\gS,\Sigma} \right)^2}\right).
\end{align*}
Replacing $\rho_{\gS, \Sigma}$ and $\lambda_{\gS, \Sigma}$ with their definitions respectively, we obtain Theorem~\ref{thm:estimate}.

\section{Proof of Theorem~\ref{thm:conf}}\label{sec:proof-conf}
The proof of the confidence-based elimination strategy is similar to the proof of Theorem~\ref{thm:estimate} except that a new upper bound for the probability that some sub-optimal treatment $a$ will have a larger confidence $\delta_s(a)$ than the best treatment needs to be established. So following the proof of Theorem~\ref{thm:estimate}, suppose we fix a stage $s$ and the active treatment set $\gA_s$, we analyze the probability that a treatment $a$ has a larger confidence $\delta_s(a)$ than the best treatment $a^*$ in the following lemma, i.e.,
\begin{lemma}
    Fix a stage $s$ and the active treatment set $\gA_s$, we have:
    \begin{align*}
        \prob\left( \delta_s(a)>\delta_s(a^*) \right) \leq 2 M \exp\left( - \frac{T}{8 \log_2 A \left( \rho_{\gA_s,\Sigma} + \lambda_{\gA_s,\Sigma} \right)^2 } \tilde{D}_{\gA_s, a}^2 \right),
    \end{align*}
    where $(\rho_{\gA_s, \Sigma}, \lambda_{\gA_s,\Sigma})$ measures the relative variance of the active treatment set $\gA_s$, i.e.,
    \begin{align*}
        \rho_{\gA_s,\Sigma}^2 =  \sum_{a\in \mathcal{A}_s} \max_i\rho_{a,i}^2, \quad \lambda_{\gA_s,\Sigma}^2 = \max_{a \in \gA_s} \max_i \lambda^2_{a,i};
    \end{align*}
    and $\kappa^2_{\gA_s, a,i}$ measures the heterogeneity of relative variance of arm $a$ and metric $i$, i.e.,
    \begin{align*}
        \kappa^2_{\gA_s, a,i} = \frac{\rho_{a,i}^2}{\max_i\rho_{a,i}^2} \frac{\rho_{\gA_s, \Sigma} }{\rho_{\gA_s, \Sigma} + \lambda_{\gA_s,\Sigma}} + \frac{\lambda_{a,i}^2}{\max_{a \in \gA_s} \max_i \lambda^2_{a,i} }\frac{\lambda_{\gA_s,\Sigma}}{\rho_{\gA_s, \Sigma} + \lambda_{\gA_s,\Sigma}}.
    \end{align*}
    The instance-dependent complexity $\tilde{D}_{\gA_s, a}^2$ measures the hardness of identifying the best treatment $a^*$ against the sub-optimal treatment $a$, which is defined as: if $\kappa_{\gA_s, a, j} > \kappa_{\gA_s, a^*, i}$,
    \begin{align*}
        \tilde{D}^2_{\gA_s, a, i, j} =\min\left\{ \frac{[z_{a^*,i} - z_{a,j}]_+^2}{ (\kappa_{\gA_s, a, j} + \kappa_{\gA_s, a^*, i})^2 } ,  \frac{[z_{a^*,i} - z_{a,j}]_+^2}{\left( \kappa_{\gA_s, a, j} - \kappa_{\gA_s, a^*, i} \right)^2}+ \Delta_{\min}^2 - \frac{8A\log_2^2 A}{T}\right\},
    \end{align*}
    And if $\kappa_{\gA_s, a, j} \leq \kappa_{\gA_s, a^*, i}$,
    \begin{align*}
        \tilde{D}^2_{\gA_s, a, i, j} = \frac{[z_{a^*,i} - z_{a,j}]_+^2}{ (\kappa_{\gA_s, a,j} + \kappa_{\gA_s, a^*,i})^2 }.
    \end{align*}
    And we define:
    \begin{align*}
        \tilde{D}^2_{\gA_s, a} =& \min_i \tilde{D}^2_{\gA_s, a, i} = \min_{i \in [M]} \max_{j\in[M]} \tilde{D}^2_{\gA_s, a, i, j}.
    \end{align*}
\end{lemma}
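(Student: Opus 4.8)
The plan is to mirror the proof of the lemma underlying Theorem~\ref{thm:estimate}, replacing the single-threshold comparison of empirical $z$ values by a comparison of the crossing levels that define $\delta_s(a)$ and $\delta_s(a^*)$. First I would reparametrize the confidence level by $\beta = \sqrt{\log(|\gA_s| M / \delta)}$, so that the bonus becomes $b_s^\delta(a,i) = c_{a,i}\beta$ with $c_{a,i} = 2\sqrt{\rho_{a,i}^2/N_s(a) + \lambda_{a,i}^2/N_s(0)}$, making $\mathrm{UCB}$ and $\mathrm{LCB}$ affine in $\beta$. Writing $g(\beta) = \max_{a'\in\gA_s}\mathrm{LCB}_s^{\beta}(a')$, which is nonincreasing, and noting each $\mathrm{UCB}_s^{\beta}(\cdot)$ is nondecreasing, the crossing level $\beta_s(a)$ solving $\mathrm{UCB}_s^{\beta}(a) = g(\beta)$ is unique, and $\delta_s(a) > \delta_s(a^*)$ is equivalent to $\beta_s(a) < \beta_s(a^*)$, which is in turn equivalent to the inequality $\mathrm{UCB}_s^{\beta^*}(a) > \mathrm{UCB}_s^{\beta^*}(a^*)$ evaluated at the random level $\beta^* = \beta_s(a^*)$, where moreover $\mathrm{UCB}_s^{\beta^*}(a^*) = \mathrm{LCB}_s^{\beta^*}(a^\dagger)$ for the maximizer $a^\dagger$ of $g$.

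Next I would record the identity tying the bonus scale to the heterogeneity constant: plugging the sampling rule from Lines~3--4 of Algorithm~\ref{alg:shrvarc} into $c_{a,i}$ reproduces exactly the variance computation from the proof of Theorem~\ref{thm:estimate}, giving $c_{a,i}^2 = 4(\rho_{\gA_s,\Sigma}+\lambda_{\gA_s,\Sigma})^2 \frac{\log_2 A}{T}\kappa_{\gA_s,a,i}^2$. This is what converts the eventual Hoeffding exponents into the stated form with denominator $(\rho_{\gA_s,\Sigma}+\lambda_{\gA_s,\Sigma})^2$ and accounts for the factor $8$ (rather than the $2$ of Theorem~\ref{thm:estimate}), since the confidence bonus carries an extra factor of $2$ that squares into the scale.

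The heart of the argument is a coupling with the simple minimum-$z$ rule. I would split on the event $\mathcal{B}_0 = \{\min_i \hat z_s(a,i) \ge \min_j \hat z_s(a^*,j)\}$ on which the simple rule of \texttt{SHRVar} would already misrank $a$ above $a^*$; there the analysis of the lemma in Section~\ref{sec:proof-estimate} applies verbatim (split each $\{\hat z_s(a,i) > \hat z_s(a^*,j)\}$ at an optimized constant $z \le z_{a^*,j}$, apply Hoeffding to both sides, and optimize over $z$), yielding the first branch of $\tilde D^2_{\gA_s,a,i,j}$ with denominator $(\kappa_{\gA_s,a,j}+\kappa_{\gA_s,a^*,i})^2$. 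On the complement the simple concentration holds yet $\beta_s(a) < \beta_s(a^*)$; using $\mathrm{UCB}_s^{\beta^*}(a)>\mathrm{UCB}_s^{\beta^*}(a^*)=\mathrm{LCB}_s^{\beta^*}(a^\dagger)$ one sees this can only occur when the sub-optimal metric has a \emph{wider} interval, $\kappa_{\gA_s,a,j} > \kappa_{\gA_s,a^*,i}$, and it forces the crossing level $\beta^*$ to be large, i.e.\ a large joint deviation of $a^*$ downward and $a^\dagger$ upward. Bounding that deviation produces the second branch $\frac{[z_{a^*,i}-z_{a,j}]_+^2}{(\kappa_{\gA_s,a,j}-\kappa_{\gA_s,a^*,i})^2} + \Delta_{\min}^2 - 8A\log_2^2 A/T$, where $\Delta_{\min}^2$ lower-bounds the true separation between $a^*$ and any competitor $a^\dagger$ and the $-8A\log_2^2 A/T$ correction absorbs the per-stage loss from the floors in the allocation (at most $\log_2 A$ pulls per arm, $A\log_2 A$ in total). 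Taking the minimum of the two branches gives $\tilde D^2_{\gA_s,a,i,j}$.

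Finally I would assemble the pieces exactly as in Theorem~\ref{thm:estimate}: union-bound over the metric attaining $a^*$'s minimum on the $\mathrm{UCB}$ side and over the two deviation directions (the source of the $2M$ prefactor), take $\max_j$ over $a$'s favorable metric and $\min_i$ over $a^*$'s bottleneck metric to obtain $\tilde D^2_{\gA_s,a} = \min_i\max_j \tilde D^2_{\gA_s,a,i,j}$, and substitute the bonus-scale identity to reach the claimed exponent. The main obstacle I anticipate is the complement branch: unlike Theorem~\ref{thm:estimate}, the crossing level $\beta^*$ is pinned by a \emph{third} treatment $a^\dagger$, so the deviation event does not factor cleanly over $\{a,a^*\}$; the coupling is precisely the device that isolates the residual third-arm contribution into the additive $\Delta_{\min}^2 - 8A\log_2^2 A/T$ term and confines it to the regime $\kappa_{\gA_s,a,j} > \kappa_{\gA_s,a^*,i}$, which is why the two effective gaps coincide once $T = \tilde\Omega(A\Delta_{\min}^{-2})$.
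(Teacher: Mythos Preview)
Your scaffolding is right (reparametrize by $\beta=\sqrt{\log(|\gA_s|M/\delta)}$, reduce $\delta_s(a)>\delta_s(a^*)$ to $\mathrm{UCB}^{\beta^*}_s(a)\ge \mathrm{UCB}^{\beta^*}_s(a^*)$ at $\beta^*=\beta_s(a^*)$, identify a third arm whose $\mathrm{LCB}$ pins $\beta^*$, and lower-bound that deviation by $\Delta_{\min}$), but the paper does \emph{not} split on the global random event $\gB_0=\{\min_i\hat z_s(a,i)\ge\min_j\hat z_s(a^*,j)\}$. It first unpacks $\mathrm{UCB}^{\beta^*}_s(a)\ge \mathrm{UCB}^{\beta^*}_s(a^*)$ into $\bigcup_j\bigcap_i\gE_{j,i}$ with $\gE_{j,i}=\{\hat z_s(a,i)+2\kappa_{\gA_s,a,i}\Gamma_s\hat c_s(a^*)\ge \hat z_s(a^*,j)+2\kappa_{\gA_s,a^*,j}\Gamma_s\hat c_s(a^*)\}$ and then case-splits on the \emph{deterministic} sign of $\kappa_{\gA_s,a,i}-\kappa_{\gA_s,a^*,j}$. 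If $\kappa_{\gA_s,a,i}\le\kappa_{\gA_s,a^*,j}$ one has $\gE_{j,i}\subset\{\hat z_s(a,i)\ge\hat z_s(a^*,j)\}$ outright. If $\kappa_{\gA_s,a,i}>\kappa_{\gA_s,a^*,j}$ the event is split as $\gE_{j,i}\subset\{\hat z_s(a,i)-\hat z_s(a^*,j)\ge -\tfrac12(z_{a^*,j}-z_{a,i})\}\cup\{\hat c_s(a^*)\ge \tfrac{z_{a^*,j}-z_{a,i}}{4(\kappa_{\gA_s,a,i}-\kappa_{\gA_s,a^*,j})\Gamma_s}\}$; the first piece is a half-gap Hoeffding event and the second is a \emph{deterministic} lower bound on $\hat c_s(a^*)$, which then forces the third-arm deviation and yields the $\Delta_{\min}^2$ term after a union bound over $a'$.

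Two of your explanations are misattributed, and one step has a gap. The factor $8$ (vs.\ $2$ in Theorem~\ref{thm:estimate}) comes from the half-gap split in the second case above, not from the bonus's prefactor $2$; and the $-8A\log_2^2 A/T$ correction comes from pulling the $A$ prefactor of the union bound over the third arm $a'$ into the exponent, not from the floors in the allocation. The gap is in your $\gB_0^c$ branch: $\gB_0^c$ only says $\min_i\hat z_s(a,i)<\min_j\hat z_s(a^*,j)$, i.e.\ the difference is positive, which does \emph{not} give a quantitative lower bound on $\beta^*$; without the half-gap device (or an equivalent) you cannot deduce $\hat c_s(a^*)\ge \tfrac{z_{a^*,j}-z_{a,i}}{4(\kappa_{\gA_s,a,i}-\kappa_{\gA_s,a^*,j})\Gamma_s}$ and hence cannot reach the second branch $\frac{[z_{a^*,i}-z_{a,j}]_+^2}{(\kappa_{\gA_s,a,j}-\kappa_{\gA_s,a^*,i})^2}+\Delta_{\min}^2-\tfrac{8A\log_2^2 A}{T}$.
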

\begin{proof}
Recall the definition of confidence $\delta_s(a)$, if $\delta_s(a)>\delta_s(a^*)$, it must be the case where
\begin{align*}
   \mathrm{UCB}^{\delta_s(a^*)}(a^*) = \max_{a'\in \gA_s} \mathrm{LCB}_s^{\delta_s(a^*)}(a') \leq \max_{a'\in \gA_s} \mathrm{LCB}_s^{\delta_s(a)}(a') = \mathrm{UCB}^{\delta_s(a)}(a) \leq \mathrm{UCB}^{\delta_s(a^*)}(a),
\end{align*}
where the inequality is because the lower confidence bound $\mathrm{LCB}_s^\delta(a)$ is monotonically non-decreasing concerning the confidence $\delta$. The second inequality is because the upper confidence bound $\mathrm{UCB}_s^\delta(a)$ is monotonically non-increasing with the confidence $\delta$. Then, this will induce the following event:
\begin{align*}
    \{\delta_s(a) > \delta_s(a^*)\} 
    \subset& \left\{ \min_i \left\{ \hat{z}_s(a,i) + b_s^{\delta_s(a^*)}(a,i) \right\} \geq \min_j \left\{ \hat{z}_s(a^*,j) + b_s^{\delta_s(a^*)}(a^*,j) \right\}   \right\}\\
    =& \bigcup_{j\in [M]} \left\{ \min_i \left\{ \hat{z}_s(a,i) + b_s^{\delta_s(a^*)}(a,i) \right\} \geq \hat{z}_s(a^*,j) + b_s^{\delta_s(a^*)}(a^*,j)    \right\}\\
    =& \bigcup_{j\in [M]} \bigcap_{i\in[M]} \underbrace{\left\{ \hat{z}_s(a,i) + b_s^{\delta_s(a^*)}(a,i) \geq \hat{z}_s(a^*,j) + b_s^{\delta_s(a^*)}(a^*,j)    \right\}}_{\gE_{j,i}}.
\end{align*}
Then, it suffices to analyze each event. We first take a deeper look at the confidence bonus with the relative-variance-based sampling strategy as follows:
\begin{align*}
    b_s^{\delta_s(a^*)}(a,i) 
    =& 2\sqrt{\left(\frac{\rho_{a,i}^2}{N_s(a)} + \frac{\lambda_{a,i}^2}{N_s(0)} \right) \log \left( \frac{|\gA_s| M}{\delta} \right)}.
\end{align*}
Specifically, we look at the variance terms as follows:
\begin{align*}
    \frac{\rho_{a,i}^2}{N_s(a)} + \frac{\lambda_{a,i}^2}{N_s(0)}
    =& \frac{\log_2 A}{T} \left( \sqrt{\sum_{a\in \mathcal{A}_s} \max_i\rho_{a,i}^2} + \max_{a \in \gA_s} \max_i \lambda_{a,i} \right)\\
    &\cdot\left(\frac{\rho_{a,i}^2}{\max_i\rho_{a,i}^2}\sqrt{\sum_{a\in \mathcal{A}_s} \max_i\rho_{a,i}^2}  + \frac{\lambda_{a,i}^2}{\max_{a \in \gA_s} \max_i \lambda_{a,i}} \right)\\
    =& \frac{\log_2 A}{T} \left( \rho_{\gA_s, \Sigma} + \lambda_{\gA_s,\Sigma} \right) \left(\frac{\rho_{a,i}^2}{\max_i\rho_{a,i}^2}\rho_{\gA_s, \Sigma}  + \frac{\lambda_{a,i}^2}{\max_{a \in \gA_s} \max_i \lambda^2_{a,i} }\lambda_{\gA_s,\Sigma}  \right)\\
    =& \frac{\log_2 A}{T} \left( \rho_{\gA_s, \Sigma} + \lambda_{\gA_s,\Sigma} \right)^2 \kappa^2_{\gA_s, a, i},
\end{align*}
where we use the definition:
\begin{align*}
    \kappa^2_{\gA_s, a, i} = \frac{\rho_{a,i}^2}{\max_i\rho_{a,i}^2} \frac{\rho_{\gA_s, \Sigma} }{\rho_{\gA_s, \Sigma} + \lambda_{\gA_s,\Sigma}} + \frac{\lambda_{a,i}^2}{\max_{a \in \gA_s} \max_i \lambda^2_{a,i} }\frac{\lambda_{\gA_s,\Sigma}}{\rho_{\gA_s, \Sigma} + \lambda_{\gA_s,\Sigma}}.
\end{align*}
Then, let $\hat{c}_s(a) = \sqrt{\log\left(\frac{|\gA_s| M}{\delta_s(a)}\right)}$ and $\Gamma^2_s = \frac{\log_2 A}{T} \left( \rho_{\gA_s, \Sigma} + \lambda_{\gA_s,\Sigma} \right)^2 $, each event $\gE_{j,i}$ becomes:
\begin{align*}
    \gE_{j,i} = \left\{ \hat{z}_s(a,i) + 2\kappa_{\gA_s, a, i} \Gamma_s \hat{c}_s(a^*) \geq \hat{z}_s(a^*,j) + 2\kappa_{\gA_s, a^*, j} \Gamma_s \hat{c}_s(a^*)    \right\}.
\end{align*}
We divide the metrics $i$ into two groups based on the relation between $\kappa_{\gA_s, a, i}$ and $\kappa_{\gA_s, a^*,j}$.

\textbf{Case 1.} $\kappa_{\gA_s, a, i} \leq \kappa_{\gA_s, a^*, j}$. For the first group, in order for event $\gE_{j,i}$ to hold, we must have:
\begin{align*}
    \gE_{j,i} \subset\left\{\hat{z}_s(a,i) \geq  \hat{z}_s(a^*,j) \right\}.
\end{align*}
By Hoeffding's inequality, we can bound the probability of this event as follows:
\begin{align*}
    \prob\left( \hat{z}_s(a,i) \geq  \hat{z}_s(a^*,j) \right) \leq \exp\left( - \frac{[z_{a^*,j} - z_{a,i}]_+^2}{2\left(\mathrm{Var}(\hat{z}_s(a,i)) + \mathrm{Var}(\hat{z}_s(a^*,j))\right)}  \right).
\end{align*}
The variance of estimate $\hat{z}_s(a^*,j)$ can be analyzed as follows:
\begin{align*}
    &\mathrm{Var}(\hat{z}_s(a^*,j))
    = \frac{\mathrm{Var}(\hat{\mu}_s(a^*,j)) + \mathrm{Var}(\hat{\mu}_s(0,j))}{\sigma_{a^*,j}^2 + \sigma_{0,j}^2}\\
    =& \left( \sqrt{\sum_{a\in \mathcal{A}_s} \max_i\rho_{a,i}^2} + \max_{a \in \gA_s} \max_i \lambda_{a,i} \right) \frac{\log_2 A}{T}
    \left( \frac{\rho_{a^*,j}^2}{\max_i\rho_{a^*,i}^2} \sqrt{\sum_{a\in \mathcal{A}_s} \max_i\rho_{a,i}^2} + \frac{\lambda_{a^*,j}^2}{\max_{a \in \gA_s} \max_i \lambda_{a,i}} \right)\\
    =& \left( \rho_{\gA_s,\Sigma} + \lambda_{\gA_s,\Sigma} \right) \frac{\log_2 A}{T}\left( \frac{\rho_{a^*,j}^2}{\max_i\rho_{a^*,i}^2} \rho_{\gA_s,\Sigma} + \frac{\lambda_{a^*,j}^2}{\max_{a \in \gA_s} \max_i \lambda^2_{a,i}} \lambda_{\gA_s,\Sigma}\right)\\
    = & \left( \rho_{\gA_s,\Sigma} + \lambda_{\gA_s,\Sigma} \right)^2 \frac{\log_2 A}{T} \kappa^2_{\gA_s, a^*,j}.
\end{align*}
Similarly, the variance of estimate $\hat{z}_s(a,i)$ can be analyzed as follows:
\begin{align*}
    \mathrm{Var}(\hat{z}_s(a,i))
    = \left( \rho_{\gA_s,\Sigma} + \lambda_{\gA_s,\Sigma} \right)^2 \frac{\log_2 A}{T} \kappa^2_{\gA_s, a,i}.
\end{align*}
Therefore, we have the following bound:
\begin{align*}
    \prob\left( \gE_{j,i} \right) \leq &\prob\left( \hat{z}_s(a,i) \geq  \hat{z}_s(a^*,j) \right)\\
    \leq& \exp\left( - \frac{T}{2\log_2 A \left( \rho_{\gA_s,\Sigma} + \lambda_{\gA_s,\Sigma} \right)^2} \frac{[z_{a^*,j} - z_{a,i}]_+^2}{ \kappa^2_{\gA_s, a,i} + \kappa^2_{\gA_s, a^*,j} }  \right)\\
    \leq & \exp\left( - \frac{T}{2\log_2 A \left( \rho_{\gA_s,\Sigma} + \lambda_{\gA_s,\Sigma} \right)^2} \frac{[z_{a^*,j} - z_{a,i}]_+^2}{ (\kappa_{\gA_s, a,i} + \kappa_{\gA_s, a^*,j})^2 }  \right)\\
    =& \exp\left( - \frac{T }{2\log_2 A \left( \rho_{\gA_s,\Sigma} + \lambda_{\gA_s,\Sigma} \right)^2} D^2_{\gA_s, a, j, i}  \right).
\end{align*}

\textbf{Case 2.} $\kappa_{\gA_s, a, i}  > \kappa_{\gA_s, a^*, j}$. For the second group, if $z_{a^*,j} - z_{a,i}<0$, we can simply bound the probability of this event by $1$. Assume $z_{a^*, j} > z_{a,i}$, and we can bound the event $\gE_{j,i}$ as:
\begin{align*}
    \gE_{j,i} \subset \underbrace{\left\{ \hat{z}_s(a,i) -\hat{z}_s(a^*,j) \geq - \frac{z_{a^*,j} - z_{a,i}}{2} \right\}}_{\gE_{j,i,1}} \cup \underbrace{\left\{ \hat{c}_s(a^*) \geq \frac{z_{a^*,j} - z_{a,i}}{4\left( \kappa_{\gA_s, a, i} - \kappa_{\gA_s, a^*, j} \right) \Gamma_s} \right\}}_{\gE_{j,i,2}}.
\end{align*}
The probability of the first event $\gE_{j,i,1}$ can be bounded with Hoeffding's inequality, similar to Case 1, as follows:
\begin{align*}
    \prob\left( \hat{z}_s(a,i) -\hat{z}_s(a^*,j) \geq - \frac{z_{a^*,j} - z_{a,i}}{2} \right)
    \leq& \exp\left( - \frac{\left(\frac{z_{a^*,j} - z_{a,i}}{2}\right)^2}{2\left(\mathrm{Var}(\hat{z}_s(a,i)) + \mathrm{Var}(\hat{z}_s(a^*,j))\right)}  \right)\\
    \leq & \exp\left( - \frac{T}{8\log_2 A \left( \rho_{\gA_s,\Sigma} + \lambda_{\gA_s,\Sigma} \right)^2} \frac{[z_{a^*,j} - z_{a,i}]_+^2}{ (\kappa_{\gA_s, a,i} + \kappa_{\gA_s, a^*,j})^2 }  \right)\\
    =& \exp\left( - \frac{T}{8\log_2 A \left( \rho_{\gA_s,\Sigma} + \lambda_{\gA_s,\Sigma} \right)^2} D_{\gA_s, a, j,i}^2  \right) .
\end{align*}
The second event $\gE_{j,i,2}$ induces a lower bound on the confidence to eliminate the best treatment $a^*$. Therefore, there must exist another arm $a'$, whose lower confidence bound is larger than the upper confidence bound of $a^*$ when the confidence bonus $b_s^{\delta}(a, i)$ is constructed with $\delta = \delta_s(a^*)$, i.e., under $\gE_{j,i,2}$, we have:
\begin{align*}
    \left\{ \exists a'\in\gA_s, \min_i \left\{ \hat{z}_s(a',i) - b_s^{\delta_s(a^*)}(a',i) \right\} \geq \min_{j'} \left\{ \hat{z}_s(a^*,j') + b_s^{\delta_s(a^*)}(a^*,j') \right\}  \right\}.
\end{align*}
Notice that in event $\gE_{j,i,2}$, reward metric $j$ is the metric which achieves the minimum in the right-hand side, so we can further rewrite this event as:
\begin{align*}
    \left\{ \exists a'\in\gA_s, \forall i\in [M], \hat{z}_s(a',i) - b_s^{\delta_s(a^*)}(a',i) \geq \hat{z}_s(a^*,j) + b_s^{\delta_s(a^*)}(a^*,j)  \right\}.
\end{align*}
Similarly, we start from the event for a fixed $a'$, a fixed metric $i'$ and a fixed $j$, i.e.,
\begin{align*}
    \gE_{j,i'}' = &\left\{  \hat{z}_s(a',i') - b_s^{\delta_s(a^*)}(a',i') \geq \hat{z}_s(a^*,j) + b_s^{\delta_s(a^*)}(a^*,j)  \right\}\\
    =&  \left\{ \hat{z}_s(a',i') - 2\kappa_{\gA_s, a', i'} \Gamma_s \hat{c}_s(a^*) \geq \hat{z}_s(a^*,j) + 2\kappa_{\gA_s, a^*, j} \Gamma_s \hat{c}_s(a^*) \right\} \\
    =& \left\{ \hat{z}_s(a',i') - \hat{z}_s(a^*,j) \geq 2\left(\kappa_{\gA_s, a', i'} + \kappa_{\gA_s, a^*, j}\right) \Gamma_s \hat{c}_s(a^*) \right\}.
\end{align*}
Then, under event $\gE_{j,i,2}$, we can lower bound the additional confidence based bonus term $\hat{c}_s(a^*)$ and obtain:
\begin{align*}
    \gE_{j,i'}' \subset \left\{ \hat{z}_s(a',i') - \hat{z}_s(a^*,j) \geq \frac{\kappa_{\gA_s, a', i'} + \kappa_{\gA_s, a^*, j}}{2\left(\kappa_{\gA_s, a, i} - \kappa_{\gA_s, a^*, j}\right)} (z_{a^*,j} - z_{a,i}) \right\}.
\end{align*}
From Hoeffding's inequality, the probability of this event can be bounded by:
\begin{align*}
    \prob\left( \gE_{j,i}' \right) \leq \exp\left( - \frac{\left(\frac{\kappa_{\gA_s, a', i'} + \kappa_{\gA_s, a^*, j}}{2\left(\kappa_{\gA_s, a, i} - \kappa_{\gA_s, a^*, j}\right)} (z_{a^*,j} - z_{a,i}) + [z_{a^*, j } - z_{a',i'}]_+ \right)^2}{2 \left(\mathrm{Var}(\hat{z}_s(a',i')) + \mathrm{Var}(\hat{z}_s(a^*,j))\right)} \right),
\end{align*}
where, following similar arguments from previous analysis, we can derive:
\begin{align*}
    \mathrm{Var}(\hat{z}_s(a',i')) + \mathrm{Var}(\hat{z}_s(a^*,j)) =& \left( \rho_{\gA_s,\Sigma} + \lambda_{\gA_s,\Sigma} \right)^2 \frac{\log_2 A}{T} \left(\kappa^2_{\gA_s, a^*,j} + \kappa^2_{\gA_s, a',i'} \right)\\
    \leq &\left( \rho_{\gA_s,\Sigma} + \lambda_{\gA_s,\Sigma} \right)^2 \frac{\log_2 A}{T} \left(\kappa_{\gA_s, a^*,j} + \kappa_{\gA_s, a',i'} \right)^2.
\end{align*}
So substitute this result in, we have:
\begin{align*}
    \prob\left( \gE_{j,i}' \right)
    \leq& \exp\left( - \frac{T}{8 \log_2 A \left( \rho_{\gA_s,\Sigma} + \lambda_{\gA_s,\Sigma} \right)^2 } \left(\frac{(z_{a^*,j} - z_{a,i})^2}{\left( \kappa_{\gA_s, a, i} - \kappa_{\gA_s, a^*, j} \right)^2} + \frac{[z_{a^*, j } - z_{a',i'}]_+^2}{\left(\kappa_{\gA_s, a^*,j} + \kappa_{\gA_s, a',i'} \right)^2}\right) \right)\\
    =& \exp\left( - \frac{T}{8 \log_2 A \left( \rho_{\gA_s,\Sigma} + \lambda_{\gA_s,\Sigma} \right)^2 } \left(\frac{(z_{a^*,j} - z_{a,i})^2}{\left( \kappa_{\gA_s, a, i} - \kappa_{\gA_s, a^*, j} \right)^2} + D_{\gA_s, a', j, i'}^2\right) \right).
\end{align*}
Finally, we use a union bound over $a'$ and choose the best $i'$ to minimize the probability upper bound as follows:
\begin{align*}
    \prob\left( \gE_{j,i,2} \right)
    \leq& \sum_{a' \neq a^*} \exp\left( - \frac{T}{8 \log_2 A \left( \rho_{\gA_s,\Sigma} + \lambda_{\gA_s,\Sigma} \right)^2 } \left(\frac{(z_{a^*,j} - z_{a,i})^2}{\left( \kappa_{\gA_s, a, i} - \kappa_{\gA_s, a^*, j} \right)^2} + \max_{i'} D_{\gA_s, a', j, i'}^2\right) \right)\\
    \leq & A \exp\left( - \frac{T}{8 \log_2 A \left( \rho_{\gA_s,\Sigma} + \lambda_{\gA_s,\Sigma} \right)^2 } \left(\frac{(z_{a^*,j} - z_{a,i})^2}{\left( \kappa_{\gA_s, a, i} - \kappa_{\gA_s, a^*, j} \right)^2} + \min_{a'\neq a^*} \max_{i'}D_{\gA_s, a', j, i'}^2\right) \right).
\end{align*}
Notice that the second term:
\begin{align*}
    \min_{a'\neq a^*} \max_{i'}D_{\gA_s, a', j, i'}^2
    \geq & \min_{a\neq a^*}\max_{i}[z_{a^*, j } - z_{a,i}]_+^2 \geq \min_{a\neq a^*}(z_{a^*, j } - z_{a,i^*})^2 \geq \underbrace{\min_{a\neq a^*}(z_{a^*, i^* } - z_{a,i^*})^2}_{\Delta_{\min}^2}.
\end{align*}
So the bound can be rewritten as follows:
\begin{align*}
    \prob\left( \gE_{j,i,2} \right)
    \leq& \exp\left( - \frac{T}{8 \log_2 A \left( \rho_{\gA_s,\Sigma} + \lambda_{\gA_s,\Sigma} \right)^2 } \left(\frac{(z_{a^*,j} - z_{a,i})^2}{\left( \kappa_{\gA_s, a, i} - \kappa_{\gA_s, a^*, j} \right)^2} + \Delta_{\min}^2 - \frac{8A\log_2^2 A}{T} \right) \right).
\end{align*}
Therefore, we can use a union bound to derive the probability of event $\gE_{j,i}$ as follows:
\begin{align*}
    \prob\left( \gE_{j,i} \right)
    \leq& \exp\left( - \frac{T}{8\log_2 A \left( \rho_{\gA_s,\Sigma} + \lambda_{\gA_s,\Sigma} \right)^2} \frac{[z_{a^*,j} - z_{a,i}]_+^2}{ (\kappa_{\gA_s, a,i} + \kappa_{\gA_s, a^*,j})^2 }  \right)\\
    &+ \exp\left( - \frac{T}{8 \log_2 A \left( \rho_{\gA_s,\Sigma} + \lambda_{\gA_s,\Sigma} \right)^2 } \left(\frac{[z_{a^*,j} - z_{a,i}]_+^2}{\left( \kappa_{\gA_s, a, i} - \kappa_{\gA_s, a^*, j} \right)^2}+ \Delta_{\min}^2 - \frac{8A\log_2^2 A}{T} \right) \right)\\
    \leq & 2\exp\left( - \frac{T}{8 \log_2 A \left( \rho_{\gA_s,\Sigma} + \lambda_{\gA_s,\Sigma} \right)^2 } \tilde{D}_{\gA_s, a, j,i}^2 \right),
\end{align*}
where we recall the definition that for $i$ where $\kappa_{\gA_s, a, i} > \kappa_{\gA_s, a^*, j}$:
\begin{align*}
    \tilde{D}_{\gA_s, a, j,i}^2 = \min\left\{ \frac{[z_{a^*,j} - z_{a,i}]_+^2}{ (\kappa_{\gA_s, a,i} + \kappa_{\gA_s, a^*,j})^2 } ,  \frac{[z_{a^*,j} - z_{a,i}]_+^2}{\left( \kappa_{\gA_s, a, i} - \kappa_{\gA_s, a^*, j} \right)^2}+ \Delta_{\min}^2 - \frac{8A\log_2^2 A}{T}\right\}.
\end{align*}
And for $i$ where $\kappa_{\gA_s, a, i} \leq \kappa_{\gA_s, a^*, j}$:
\begin{align*}
    \tilde{D}_{\gA_s, a, j,i}^2 = \frac{[z_{a^*,j} - z_{a,i}]_+^2}{ (\kappa_{\gA_s, a,i} + \kappa_{\gA_s, a^*,j})^2 }.
\end{align*}
So, with both cases analyzed, the intersection of events can be bounded with the event that minimizes the probability upper bound, i.e.,
\begin{align*}
    \prob\left( \bigcap_{i\in[M]} \gE_{j,i} \right)
    \leq& \min_{i\in [M]} \prob\left( \gE_{j,i} \right)\\
    = & \min\left\{ \min_{i:\kappa_{\gA_s, a, i} \leq \kappa_{\gA_s, a^*, j}} \prob\left( \gE_{j,i} \right), \min_{i:\kappa_{\gA_s, a, i} > \kappa_{\gA_s, a^*, j}} \prob\left( \gE_{j,i} \right) \right\}\\
    \leq & 2\exp\left( - \frac{T}{8 \log_2 A \left( \rho_{\gA_s,\Sigma} + \lambda_{\gA_s,\Sigma} \right)^2 } \max_{i\in[M] } \tilde{D}_{\gA_s, a, j,i}^2 \right).
\end{align*}
Then, the probability that a sub-optimal arm has better $\delta_s(a)$ than the optimal arm can be bounded with a union bound as follows:
\begin{align*}
    \prob\left( \delta_s(a)>\delta_s(a^*) \right) \leq& \sum_{j\in [M]} \prob\left( \bigcap_{i\in[M]} \gE_{j,i} \right)\\
    \leq & 2 \sum_{j\in [M]} \exp\left( - \frac{T}{8 \log_2 A \left( \rho_{\gA_s,\Sigma} + \lambda_{\gA_s,\Sigma} \right)^2 } \max_{i\in[M] } \tilde{D}_{\gA_s, a, j,i}^2 \right)\\
    \leq & 2 M \exp\left( - \frac{T}{8 \log_2 A \left( \rho_{\gA_s,\Sigma} + \lambda_{\gA_s,\Sigma} \right)^2 } \min_{j\in[M]}\max_{i\in[M] } \tilde{D}_{\gA_s, a, j,i}^2 \right)\\
    =& 2 M \exp\left( - \frac{T}{8 \log_2 A \left( \rho_{\gA_s,\Sigma} + \lambda_{\gA_s,\Sigma} \right)^2 } \tilde{D}_{\gA_s, a}^2 \right)
\end{align*}
This concludes the proof.
\end{proof}

Then, we bound the probability that the algorithm makes a mistake and excludes the best treatment on stage $s$, summarized in the following lemma. 

\begin{lemma}
    Fix a stage $s$ and suppose the best treatment $a^*$ has not been eliminated in previous stages, the probability that it is eliminated at this stage is bounded as follows:
    \begin{align*}
        \prob\left(a^* \notin \gA_{s+1}\right)  \leq 6 M \exp\left( -\frac{T}{2\log_2 A} \min_{\gS \subset \gA: a^*\in \gS, |\gS| =\frac{|\gA|}{2^{s-1}} } \frac{\min_{a\in\tilde{\gS}'_{\ervc}}\tilde{D}^2_{\gS, a}}{ \left( \rho_{\gS,\Sigma} + \lambda_{\gS,\Sigma} \right)^2}\right),
    \end{align*}
    where $\gS$ is an arbitrary subset of all treatments that contains the best arm with a fixed cardinality, and $\tilde{\gS}'_{\ervc}$ is the subset of $\gS$ defined as follows:
    \begin{itemize}
        \item When $|\gS|\geq 4$, $\tilde{\gS}'_{\ervc}$ is the subset excluding at most $\frac{1}{4}|\gS|$ treatments with minimum $\tilde{D}_{\gS, a}$.
        \item When $|\gS|\leq 3$, $\tilde{\gS}'_{\ervc}$ is the subset of $\gS$ containing all sub-optimal treatments.
    \end{itemize}
\end{lemma}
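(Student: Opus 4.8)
The plan is to mirror the stage-elimination lemma from the proof of Theorem~\ref{thm:estimate}, substituting the event $\{\min_i \hat z_s(a,i) > \min_j \hat z_s(a^*,j)\}$ by the confidence-ordering event $\{\delta_s(a) > \delta_s(a^*)\}$, whose probability has just been bounded in the preceding lemma. I would first condition on a \emph{fixed} deterministic active set $\gA_s$ with $a^*\in\gA_s$, and handle the generic case $|\gA_s|\ge 4$; the boundary case $|\gA_s|\le 3$ is treated separately at the end, where $\tilde\gS'_{\ervc}$ is taken to be the set of all sub-optimal treatments so that no $1/4$-exclusion is needed.

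The combinatorial core is unchanged from Theorem~\ref{thm:estimate}. Because the retained set $\gA_{s+1}$ keeps exactly the treatments that are hardest to eliminate, the best treatment is dropped only if at least $|\gA_s|/2$ of the treatments $a\in\gA_s$ satisfy $\delta_s(a) > \delta_s(a^*)$ — precisely the event bounded in the preceding lemma. Defining $\gA_s'$ by discarding the $\tfrac14|\gA_s|$ treatments with the smallest $\tilde D_{\gA_s,a}$ (the hardest to separate from $a^*$), removing at most $\tfrac14|\gA_s|$ of the beating treatments still leaves at least $|\gA_s|/4$ of them inside $\gA_s'$; since $|\gA_s'|\le\tfrac34|\gA_s|$, this forces $\sum_{a\in\gA_s'}\mathbbm{1}_{\{\delta_s(a)>\delta_s(a^*)\}} \ge |\gA_s'|/3$.

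Given this reduction, I would bound $\E\big[\sum_{a\in\gA_s'}\mathbbm{1}_{\{\delta_s(a)>\delta_s(a^*)\}}\big]$ by summing the per-treatment bound of the preceding lemma over $a\in\gA_s'$, replacing each exponent by its minimum over $a\in\gA_s'$, and then apply Markov's inequality to the threshold event $\{\sum_{a\in\gA_s'}\mathbbm{1}_{\{\delta_s(a)>\delta_s(a^*)\}} \ge |\gA_s'|/3\}$. This produces a bound of the form $6M\exp\!\big(-c\,T(\log_2 A)^{-1}\min_{a\in\gA_s'}\tilde D_{\gA_s,a}^2/(\rho_{\gA_s,\Sigma}+\lambda_{\gA_s,\Sigma})^2\big)$, where the prefactor $6M$ is the product of the $3$ from Markov and the $2M$ from the per-treatment lemma, and the constant $c$ in the exponent is inherited directly from that lemma.

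Finally, this estimate is conditional on a fixed $\gA_s$, whereas the genuine active set at stage $s$ is random, being the output of all prior halving rounds. Since the right-hand side takes an identical form for \emph{every} admissible fixed set — any $\gS\subset\gA$ with $a^*\in\gS$ and $|\gS|=|\gA|/2^{s-1}$ — I would upper bound by the worst such set, i.e. take the minimum over $\gS$ inside the exponent (equivalently, a maximum of the per-set probability bounds), which yields the stated lemma. The main obstacle I anticipate is not the Markov or concentration step, since the per-treatment bound is already in hand, but the careful bookkeeping of the counting argument (verifying that the $\tfrac14$-exclusion defining $\tilde\gS'_{\ervc}$ still guarantees a surviving $\tfrac13$-fraction of beating treatments) together with the justification of the fixed-to-stochastic passage through a uniform worst-case bound over all candidate subsets of the prescribed cardinality.
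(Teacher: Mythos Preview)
Your proposal is correct and follows essentially the same approach as the paper's own proof: condition on a fixed active set $\gA_s\ni a^*$, use the $\tfrac14$-exclusion to pass from $\tfrac12|\gA_s|$ beating treatments to at least $\tfrac13|\gA_s'|$ beating treatments inside $\gA_s'$, bound the expected count via the per-treatment lemma, apply Markov, and then remove the conditioning by taking the worst admissible subset $\gS$ of the prescribed cardinality. The paper carries out exactly these steps in the same order, so there is nothing to add.
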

\begin{proof}
Fix an arbitrary $\gA_s$ which contains the best treatment $a^*$. Suppose it has at least $4$ treatments. Let $\gA_s'$ as defined be the set of treatments in $\gA_s$ excluding at most $\frac{1}{4}|\gA_s|$ treatments with minimum $\tilde{D}_{\gA_s, a}$. If the best treatment is eliminated on round $s$, it must be the case that at least half of the treatments in $\gA_s$ have a smaller confidence of mistake $\delta_s(a)$ larger than the best treatment. In particular, at least $\frac{1}{3}$ of the treatments in $\gA_s'$ will have a smaller confidence. So we have:
\begin{align*}
    \E\left[\sum_{a\in \gA_s'} \mathbbm{1}_{ \delta_s(a)>\delta_s(a^*)} \right] 
    = &\sum_{a\in \gA_s'} \prob\left(  \delta_s(a)>\delta_s(a^*) \right)\\
    \leq & 2 M \sum_{a\in \gA_s'} \exp\left( -\frac{T}{8\log_2 A} \frac{\tilde{D}^2_{\gA_s, a}}{ \left( \rho_{\gA_s,\Sigma} + \lambda_{\gA_s,\Sigma} \right)^2}\right)\\
    \leq & 2 M|\gA_s'| \exp\left( -\frac{T}{8\log_2 A} \frac{\min_{a\in\gA_s'}\tilde{D}^2_{\gA_s, a}}{ \left( \rho_{\gA_s,\Sigma} + \lambda_{\gA_s,\Sigma} \right)^2}\right).
\end{align*}
Therefore, by Markov's inequality, we will be able to establish an upper bound for the event that the best treatment is eliminated:
\begin{align*}
    \prob\left( a^* \notin \gA_{s+1} \right) 
    =& \prob\left( \sum_{a\in \gA_s'} \mathbbm{1}_{ \delta_s(a)>\delta_s(a^*)} \geq \frac{|\gA_s'|}{3} \right)
    \leq \frac{3}{|\gA_s'|} \E\left[\sum_{a\in \gA_s'} \mathbbm{1}_{ \delta_s(a)>\delta_s(a^*)} \right]\\
    \leq & 6 M \exp\left( -\frac{T}{8\log_2 A} \frac{\min_{a\in\gA_s'}\tilde{D}^2_{\gA_s, a}}{ \left( \rho_{\gA_s,\Sigma} + \lambda_{\gA_s,\Sigma} \right)^2}\right).
\end{align*}
However, this bound is valid for a fixed $\gA_s$ which contains the best treatment, and is not valid if $\gA_s$ is a stochastic set. Therefore, we take a maximum over the probability upper bound shown above to make it compatible with stochastic sets as follows:
\begin{align*}
    \prob\left( a^* \notin \gA_{s+1} \right)  \leq 6 M \exp\left( -\frac{T}{8\log_2 A} \min_{\gS: a^*\in \gS, |\gS| =\frac{|\gA|}{2^{s-1}} } \frac{\min_{a\in\tilde{\gS}'_{\ervc}}\tilde{D}^2_{\gS, a}}{ \left( \rho_{\gS,\Sigma} + \lambda_{\gS,\Sigma} \right)^2}\right),
\end{align*}
where $\tilde{\gS}_{\ervc}'$ is defined in the statement of lemma.
\end{proof}
Finally, to prove Theorem~\ref{thm:conf}, the best treatment $a^*$ will only be recommended at the end of the exploration phase if it survives all $\log_2 A$ stages. With a union bound, we have:
\begin{align*}
    \prob\left( \hat{a}\neq a^* \right) 
    \leq & \sum_{s=1}^{\log_2 A} \prob\left( a^* \text{ is eliminated at stage $s$} \right)\\
    \leq & 6 M \sum_{s=1}^{\log_2 A} \exp\left( -\frac{T}{8\log_2 A} \min_{\gS: a^*\in \gS, |\gS| =\frac{|\gA|}{2^{s-1}} } \frac{\min_{a\in\tilde{\gS}'_{\ervc}}\tilde{D}^2_{\gS, a}}{ \left( \rho_{\gS,\Sigma} + \lambda_{\gS,\Sigma} \right)^2}\right)\\
    \leq & 6 M \log_2 A \cdot \exp\left( -\frac{T}{8\log_2 A} \min_{\gS: a^*\in \gS} \frac{\min_{a\in\tilde{\gS}'_{\ervc}}\tilde{D}^2_{\gS, a}}{ \left( \rho_{\gS,\Sigma} + \lambda_{\gS,\Sigma} \right)^2}\right).
\end{align*}
Replacing $\rho_{\gS, \Sigma}$ and $\lambda_{\gS, \Sigma}$ with their definitions respectively, we obtain Theorem~\ref{thm:conf}.

\end{document}